%% (Master) Thesis template
% Template version used: v1.4
%
% Largely adapted from Adrian Nievergelt's template for the ADPS
% (lecture notes) project.

%% We use the memoir class because it offers a many easy to use features.
\documentclass[11pt,a4paper,titlepage]{memoir}

%% Packages
%% ========

%% LaTeX Font encoding -- DO NOT CHANGE
\usepackage[OT1]{fontenc}

%% Babel provides support for languages.  'english' uses British
%% English hyphenation and text snippets like "Figure" and
%% "Theorem". Use the option 'ngerman' if your document is in German.
%% Use 'american' for American English.  Note that if you change this,
%% the next LaTeX run may show spurious errors.  Simply run it again.
%% If they persist, remove the .aux file and try again.
\usepackage[english]{babel}

%% Input encoding 'utf8'. In some cases you might need 'utf8x' for
%% extra symbols. Not all editors, especially on Windows, are UTF-8
%% capable, so you may want to use 'latin1' instead.
\usepackage[utf8]{inputenc}

%% This changes default fonts for both text and math mode to use Herman Zapfs
%% excellent Palatino font.  Do not change this.
\usepackage[sc]{mathpazo}

%% The AMS-LaTeX extensions for mathematical typesetting.  Do not
%% remove.
\usepackage{amsmath,amssymb,amsfonts,mathrsfs}

%% NTheorem is a reimplementation of the AMS Theorem package. This
%% will allow us to typeset theorems like examples, proofs and
%% similar.  Do not remove.
%% NOTE: Must be loaded AFTER amsmath, or the \qed placement will
%% break
\usepackage[amsmath,thmmarks]{ntheorem}

%% LaTeX' own graphics handling
\usepackage{graphicx}

%% We unfortunately need this for the Rules chapter.  Remove it
%% afterwards; or at least NEVER use its underlining features.
\usepackage{soul}

%% This allows you to add .pdf files. It is used to add the
%% declaration of originality.
\usepackage{pdfpages}

%% Some more packages that you may want to use.  Have a look at the
%% file, and consult the package docs for each.
%% See the TeXed file for more explanations

%% [OPT] Multi-rowed cells in tabulars
%\usepackage{multirow}

%% [REC] Intelligent cross reference package. This allows for nice
%% combined references that include the reference and a hint to where
%% to look for it.
\usepackage{varioref}

%% [OPT] Easily changeable quotes with \enquote{Text}
%\usepackage[german=swiss]{csquotes}

%% [REC] Format dates and time depending on locale
\usepackage{datetime}

%% [OPT] Provides a \cancel{} command to stroke through mathematics.
%\usepackage{cancel}

%% [NEED] This allows for additional typesetting tools in mathmode.
%% See its excellent documentation.
\usepackage{mathtools}

%% [ADV] Conditional commands
%\usepackage{ifthen}

%% [OPT] Manual large braces or other delimiters.
%\usepackage{bigdelim, bigstrut}

%% [REC] Alternate vector arrows. Use the command \vv{} to get scaled
%% vector arrows.
\usepackage[h]{esvect}

%% [NEED] Some extensions to tabulars and array environments.
\usepackage{array}

%% [OPT] Postscript support via pstricks graphics package. Very
%% diverse applications.
%\usepackage{pstricks,pst-all}

%% [?] This seems to allow us to define some additional counters.
%\usepackage{etex}

%% [ADV] XY-Pic to typeset some matrix-style graphics
%\usepackage[all]{xy}

%% [OPT] This is needed to generate an index at the end of the
%% document.
%\usepackage{makeidx}

%% [OPT] Fancy package for source code listings.  The template text
%% needs it for some LaTeX snippets; remove/adapt the \lstset when you
%% remove the template content.
\usepackage{listings}
\lstset{language=TeX,basicstyle={\normalfont\ttfamily}}

%% [REC] Fancy character protrusion.  Must be loaded after all fonts.
\usepackage{microtype}

%% [REC] Nicer tables.  Read the excellent documentation.
\usepackage{booktabs}

\usepackage{algorithm2e}

% \usepackage{thmtools}

%% Our layout configuration.  DO NOT CHANGE.
%% Memoir layout setup

%% NOTE: You are strongly advised not to change any of them unless you
%% know what you are doing.  These settings strongly interact in the
%% final look of the document.

% Dependencies
\usepackage{ETHlogo}

% Turn extra space before chapter headings off.
\setlength{\beforechapskip}{0pt}

\nonzeroparskip
\parindent=0pt
\defaultlists

% Chapter style redefinition
\makeatletter

\if@twoside
  \pagestyle{Ruled}
  \copypagestyle{chapter}{Ruled}
\else
  \pagestyle{ruled}
  \copypagestyle{chapter}{ruled}
\fi
\makeoddhead{chapter}{}{}{}
\makeevenhead{chapter}{}{}{}
\makeheadrule{chapter}{\textwidth}{0pt}
\copypagestyle{abstract}{empty}

\makechapterstyle{bianchimod}{%
  \chapterstyle{default}
  \renewcommand*{\chapnamefont}{\normalfont\Large\sffamily}
  
  \renewcommand*{\printchaptername}{%
    \chapnamefont\centering\@chapapp}

  }

% Use the newly defined style
\chapterstyle{bianchimod}

\setsecheadstyle{\Large\bfseries\sffamily}
\setsubsecheadstyle{\large\bfseries\sffamily}
\setsubsubsecheadstyle{\bfseries\sffamily}
\setparaheadstyle{\normalsize\bfseries\sffamily}
\setsubparaheadstyle{\normalsize\itshape\sffamily}
\setsubparaindent{0pt}

% Set captions to a more separated style for clearness
\captionnamefont{\sffamily\bfseries\footnotesize}
\captiontitlefont{\sffamily\footnotesize}
\setlength{\intextsep}{16pt}
\setlength{\belowcaptionskip}{1pt}

% Set section and TOC numbering depth to subsection
\setsecnumdepth{subsection}
\settocdepth{subsection}

%% Titlepage adjustments
\pretitle{\vspace{0pt plus 0.7fill}\begin{center}\HUGE\sffamily\bfseries}
\posttitle{\end{center}\par}
\preauthor{\par\begin{center}\let\and\\\Large\sffamily}
\postauthor{\end{center}}
\predate{\par\begin{center}\Large\sffamily}
\postdate{\end{center}}

\def\@advisors{}
\newcommand{\advisors}[1]{\def\@advisors{#1}}
\def\@department{}
\newcommand{\department}[1]{\def\@department{#1}}
\def\@thesistype{}
\newcommand{\thesistype}[1]{\def\@thesistype{#1}}

\renewcommand{\maketitlehookb}{\vspace{1in}%
  \par\begin{center}\Large\sffamily\@thesistype\end{center}}

\renewcommand{\maketitlehookd}{%
  \vfill\par
  \begin{flushright}
    \sffamily
    \@advisors\par
    \@department, ETH Z\"urich
  \end{flushright}
}

\checkandfixthelayout

\setlength{\droptitle}{-48pt}

\makeatother

% This defines how theorems should look. Best leave as is.
\theoremstyle{plain}
\setlength\theorempostskipamount{0pt}

%%% Local Variables:
%%% mode: latex
%%% TeX-master: "thesis"
%%% End:

%% Theorem environments.  You will have to adapt this for a German
%% thesis.
%% Theorem-like environments

%% This can be changed according to language. You can comment out the ones you
%% don't need.

\numberwithin{equation}{chapter}

%% English variants
\newtheorem{theorem}{Theorem}[chapter]

\newtheorem{corollary}[theorem]{Corollary}

\newtheorem{lemma}[theorem]{Lemma}

\newtheorem{assumption}{Assumption}

\makeatletter

\makeatother

% Used to define "clone" assumptions that are slight modifications of the original,
% e.g. Assumption 2' paired to Assumption 2, with correct referencing?

%% Proof environment with a small square as a "qed" symbol
\theoremstyle{nonumberplain}
\theorembodyfont{\normalfont}
\theoremsymbol{\ensuremath{\square}}
\newtheorem{proof}{Proof}
%\newtheorem{beweis}{Beweis}

%% Helpful macros.
%% Custom commands
%% ===============

%% Special characters for number sets, e.g. real or complex numbers.
% \newcommand{\C}{\mathbb{C}}
% \newcommand{\K}{\mathbb{K}}
\newcommand{\N}{\mathbb{N}}
\newcommand{\R}{\mathbb{R}}
% \newcommand{\Z}{\mathbb{Z}}
% \newcommand{\X}{\mathbb{X}}

% Probability and expectation
\newcommand{\PP}{\mathbb{P}}
\newcommand{\E}{\mathbb{E}}

%% Fixed/scaling delimiter examples (see mathtools documentation)

\DeclarePairedDelimiter\norm{\lVert}{\rVert}

%% Use the alternative epsilon per default and define the old one as \oldepsilon

\renewcommand{\epsilon}{\ensuremath\varepsilon}

%% Also set the alternate phi as default.

\renewcommand{\phi}{\ensuremath{\varphi}}

\newcommand{\dom}{\mathcal{B}}
\newcommand{\conv}{\mathcal{C}}

\DeclareMathOperator*{\argmin}{arg\,min}
\DeclareMathOperator*{\argmax}{arg\,max}

\newcommand{\pzero}{\phi_{\theta_0}}
\newcommand{\pstar}{\phi_{\theta^*}}
\newcommand{\wzero}{\mathbf{w}_0}
\newcommand{\w}{\mathbf{w}}
\newcommand{\gv}{\mathbf{v}}
\newcommand{\wstar}{\mathbf{w}^*}
\newcommand{\wstartp}{\mathbf{w}^{*\tp}}
\newcommand{\risk}{\mathfrak{R}}
\newcommand{\erisk}{\widehat{\mathfrak{R}}}
\newcommand{\crisk}{\widetilde{\mathfrak{R}}}

\newcommand{\deffo}{d_{1,\lambda}} % effective dimension 1
\newcommand{\defft}{d_{2,\lambda}} % effective dimension 2
\newcommand{\deffp}{d_{p,\lambda}} % effective dimension p
\newcommand{\ceff}{\tilde{d}_{1,\lambda}} % corrected effective dimension 1
\newcommand{\dht}{\hat{d}_{\lambda}} % new effective dimension

\newcommand{\deps}{\Delta\epsilon_\w} 
\newcommand{\sigz}{\Sigma_0}
\newcommand{\hsigz}{\widehat{\Sigma}_0}
\newcommand{\sigl}{\Sigma_\lambda}
\newcommand{\hsigl}{\widehat{\Sigma}_\lambda}
\newcommand{\sigd}{\Sigma_\delta}
\newcommand{\hwl}{{\widehat{\w}_\lambda}}

\newcommand{\tp}{\intercal}
\DeclareMathOperator{\tr}{tr}

%% Make document internal hyperlinks wherever possible. (TOC, references)
%% This MUST be loaded after varioref, which is loaded in 'extrapackages'
%% above.  We just load it last to be safe.
\usepackage[linkcolor=black,colorlinks=true,citecolor=black,filecolor=black]{hyperref}

%% Document information
%% ====================

\title{Contextual Bandit Optimization with Pre-Trained Neural Networks}
\author{Mikhail Terekhov}
\thesistype{Master Thesis}
\advisors{Advisors: Parnian Kassraie, Prof.~Dr.~Andreas Krause}
\department{Department of Computer Science}
\date{}

\begin{document}

\frontmatter

%% Title page is autogenerated from document information above.  DO
%% NOT CHANGE.
\begin{titlingpage}
  \calccentering{\unitlength}
  \begin{adjustwidth*}{\unitlength-24pt}{-\unitlength-24pt}
    \maketitle
  \end{adjustwidth*}
\end{titlingpage}

%% The abstract of your thesis.  Edit the file as needed.
\begin{abstract}
Bandit optimization is a difficult problem, especially if the reward model is high-dimensional. When rewards are modeled by neural networks, sublinear regret has only been shown under strong assumptions, usually when the network is extremely wide. In this thesis, we investigate how pre-training can help us in the regime of smaller models. We consider a stochastic contextual bandit with the rewards modeled by a multi-layer neural network. The last layer is a linear predictor, and the layers before it are a black box neural architecture, which we call a representation network. We model pre-training as an initial guess of the weights of the representation network provided to the learner. To leverage the pre-trained weights, we introduce a novel algorithm we call Explore Twice then Commit (E2TC). During its two stages of exploration, the algorithm first estimates the last layer's weights using Ridge regression, and then runs Stochastic Gradient Decent jointly on all the weights. For a locally convex loss function, we provide conditions on the pre-trained weights under which the algorithm can learn efficiently. Under these conditions, we show sublinear regret of E2TC when the dimension of the last  layer and number of actions $K$ are much smaller than the horizon $T$. In the weak training regime, when only the last layer is learned, the problem reduces to a misspecified linear bandit. We introduce a measure of misspecification $\epsilon_0$ for this bandit and use it to provide bounds $O(\epsilon_0\sqrt{d}KT+(KT)^{4 /5})$ or $\widetilde{O}(\epsilon_0\sqrt{d}KT+d^{1 /3}(KT)^{2 /3})$ on the regret, depending on regularization strength. The first of these bounds has a dimension-independent sublinear term, made possible by the stochasticity of contexts. We also run experiments to evaluate the regret of E2TC and sample complexity of its exploration in practice. 
\end{abstract}

\newpage

\section*{Acknowledgements}
First and foremost, I would like to thank Parnian for her continuous support and patience. I was new to the theory world, and without her guidance I would be completely lost there. I would also like to thank Prof. Krause for the opportunity to do the thesis at his lab. My time at ETH was life-changing, and I would not be here without the ESOP Scholarship. For that, I am grateful to the ETH Foundation and to those who supported my application. 

Thank you, Weijia, for being there, even when I'm writing throughout the night. I am happy to have absolutely amazing friends. Thanks, Dima, Antoshka, Andrew, Mike, and Justin! My parents were the ones who got me interested in science, so much that I barely manage to think about anything else since then. Thank you!

%% TOC with the proper setup, do not change.
\cleartorecto
\tableofcontents
\mainmatter

\chapter{Introduction}
% In physics, a theory precedes its applications. We had first learned about the structure of the atom and its decay, and then figured out that we could use it to produce great amounts of energy. In machine learning, the situation is more often reversed. Practitioners have collected a wide array of advice concerning model architectures, hyperparameter tuning, and the required amount of data while theorists are struggling to justify the choices formally.

In machine learning, empirical progress often outpaces careful theoretical analysis. Practitioners have collected a wide array of advice concerning model architectures, hyperparameter tuning, and the required amount of data, while theorists are struggling to justify the choices formally. 
\emph{Pre-training} is one of the well-established practices in the engineer's toolbox that is yet to see a theoretical treatment. In this thesis, we formalize the benefits of pre-training with a focus on applications to contextual multi-armed bandits.

Large neural network are typically pre-trained on a general task and then fine-tuned on another one, for which data is scarce. The classic example comes from computer vision, where we pre-train a classifier on ImageNet~\cite{deng2009imagenet} with millions of images, and refine the weights on a smaller dataset for a medical imaging task. To comply with the task specification, like the number of classes, we need to change the last layer's architecture and retrain it completely. The rest of the weights only have to be adjusted slightly.

The reasons for the wide adoption of pre-training are clear. Data for specialized tasks is expensive, so we would like to rely on the more general, cheaper data as much as possible. Pre-trained weights can also come from a model learned in an unsupervised fashion. The hope is that intermediate layers of the network learn some ``general features'' that encode meaningful information about the input sample. These features should be easier to use for a simple classifier than the original input. On top of that, the success of large language models on a variety of tasks out of the box~\cite{brown2020language} makes pre-training even more appealing compared to training a smaller model.

From a theoretical perspective, pre-training is less understood. How would we quantify the influence of the pre-trained weights on the performance after fine-tuning? The answer naturally depends on some notion of ``quality'' of these weights relative to the task at hand. In this thesis, we formalize this intuition. Depending on the size of the neural network, theorists employ different techniques to explain its generalization ability. In the regime of extremely wide networks, we now know that Stochastic Gradient Descent (SGD) in the parameter space can be viewed as following the so-called kernel gradient in the function space, with the inner product dictated by the Neural Tangent Kernel (NTK)~\cite{jacot2018neural}. When the networks are smaller, we can instead rely on assumptions on the loss function. Classical optimization theory can only provide guarantees of convergence of SGD to the minimum for very limited classes of losses, such as convex~\cite{shalev2014understanding} or strongly convex~\cite{rakhlin2011making} functions. Unfortunately, loss landscapes that arise during neural network training are highly non-convex~\cite{li2018visualizing}. A weaker assumption that we could make is that the loss is only \emph{locally} convex around its minimum. In general, this is not sufficient: how would we know that SGD is initialized close enough to the minimum that the trajectory falls into the convex basin of the loss? This is where pre-training comes into the theoretical story. A key idea in this thesis is that by imposing restrictions on the pre-trained weights, we can ensure that SGD at initialization does fall into the convex basin of the loss and stays there during training.

\emph{Contextual bandit optimization} is concerned with maximizing the cumulative reward of a learning algorithm over a dataset. Data samples, also referred to as {\em contexts}, are shown one by one, and the learner must choose an {\em action} from a given set and receive a {\em reward}. Algorithm's goal can be stated as minimizing regret, which is the expected difference of the sum of received rewards from its theoretical maximum. Crucial to solving this problem is to balance the trade-off of exploration versus exploitation. Exploration takes the form of choosing more diverse actions, also called ``arms''. It gives the algorithm valuable information about their potential rewards, but it also detracts from the cumulative reward, since the algorithm is not exploiting the arms it already knows to be good.

In the contextual bandit setting, analysis in the NTK regime can be used to show sublinear regret when training finite-width neural networks on the observed rewards. NN-UCB~\cite{kassraie2022neural} is the first algorithm to achieve this. Because of the limitations of NTK, however, it requires that the width of the network is polynomial in the horizon of the problem. The regret bound for NN-UCB scales as $O(T^{(2d-1) /2d})$, where $T$ is the horizon and $d$ is the dimension of the context. This thesis solves a similar problem as~\cite{kassraie2022neural}. We also work in a contextual bandit setting, with the rewards modeled by a neural network trained by SGD. However, we are lifting the restriction on the width of the network, replacing it by assumptions on the loss function.
% The precise form of these restricions is the core result of this thesis, presented in Theorem~\ref{thm:main} along with the guarantees that these restrictions provide.

We approach the bandit problem using an Explore-then-commit technique (ETC). In terms of the attained regret guarantees, ETC-type algorithms are known to be suboptimal compared to optimistic algorithms~\cite{lattimore2020bandit}. Moreover, they are not naturally anytime algorithms, i.e. the learner must know the stopping time before it selects the first arm. In spite of these deficiencies, we chose to focus on ETC for two key reasons. First, they are easier to analyze. Random variable dependency structures in optimistic algorithms are more involved than those in ETC, where in the exploration phase the actions and rewards are i.i.d. Before dealing with this technical challenge, it is meaningful to look into the peculiarities of pre-training in a simplified setup. Second, i.i.d.~action-reward pairs during exploration provide data for a classical learning problem, which might be of independent interest in non-adaptive applications. Bounds on the sample complexity of learning from pre-trained weights are the bulk of the results of this thesis. These bounds can be translated into regret bounds for ETC using the technique described in Section~\ref{sec:low_risk_to_low_regret}.\looseness-1

The pre-trained network provides us only with a feature map, therefore we would still need to tune the last year from scratch. The initialization of the last layer weights plays a crucial role as it can lead the SGD path in or out of the convex basin of the loss. To solve this problem, we introduce an algorithm we call \emph{Explore Twice then Commit (E2TC)}. The name reflects the two stages of exploration that the algorithm performs. In the first stage, the weights of the last layer are estimated from the random action-reward pairs using Ridge regression. In the second stage, pre-trained weights and the last layer are jointly refined with SGD. After exploration, the algorithm ``commits'' to the found weights and chooses actions greedily until the end of the episode. The core result of this thesis is that under suitable conditions on the loss, the algorithm can achieve a regret of $\widetilde{O}(T^{4 /5})$ in the stochastic context setup. This is worse than the typical $\widetilde{O}(\sqrt{T})$ regret of optimism-based algorithms, but it compares favorably to the $\widetilde{O}(T^{(2d-1)/2d})$ of NN-UCB. We highlight that out results hold under different conditions than that of NN-UCB. Our assumptions include realizability of the reward function, boundedness of the weights, and convexity of the mean squared error in a basin around the true weights. We also note that NN-UCB works with adversarial contexts, while we only consider the stochastic case.

The thesis is organised as follows. In the following section, we review the related work on neural contextual bandits and adjacent topics. In Chapter~\ref{ch:algo}, we introduce the  bandit setup and the main algorithm formally. We also discuss the techniques for translating guarantees for learning in the exploration phase into the regret of ETC algorithms. Chapter~\ref{ch:linear} contains a detailed analysis of the Ridge regression used in the first stage of exploration, based on the work~\cite{hsu2012random}. Without the second stage of exploration and training, our algorithm solves a misspecified linear bandit problem, for which we show a regret bound of $\widetilde{O}(\epsilon_0 \sqrt{d} KT+(KT)^{4 /5})$, where $K$ is the number of actions, $d$ is the dimension of the last layer and $\epsilon_0$ is a measure of misspecification of the pre-trained weights. This regret guarantee contains a linear term\footnote{For adversarial linear bandits with $\epsilon$-misspecification in $\infty$-norm, a lower bound of $\Omega(\epsilon\min\{T, K\})$ was shown in~\cite[Appendix~F]{lattimore2020learning}. Hence, for large numbers of arms, linear dependence on $T$ is unavoidable in general. }, but it does not rely on the convexity of the loss. In Chapter~\ref{ch:sgd}, we make a detour into high-probability guarantees for SGD. Using a time-uniform version of the Hoeffding-Azuma inequality~\cite{kassraie2023anytime}, we show that with high probability, the SGD trajectory will stay in a convex basin of a generally non-convex loss under a suitable choice of the learning rate. Next, in Chapter~\ref{ch:combined}, we piece together the analyses of Ridge regression and of SGD to provide guarantees for E2TC. We also study the optimal choices of hyperparameters for the algorithm. Finally, in Chapter~\ref{ch:experiments} we compare E2TC to baselines on two real-world datasets, and in Chapter~\ref{ch:conclusion} we conclude the thesis and provide an outlook.

% For example, if the data distributions between the pre-training and the final task are similar, pre-trained weights are more useful. This idea was formalized in the work~\cite{zhang2019warm} in the contextual bandit steup. In this thesis, we explored an alternative angle of attack. Instead of measuring the

% so called ``scaling laws'', governing model performance on language learning tasks, demand larger model \todo{sizes~\cite{}}.  it was recently discovered that an optimal allocation of a fixed compute budget for language lea

% \todo{Motivation; example: modeling few-class classification; sample complexity}

% \todo{
% \begin{itemize}
  % \item Hype up the foundation models, mb mention molecules and proteins
  % \item We lack a theoretical foundation of pre-training; it offers novel opportunities for analysis from the point of view of classical optimization theory.
  % \item Local vs global convexity in NN, role of pre-training here
  % \item Few-class classification as an example
% \end{itemize}
% }

% \todo{Notation table at the end}
% \todo{The rest of the thesis is organized as follows\ldots}
\section{Related Work}
NN-UCB comes from a larger body of work on neural contextual bandits. Most of the results in this area are in the NTK regime, and only consider very wide neural networks. An established non-asymptotic analysis~\cite{arora2019exact} reduces training and inference with a wide neural network to kernel regression with Gaussian Processes (GP). Bandits with reward functions sampled from a GP are better understood~\cite{srinivas2009gaussian,chowdhury2017kernelized}, with contextual~\cite{krause2011contextual} and misspecified~\cite{bogunovic2021misspecified} cases having been studied. Among the neural contextual bandit algorithms is EE-Net~\cite{ban2021ee}, which models both the reward function and the exploration bounty with neural networks and has a $O(\sqrt{T\log T} )$ regret bound. In addition to a polynomial lower bound on the width of the network, the analysis of EE-Net requires a strong condition on the contexts: individual actions in each context should be sampled i.i.d. from some distribution. This assumption is not satisfied, for example, in a classification setup. It is stronger than asking that entire contexts are sampled i.i.d., while the actions can be dependent, as we do. \looseness-1

An empirical study~\cite{riquelme2018deep} compares several approaches to approximate Bayesian inference combined with Thompson Sampling in the contextual bandit setup. It reveales that ``while it deserves further investigation, it seems that decoupling representation learning and uncertainty estimation improves performance''. In particular, the study analyzes an algorithm that trains a deep representation network, mapping actions to meaningful representations. These representations are then used by a Bayesian linear regression to estimate the reward. This is equivalent to using a neural network to regress the reward, where uncertainty is modeled only for the linear last layer's weights. Neural-LinUCB~\cite{xu2020neural} is a algorithm for neural contextual bandits built on this insight. It performs LinUCB-style exploration in the last layer of the learned model and trains the representation network with SGD. The algorithm achieves an $\widetilde{O}(\sqrt{T})$ regret, again requiring the width of the network to grow polynomially with $T$.

The philosophy of \emph{deep representation and shallow exploration}, exhibited in the above two works, is in line with the approach of this thesis. In our case, the pre-trained network provides representation, which might or might not be fine-tuned, while the last layer is trained from scratch and thus induces the bulk of our uncertainty about the weights. The effect of performing one step of SGD on the representation was carefully analyzed in~\cite{ba2022high} in the case of two-layer networks. The authors theoretically show that with a single gradient descent step makes the Ridge regression on the new representation outperform a wide range of kernelized Ridge regression predictors. This conclusion is mirrored in this thesis in the analysis of E2TC, which now makes many update steps on the features, and in the experiments, which show that ``unfreezing'' the representation network weights brings in significant improvement in practice.

Outside the NTK regime, it is common to make assumptions on the shape of loss function. Recently,~\cite{liu2023global} introduced the GO-UCB algorithm, operating in a setup rather similar to ours. GO-UCB trains a neural network using the mean squared error, and its $\widetilde{O}(\sqrt{T})$ regret guarantee is dependent on the local strong convexity of the loss around the minimum and on a strong condition on the Hessian called ``self-concordance'' of the loss at the minimum. These conditions are still not enough to guarantee convergence. To provide a reasonable initialization for the parameters, GO-UCB relies on an Empirical Risk Minimization (ERM) oracle, which solves the intractable global optimization problem.

Arguably, even local strong convexity is unreasonable to expect when training deep neural networks. Strong convexity implies that the smallest eigenvalue $\lambda_{min} (H)$ of the Hessian of the loss is bounded from below by a constant $\mu$. This constant is then used to set the learning rate for training, and the regret guarantees typically have a factor of $1 /\mu$. Empirical studies of the Hessian in deep learning~\cite{sagun2016eigenvalues,ghorbani2019investigation,yao2020pyhessian}, however, show that the Hessian of the loss during training and at the minimum has a large nullspace. This nullspace appears, for example, when the network architecture is overparameterized, i.e. different parameters correspond to the same mapping of inputs to outputs. Any ReLU network is overparameterized because of the \emph{positive scale-invariance} of ReLU:
\begin{equation}
  \forall\lambda >0,\ x\in\R\qquad\text{ReLU}(\lambda x)=\lambda\text{ReLU}(x).
\end{equation}
Thus, scaling the parameters of a fully-connected layer by some $\lambda >0$ and the parameters of the next one by $1 /\lambda$ does not change the mapping given by the network. Even in the case when the problem is under-parameterized and Hessian does not have a nullspace, it is unclear how would the algorithm obtain a realistic lower bound $\mu$.

Unlike strong convexity, non-strong local convexity at the minimum will necessarily hold. This makes it more natural to expect that it will hold locally around the minimum, although this is not strictly guaranteed. For under-parameterized networks, one expects the local minima of the loss to be isoloated and surrounded by a region where the loss is convex. In the over-parameterized regime, the set of minimizers of the loss will generally be a manifold, and local convexity around each point of this manifold cannot hold unless the manifold is an affine subspace. In this case, a better framework is provided by the \emph{Polyak-Łojasiewicz} (PŁ) condition. A function $f:\R^d\to\R$ is said to be $\mu$-PŁ iff
\begin{equation}
  \forall x \in\R^d\qquad \norm{\nabla f(x)}^2 \ge  2\mu \left( f(x)-\min_{x'\in \R^d} f(x')\right).
\end{equation}
This condition is a strict relaxation of $\mu$-strong convexity, and it also implies exponential convergence of gradient descent~\cite{karimi2016linear}. In~\cite{liu2022loss}, the authors argue that the (slightly modified) PŁ condition is satisfied for losses of over-parameterized networks, connect this condition to the minimal eigenvalue of the NTK matrix, and show that SGD converges exponentially to the optimum in expectation. This thesis focuses on the under-parameterized regime and non-strong convexity, but we believe the results can be extended to PŁ losses and thus cover the over-parameterized regime as well.

The regime when the pre-trained weights are kept fixed, and only the last layer is trained was dubbed \emph{weak training} by~\cite{arora2019exact}. Weak training in the bandit setup corresponds to misspecified contextual linear bandits. For adversarial contexts, the authors of~\cite{lattimore2020learning} show that a modification of LinUCB achieves the regret bound $O(\epsilon T\sqrt{d\log T}+d\sqrt{T}\log T)$. Here, $\epsilon$ is the maximum possible misspecification in the reward. For stochastic contexts, we can relax the $\epsilon$ to a different measure of misspecification, based on the expected difference in the reward instead of the $\infty$-norm distance. It also turns out that we can substitute the dependence on $d$ in the sublinear term by a dependence on $K$, the number of actions. This thesis focuses on the case $K \ll d$, where this substitution is beneficial. At the same time, the dependence on $T$ of our bounds is worse ($T^{4 /5}$ instead of $\sqrt{T} $ ), because our algorithm is a flavor of ETC.\looseness-1

We evaluate the quality of the pre-trained weights based on how close they are to the true weights with a certain distance measure. One can instead look into the data distribution from which these weights were generated, and compare the distribution to the one provided by the bandit. This approach was taken by~\cite{zhang2019warm}, who introduce a formal measure of distance between these distributions and a no-regret algorithm that utilizes the pre-training data directly. This approach is conceptually attractive, but limited in its coverage. For us, the interesting cases of pre-trained weights come from tasks that do not correspond to the task the bandit is trying to solve, e.g. they were trained in an unsupervised fashion.

\chapter{Bandits and optimization}
\label{ch:algo}
\section{Problem Formulation}
\label{sec:formulation}
We consider the following contextual bandit setup. An agent observes a random sequence of contexts $C_1,\ldots,C_T$, each context providing features for $K$ actions:  $C_t=(X_{t,1},\ldots,X_{t,K})$. Individual features $X_{t,i}\in\mathcal{X}$ come from an arbitrary set $\mathcal{X}$. The actions at step $t$ are associated with unknown random rewards $(r_{t,1},\ldots,r_{t,K})$. The agent selects an action $A_t\in[K]$ and observes the reward $r_{t,A_t}$. An optimal action at step $t$ will be denoted as
\begin{equation}
  A_t^*\in\argmax_{a\in[K]}\E[r_{t,a}].
\end{equation}
We define the pseudo-regret of the agent on the time segment from $t_1$ to $t_2$ as
\begin{equation}
  R_{t_1:t_2}=\sum_{t=t_1}^{t_2} \left( r_{t,A_t^*} - r_{t,A_t} \right),
\end{equation}
We will also use notation $R_t:=r_{t,A_t^*}-r_{t,A_t}$ for immediate pseudo-regret at step $t$. Agent's regret on the segment $t_1:t_2$ is $\E[R_{t_1:t_2}]$, where the expectation is taken over the observed contexts, associated rewards, and randomness of the policy. The objective of the agent is to minimize $\E[R_{1:T}]$.

We will approximate the rewards using a neural network mapping input features to predicted rewards:
\begin{equation}
  x\mapsto \w^\tp \phi_{\theta}(x),
\end{equation}
where $x\in\mathcal{X}$ are the input features, $\phi_\theta:\mathcal{X}\to\R^d$ is the \emph{representation network}, parameterized by $\theta\in\R^{d_0}$. $\w\in\R^d$ are the weights of the linear layer on top of the representation network. The joint set of weights of the prediction network will be denoted by $\omega=(\w,\theta)$. Before seeing the contexts, the agent is given $\theta_0\in\R^{d_0}$, the weights of the pre-trained representation network. An initial guess of the weights for the last layer is \emph{not} provided. The dimension $d$ of the last layer weights plays an important role in our analysis and appears in the regret bounds. In contrast, the dimension $d_0$, which would usually be much larger than $d$, does not appear in the analysis --- only the $2$-norm $\norm{\theta}$ of the representation weights matters. We will call this bandit problem a \emph{stochastic contextual bandit with pre-training}.
% We will focus on the regime $K\ll d$. A typical application here would be few-class classification. 

Now we will discuss the assumptions that we make about the process generating the contexts and the rewards. We begin with the contexts:
\begin{assumption}[Context distribution]
  \label{as:context}
  $C_t\sim\mathcal{D}_C$ are i.i.d. samples from a distribution $\mathcal{D}_C$ over $\mathcal{X}^K$. Furthermore, individual actions $X_{t,a}$ have the same marginal distributions $\mathcal{D}_X$ over $\mathcal{X}$:
  \begin{equation}
    \forall t\in[T],\ a\in[K]\qquad X_{t,a}\sim \mathcal{D}_X.
  \end{equation}
\end{assumption}
Note that this assumption is weaker than requiring that individual actions within the context are independently distributed. For example, if we consider classification in the bandit setup, the contexts are the incoming data samples, and they are often i.i.d., while the actions are given by pairs of the data sample and the chosen class, and thus are not independent. The second part of the assumption is there purely for convenience. If the incoming contexts $C_t$ are i.i.d., but the marginal distributions of actions are not identical, we can make them identical by randomly permuting the actions in the context.

Next, we will assume that our reward model is realizable:
\begin{assumption}[Realizability]
  \label{as:realizability}
  There exist true weights $\wstar\in\R^d$ and $\theta^*\in\R^{d_0}$ and a distribution $\mathcal{H}$ such that the random reward $r_{t,a}$ corresponding to input features $X_{t,a}$ is given by
\begin{equation}
   r_{t,a}=\wstartp\pstar(X_{t,a})+\eta_{t},\quad \eta_t\sim \mathcal{H},
\end{equation}
where $\eta_t$ are i.i.d. zero-mean noise variables coming from $\mathcal{H}$.
\end{assumption}
Realizability gives us a general way of defining the ``quality'' of the pre-trained weights $\theta_0$. We can, in some appropriate way, define the difference between  $\theta_0$ and  $\theta^*$ and provide regret bounds based on this difference. Depending on whether we only learn the last layer or also fine-tune the representation weights  $\theta$, the exact difference definition will change.

Finally, for technical convenience, we will require that all objects in our regret model are bounded and differentiable:
\begin{assumption}[Regularity]
  \label{as:bound}
  We have $\norm{\wstar}\le  B_\w $ and $\norm{\theta^*}\le B_\theta$ for some $B_\w,B_\theta>0$. For each  $x\in\mathcal{X}$, $\phi_\theta(x)$ is differentiable w.r.t. $\theta$. Its Jacobian is given by $\partial \phi_\theta(x) /\partial \theta=J_\theta(x)\in\R^{d\times d_0}$. For all $x\in\mathcal{X}$ and $\theta \in\R^{d_0}$ such that $\norm{\theta}\le B_\theta$, it holds that $\norm{\phi_\theta(x)}\le B_\phi$ and $\norm{J_\theta(x)}_2\le L_\phi$ for some $B_\phi,L_\phi>0$. Finally, $|\eta_t|\le B_\eta$ a.s. for all $t$.
\end{assumption}
Under this assumption and realizability, the rewards will also be bounded, and we will use $B_r$ for this bound for brevity:
\begin{equation}
  |r_{t,a}|\le B_r:=B_\w B_\phi + B_\eta\qquad\text{a.s.}
\end{equation}

For an example bandit setup that approximately satisfies these assumptions, the reader is invited to look at Section~\ref{sec:mnist}, where we evaluate E2TC on MNIST classification.
% \todo{Stochastic contexts, non-independent actions, example of few-class classification. Few-class classification as an example}
\section{Algorithm}
We attack the bandit problem described above using an algorithm which we call \emph{Explore Twice then Commit (E2TC)}. As the name suggests, it proceeds in three stages. The details are presented in Algorithm~\ref{alg:main_algo}. 

\begin{algorithm}[t]
\DontPrintSemicolon
\SetNoFillComment
\SetNlSty{}{}{}
\SetKwInput{KwRequire}{Require}
\KwRequire{Network architecture $\phi$, pre-trained feature weights $\theta_0$, horizon length $T$}
\KwRequire{Hyperparameters $\lambda,\zeta_\w,\zeta_\theta \in\R_{>0};\ T_1,T_2\in\mathbb{N}$}
\tcc{Linear last layer estimation}
\For{$t\gets 1$ \KwTo $T_1$}{
  Observe a context $C_t=(X_{t,1},\ldots,X_{T,K})\sim\mathcal{C}$ \;
  Select a random action $A_t\sim U([K])$ and observe the reward $r_t$\;
}
$\widehat{\Sigma(\theta_0)}\gets \frac{1}{T_1}\sum_{t=1}^{T_1}\phi_{\theta_0}(X_t)\phi_{\theta_0}(X_t)^\tp$\;
$\wzero\gets (\widehat{\Sigma(\theta_0)}+\lambda I)^{-1} \frac{1}{T_1}\sum_{t=1}^{T_1}r_t\phi_{\theta_0}(X_t)$\;
\tcc{Preconditioned SGD on $\w$ and $\theta$}
$\w_1\gets\wzero,\ \theta_1\gets \theta_0$\;
\For{$t\gets 1$ \KwTo $T_2$}{
  Observe a context $C_{T_1+t}=(X_{T_1+t,1},\ldots,X_{T_1+t,K})\sim\mathcal{C}$ \;
  Select a random action $\widetilde{A}_t\sim U([K])$ and observe the reward $\tilde{r}_t$\; 
  $\gv^\w_t\gets 2(\w^\tp_t\phi_{\theta_t}(\widetilde{X}_t)-\tilde{r}_t)\phi_{\theta_t}(\widetilde{X}_t)$ \;
  $\gv^\theta_t\gets 2(\w^\tp_t\phi_{\theta_t}(\widetilde{X}_t)-\tilde{r}_t)J_{\theta_t}(\widetilde{X}_t)^\tp \w_t$\;
  $\w_{t+1}\gets \Pi_{\dom_\w}(\w_t-\zeta_\w (\widehat{\Sigma(\theta_0)}+\lambda I)^{-1}\gv^\w_t)$\;
  $\theta_{t+1}\gets \Pi_{\dom_\theta}(\theta_t-\zeta_\w \gv^\theta_t)$\;
}
$\overline{\w}\gets \frac{1}{T_2}\sum_{t=1}^{T_2} \w_t,\quad\overline{\theta}\gets \frac{1}{T_2}\sum_{t=1}^{T_2} \theta_t$\;
\tcc{``Commit'': greedy actions based on $\overline{\w}$ and $\overline{\theta}$}
\For{$t\gets T_1+T_2+1$ \KwTo $T$}{
  Observe a context $C_t=(X_{t,1},\ldots,X_{T,K})\sim\mathcal{C}$ \;
  Select $A_t\gets \argmax_{a\in[K]}\overline{\w}^\tp\phi_{\overline{\theta}}(X_{t,a})$ and observe the reward $r_t$\;
}
% \While{$P(\text{success})<P$}{
    % Select a solver $s\in\mathcal{S}$ with probability $\sim P_f(s)$\;
    % % $P_f(s)/ \sum_{s'\in\mathcal{S}}P_f(s')$\;
    % Select $M\sim U(\text{suitable samples for $s$}),M\subset \mathcal{M}$\;
    % Get a set of solutions $\Theta=s(M)\subset \SE(3)$\;
    % \For{$\theta'\in\Theta$}{
      % $r'\leftarrow \text{Score}(\theta', \mathcal{M})$\;
      % \If{$r'<r'_\text{best}$}{
        % $r'_\text{best}\leftarrow r'$\;
        % $\theta\leftarrow \text{Refine}(\theta', \mathcal{M})$\;
        % $r\leftarrow \text{Score}(\theta, \mathcal{M})$\;
        % \If{$r<r_\text{best}$} {
            % $\theta_\text{best}\leftarrow \theta,\ r_\text{best}\leftarrow r$
        % }
      % }
    % }
% }
% \Return{$\theta_\text{best}$}\;
% \vspace{0.5em}
\caption{Explore Twice then Commit (E2TC) algorithm for pre-trained bandit problems.}
\label{alg:main_algo}
\end{algorithm}

In the first stage, $T_1$ random actions $A_t\sim U([K])$ are selected and the rewards from these actions collected. The input features $X_{t,A_t}$ selected during this stage will for brevity be denoted as  $X_t$, and the corresponding observed rewards --- by  $r_t$. Since the actions are chosen independently and uniformly, Assumption~\ref{as:context} guarantees that $X_t$ are i.i.d. according to  $\mathcal{D}_X$. We use Ridge regression with regularization strength $\lambda$ to compute $\wzero$, an initial estimate of the weights of the last layer. We use \emph{misspecified} representations $\pzero(X_t)$ in the regression. The problem that we are solving is an instance of \emph{random design Ridge regression}, and we can adapt the analysis~\cite{hsu2012random} to provide guarantees for $\wzero$. These guarantees are given in terms of the covariance of representations. For a given $\theta$, this covariance is defined as
\begin{equation}
  \Sigma(\theta)=\E_X\left[ \phi_\theta(X)^\tp\phi_\theta(X) \right]. \label{eq:sigma_def}
\end{equation}
Here and in the following, the expectation w.r.t. $X$ implies that $X\sim \mathcal{D}_X$. Ridge regression proceeds by providing an estimate of the covariance at the pre-trained weights. This estimate is given by
\begin{equation}
  \widehat{\Sigma(\theta_0)}=\frac{1}{T_1}\sum_{t=1}^{T_1} \pzero(X_t)^\tp\pzero(X_t). \label{eq:hsigma_def}
\end{equation}
Using it, Ridge regression estimates the weights of the last layer as
\begin{equation}
  \wzero=(\widehat{\Sigma(\theta_0)}+\lambda I)^{-1} \frac{1}{T_1}\sum_{t=1}^{T_1}r_t\phi_{\theta_0}(X_t).
\end{equation}

In the second stage, we perform $T_2$ more random actions and collect the rewards. We will use notation $\widetilde{X}_t$ for the features selected during this stage. $\widetilde{X}_t$ are also i.i.d. according to $\mathcal{D}_X$. The observed rewards are $\tilde{r}_t$. We run SGD on the joint weights $\omega=(\w,\theta)$ of the prediction network, initializing it at $(\w_0,\theta_0)$. SGD minimizes the \emph{true risk} $\risk: \dom_\w\times\dom_\theta\to \R$:
\begin{align}
    \risk(\w,\theta)&=\E_{X,r}\left[ \left(\w^\tp\phi_\theta(X)-r \right)^2 \right] \\ 
                    &=\E_{X}\left[ \left(\w^\tp\phi_\theta(X)-\wstartp\pstar(X) \right)^2 \right]+\E_\eta\left[ \eta^2 \right] , \label{eq:risk_decomposed}
\end{align}
where $\dom_\w=\{\w:\ \norm{\w}\le B_\w\}$ is the domain of the weights of the last layer, $\dom_\theta=\{\theta:\ \norm{\theta}\le B_\theta\}$ is the domain of the weights of the representation network, $r=\wstartp\pstar(X)+\eta$ is the random reward associated with the feature $X$ and $\eta\sim\mathcal{H}$ is the noise independent of $X$. At step $t$, gradient estimators of the risk at the current weights $(\w_t,\theta_t)$ are given by
\begin{gather}
  \gv^\w_t= 2(\w^\tp_t\phi_{\theta_t}(\widetilde{X}_t)-\tilde{r}_t)\phi_{\theta_t}(\widetilde{X}_t), \label{eq:gvw_def} \\
  \gv^\theta_t= 2(\w^\tp_t\phi_{\theta_t}(\widetilde{X}_t)-\tilde{r}_t)J_{\theta_t}(\widetilde{X}_t)^\tp \w_t. \label{eq:gvtheta_def}
\end{gather}
Under Assumptions~\ref{as:realizability} and~\ref{as:bound}, these estimators are bounded and unbiased, i.e.
\begin{gather}
  \E\left[ \gv^\w_t\mid \w_t,\theta_t \right] =\nabla_\w\risk(\w_t,\theta_t),\quad \E[ \gv^\theta_t\mid \w_t,\theta_t] =\nabla_\theta\risk(\w_t,\theta_t) \\
  \Vert\gv^\w_t\Vert \le D_\w:=(4B_\w B_\phi+2B_\eta)B_\phi\qquad\text{a.s.,} \label{eq:dw_def} \\
  \Vert\gv^\theta_t\Vert \le D_\theta:=(4B_\w B_\phi+2B_\eta)L_\phi B_\w\qquad\text{a.s.} \label{eq:dtheta_def}
\end{gather}
The algorithm updates the weights using the rules
\begin{equation}
  \w_{t+1}\gets \Pi_{\dom_\w}(\w_t-\zeta_\w (\widehat{\Sigma(\theta_0)}+\lambda I)^{-1}\gv^\w_t),\quad \theta_{t+1}\gets \Pi_{\dom_\theta}(\theta_t-\zeta_\theta \gv^\theta_t). \label{eq:steps}
\end{equation}
Separate learning rates $\zeta_\w$ and  $\zeta_\theta$ can be used to update $\w$ and $\theta$ in practice. Projection  $\Pi_\dom$ onto a set  $\dom$ is defined in the standard way:
\begin{equation}
  \Pi_\dom(x)=\argmin_{x'\in\dom}\norm{x-x'}.
\end{equation}
Since $\dom_\w$ and  $\dom_\theta$ are convex, projection onto them is unique. An important subtlety in~\eqref{eq:steps} is that the update step for $\w$ is multiplied on the left by $(\widehat{\Sigma(\theta_0)}+\lambda I)^{-1}$. Doing this using a general positive-definite matrix is called \emph{pre-conditioning} of SGD and it is sometimes used to improve convergence of SGD in practice. It also leads to different theoretical guarantees, an effect we discuss in more detail in Section~\ref{sec:preconditioned_sgd}. For now, it suffices to say that the guarantees for Ridge regression cannot be plugged into the guarantees for vanilla SGD, but with this pre-conditioning, it becomes possible. SGD outputs the estimated weights $\overline{\omega}=(\overline{\w}, \overline{\theta})$ as the average of the trajectory:
\begin{equation}
  \overline{\w}=\frac{1}{T_2}\sum_{t=1}^{T_2} \w_t,\quad\overline{\theta}=\frac{1}{T_2}\sum_{t=1}^{T_2} \theta_t.
\end{equation}
We will provide a high-probability upper bound on the \emph{suboptimality gap} of these estimates compared to $(\wstar,\theta^*)$:
\begin{equation}
  \risk(\overline{\w},\overline{\theta})-\risk(\wstar,\theta^*)=\E_{X}\left[ \left(\overline{\w}^\tp\phi_{\overline{\theta}}(X)-\wstartp\pstar(X) \right)^2 \right]. \label{eq:suboptimality_gap}
\end{equation}
As we argue in the next section, such guarantees translate into upper bounds on the regret of E2TC.
\section{Low risk to low regret}
\label{sec:low_risk_to_low_regret}
Suboptimality gap in the form~\eqref{eq:suboptimality_gap} is clearly a measure of quality of the approximation of $(\wstar,\theta^*)$ by  $(\overline{\w}, \overline{\theta})$. An intuition behind the main result in this section is guided by this observation. When the approximation is good, greedily selecting an action according to the approximate weights cannot be too much worse than selecting the optimal action. Since the risk function deals with the mean squared error, however, we have to take a square root to translate it into instant regret, which is measured as absolute difference.
\begin{theorem} \label{thm:risktoregret}
  Let E2TC be run on a stochastic contextual bandit with pre-training. Let Assumptions~\ref{as:context},~\ref{as:realizability}, and~\ref{as:bound} hold. Assume also that the weights $(\overline{\w},\overline{\theta})$ after the two exploration phases satisfy the following bound with $\PP\ge 1-\delta$:
\begin{equation}
  \risk(\overline{\w},\overline{\theta})-\risk(\wstar,\theta^*) <\epsilon(T_1,T_2,\delta).\label{eq:abstract_highp_risk}
\end{equation}
Then the regret of E2TC will be bounded by
\begin{equation}
  \E\left[ R_{1:T} \right] \le 2B_\w B_\phi T\delta + 2B_\w B_\phi(T_1+T_2)+K(T-T_1-T_2)\sqrt{\epsilon(T_1,T_2,\delta)}. \label{eq:abstract_regret}
\end{equation}
\end{theorem}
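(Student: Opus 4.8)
The plan is to split the horizon into the two exploration phases (steps $1$ through $T_1+T_2$) and the commit phase (steps $T_1+T_2+1$ through $T$), bound the exploration regret using only the almost-sure bound on the mean rewards, and convert the high-probability risk bound \eqref{eq:abstract_highp_risk} into a per-step regret bound on the commit phase. Throughout I would write $f^*(x):=\wstartp\pstar(x)$ and $\hat f(x):=\overline{\w}^\tp\phi_{\overline{\theta}}(x)$, and note that by Cauchy--Schwarz and Assumption~\ref{as:bound}, $|f^*(x)|\le\norm{\wstar}\,\norm{\pstar(x)}\le B_\w B_\phi$ for every $x\in\mathcal{X}$.

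\emph{Exploration phases.} By Assumption~\ref{as:realizability} and the zero mean of the noise, $\E[r_{t,a}\mid C_t]=f^*(X_{t,a})$, so for every $t$,
\[
  \E[R_t]=\E\Bigl[\max_{a\in[K]}f^*(X_{t,a})-f^*(X_{t,A_t})\Bigr]\le 2B_\w B_\phi .
\]
Summing over $t\le T_1+T_2$ gives $\E[R_{1:T_1+T_2}]\le 2B_\w B_\phi(T_1+T_2)$, the second term of \eqref{eq:abstract_regret}; this step uses only boundedness, not the risk bound.

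\emph{Commit phase.} Let $G$ be the event on which \eqref{eq:abstract_highp_risk} holds. Since the exploration-phase data and the commit-phase contexts and rewards are independent, $G$ is measurable with respect to $(\overline{\w},\overline{\theta})$ and $\PP(G)\ge 1-\delta$. Fix $t>T_1+T_2$ and condition on $(\overline{\w},\overline{\theta})$. With $a^*\in\argmax_a f^*(X_{t,a})$ and $A_t=\argmax_a\hat f(X_{t,a})$, the telescoping identity
\[
  f^*(X_{t,a^*})-f^*(X_{t,A_t})=\bigl(f^*(X_{t,a^*})-\hat f(X_{t,a^*})\bigr)+\bigl(\hat f(X_{t,a^*})-\hat f(X_{t,A_t})\bigr)+\bigl(\hat f(X_{t,A_t})-f^*(X_{t,A_t})\bigr)
\]
has nonpositive middle term by the greedy choice of $A_t$, hence is at most $\sum_{a=1}^K\bigl|f^*(X_{t,a})-\hat f(X_{t,a})\bigr|$. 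Taking $\E_{C_t}$, using that every $X_{t,a}$ has marginal $\mathcal{D}_X$ (Assumption~\ref{as:context}) and then Jensen's inequality,
\[
  \E[R_t\mid\overline{\w},\overline{\theta}]\le K\,\E_{X}\bigl|f^*(X)-\hat f(X)\bigr|\le K\sqrt{\E_X\bigl[(f^*(X)-\hat f(X))^2\bigr]}=K\sqrt{\risk(\overline{\w},\overline{\theta})-\risk(\wstar,\theta^*)}.
\]
On $G$ the right-hand side is $<K\sqrt{\epsilon(T_1,T_2,\delta)}$, while unconditionally it never exceeds $2B_\w B_\phi$ by the same argument as in the exploration phase. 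Therefore
\[
  \E[R_t]=\E\bigl[\mathbf{1}_G\,\E[R_t\mid\overline{\w},\overline{\theta}]\bigr]+\E\bigl[\mathbf{1}_{G^c}\,\E[R_t\mid\overline{\w},\overline{\theta}]\bigr]\le K\sqrt{\epsilon(T_1,T_2,\delta)}+2B_\w B_\phi\,\delta .
\]
Summing this over the $T-T_1-T_2$ commit steps, adding the exploration bound, and using $T-T_1-T_2\le T$ in the $\delta$-term yields \eqref{eq:abstract_regret}.

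\emph{Main obstacle.} There is no deep difficulty here; the point requiring care is the order of integration — one must integrate out the commit-phase contexts and rewards \emph{conditionally} on the exploration output $(\overline{\w},\overline{\theta})$ so that the event $G$, which lives on the exploration data, factors cleanly out of the expectation. The $K$ in the leading term is the price of controlling the prediction error simultaneously at $a^*$ and at $A_t$ by the total $L^1$ mass of $f^*-\hat f$ over all $K$ arms, and this is exactly where the identical-marginals part of Assumption~\ref{as:context} is used.
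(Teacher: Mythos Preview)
Your proof is correct and follows essentially the same route as the paper: bound exploration by the trivial $2B_\w B_\phi$ per step, use the telescoping identity on the commit step to reduce to $\sum_a|f^*(X_{t,a})-\hat f(X_{t,a})|$, pass to the $L^2$ error via Jensen, and split on the success event. The only cosmetic difference is that you apply Jensen \emph{per step} after using the common-marginal assumption to collapse the sum over arms, whereas the paper first aggregates the whole commit phase into a single sum and then applies Jensen followed by Cauchy--Schwarz on that sum; your route is slightly cleaner (it avoids the intermediate $\sqrt{K(T-T_1-T_2)\sum_{t,a}(\cdots)^2}$ step) and gives the same bound.
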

For a given high-probability bound of the form~\eqref{eq:abstract_highp_risk}, this theorem provides a recipe for selecting $T_1$ and $T_2$ by minimizing~\eqref{eq:abstract_regret}. As an example, SGD for convex functions (assuming $T_1=0)$ will give us a guarantee of the form 
\begin{equation}
  \epsilon=O\left(\sqrt{\frac{\log(1 /\delta)}{T_2}}\right).
\end{equation}
Then, for $K\ll T$, we can select $T_2=(KT)^{4 /5}\log^{1/ 5} T$ and  $\delta=T_2 / T$ to get a bound
\begin{equation}
  \E[R_{1:T}]=O\left( (KT)^{4 /5}\log^{1 /5}T \right).
\end{equation}
 For strongly-convex functions~\cite{rakhlin2011making}, we can instead get a bound of the form 
\begin{equation}
  \epsilon=O\left( \frac{\log((\log T_2) / \delta)}{T_2} \right),
\end{equation}
leading to a regret guarantee $\E[R_{1:T}]=\widetilde{O}((KT)^{2 /3})$ for an appropriate $T_2$.

\begin{proof}[Theorem~\ref{thm:risktoregret}]
  We will use notation $\overline{\omega}=(\overline{\w},\overline{\theta}),\ \omega^*=(\w^*,\theta^*)$, and $\omega_t=(\w_t,\theta_t)$ throughout the proof. Let the event $S$ (from ``success'') be
\begin{equation}
  S=\left\{  \risk(\overline{\omega})-\risk(\omega^*)< \epsilon(T_1,T_2,\delta)\right\}.
\end{equation}
Its complement will be called $F$. By assumption, $\PP[F]\le \delta$.
Next, given an arbitrary action $A_t$, we can bound its immediate regret by
\begin{equation}
  \E[R_t]=\E[\wstartp\pstar(X_{t,A^*_t})- \wstartp\pstar(X_{t,A_t})] \le 2B_\w B_\phi. \label{eq:bad_regret}
\end{equation}
In steps starting from $T_1+T_2+1$, the action is chosen as 
\begin{equation}
 A_t=\argmax_{a\in[K]}\overline{\w}\phi_{\overline{\theta}}(X_{t,a}),
\end{equation}
so we can bound the immediate regret using the quality of the estimate $\overline{\omega}$ :
\begin{align}
  \E\left[R_t\right]&=\E\left[\wstartp\pstar(X_{t,A^*_t})-\wstartp\pstar(X_{t,A_t})\right] \\
  &= \E\left[\wstartp\pstar(X_{t,A^*_t})-\wstartp\pstar(X_{t,A_t})|\ A_t\ne A_t^*\right]\PP\left[ A_t \ne A_t^* \right]  \\
    &= \E\big[\wstartp\pstar(X_{t,A^*_t}) -\overline{\w}^\tp\phi_{\overline{\theta}}(X_{t,A^*_t})+\overline{\w}^\tp\phi_{\overline{\theta}}(X_{t,A_t})- \wstartp\pstar(X_{t,A_t}) \nonumber \\
    &\qquad+\underbrace{\overline{\w}^\tp\phi_{\overline{\theta}}(X_{t,A^*_t})-\overline{\w}^\tp\phi_{\overline{\theta}}(X_{t,A_t})}_{\le 0,\text{ since $A_t$ maximizes $\overline{\w}^\tp\phi_{\overline{\theta}}$}}|\ A_t\ne A^*_t\big]\PP\left[ A_t \ne A_t^* \right] \\
    &\le\E\left[\sum_{a=1}^K\left| \wstartp\pstar(X_{t,a})-\overline{\w}^\tp\phi_{\overline{\theta}}(X_{t,a}) \right| \right]. \label{eq:imm_bound}
\end{align}
If S holds, then the cumulative regret in the ``commit'' phase of the algorithm can be bounded with the help of~\eqref{eq:imm_bound}. $\E\left[R_{T_1+T_2+1:T}\mid S\right]=$
\begin{align}
  \qquad&\phantom{\le}\E\left[\left.\sum_{t=T_1+T_2+1}^T\left(\wstartp\pstar(X_{t,A^*_t})- \wstartp\pstar(X_{t,A_t})\right)\ \right| S\right] \\
        &\le \E\left[\left. \sum_{t=T_1+T_2+1}^T\sum_{a=1}^K\left| \wstartp\pstar(X_{t,a})-\overline{\w}^\tp\phi_{\overline{\theta}}(X_{t,a}) \right|\ \right| S\right] \\
        & \le \sqrt{\E\left[\left.\left(\sum_{t=T_1+T_2+1}^T\sum_{a=1}^K\left|\wstartp\pstar(X_{t,a})-\overline{\w}^\tp\phi_{\overline{\theta}}(X_{t,a})\right| \right)^2\ \right| S \right]} \label{eq:variance_ineq} \\
       & \le \sqrt{ K(T-T_1-T_2)\E\left[\left.\sum_{t=T_1+T_2+1}^T\sum_{a=1}^K\left(\wstartp\pstar(X_{t,a})-\overline{\w}^\tp\phi_{\overline{\theta}}(X_{t,a}) \right)^2\ \right| S \right] }. \label{eq:sumsq}
\end{align}
Analyzing the randomness in~\eqref{eq:sumsq} carefully, we see that it is coming from two independent sources: $\overline{\omega}$ is random, and it depends on $C_1, \ldots,C_{T_1+T_2}$ and the random actions selected during exploration. The distribution of $\overline{\omega}$ is conditioned on the event $S$. $C_{T_1+T_2+1}, \ldots,C_{T}$, on the other hand, are independent from $\overline{\omega}$ and from $S$. Thus, we can rewrite~\eqref{eq:sumsq} as
\begin{align}
  &\E[R_{T_1+T_2+1:T}\mid S]\le \sqrt{ K(T-T_1-T_2)}\times \label{eq:exp_split}\\
  &\times\sqrt{\E_{\overline{\omega}\sim p(\overline{\omega}|S)}\E_{C_{T_1+T_2+1}, \ldots,C_T}\left[\sum_{t=T_1+T_2+1}^T\sum_{a=1}^K\left(\wstartp\pstar(X_{t,a})-\overline{\w}^\tp\phi_{\overline{\theta}}(X_{t,a}) \right)^2\right] }. \nonumber
\end{align}
Here we notice that the expectation w.r.t. $C_t$ of each term in~\eqref{eq:exp_split} only depends on the marginal distribution of $X_{t,a}$, which is $\mathcal{D}_X$ by Assumption~\ref{as:context}. Together with~\eqref{eq:suboptimality_gap} and the definition of $S$, this gives us
\begin{align}
       \E[R_{T_1+T_2+1:T}\mid S]& \le K(T-T_1-T_2)\sqrt{\E_{\overline{\omega}\sim p(\overline{\omega}|S)}\left[\risk(\overline{\omega})-\risk(\omega^*)\right] } \\
       & \le K(T-T_1-T_2)\sqrt{\epsilon(T_1,T_2,\delta)} . \label{eq:rsucc_ineq}
\end{align}
Using~\eqref{eq:bad_regret},
\begin{align}
  \!\!\E[R_{1:T}] &=\E[R_{1:T}\mid F]\PP[F]+\left(\E[R_{1:T_1+T_2}\mid S]+\E[R_{T_1+T_2+1:T}\mid S]\right)\PP[S] \\
              & \le 2B_\w B_\phi T\delta + 2B_\w B_\phi (T_1+T_2)+K(T-T_1-T_2)\sqrt{\epsilon(T_1,T_2,\delta)}.
\end{align}
\end{proof}

In the rest of the thesis, we will focus on upper bounds on the suboptimality gap, translating them into regret guarantees using this theorem. Providing such bounds is equivalent to solving the sample complexity problem, formulated as ``how many data samples do we need to bring the suboptimality gap below a certain threshold''. When we are not in the bandit setup, but instead simply do regression with a pre-trained representation network from i.i.d. data, a bound of the form~\eqref{eq:abstract_highp_risk} is also useful. Given $T_0$ data samples and a desired probability of success $\delta$, we should minimize  $\epsilon(T_1,T_0-T_1,\delta)$ w.r.t. $T_1$ to learn how many samples to spend on the initialization of $\w$, and how many --- on SGD.

\section{Empirical risk minimization}
Guarantees for SGD can be given in terms of a bound on the suboptimality gap~\eqref{eq:suboptimality_gap}. We directly optimize the true risk $\risk$, but we only use estimators of its gradients. In the thesis we focus on this approach. However, a mirrored approach is also popular in the literature, under the name of \emph{empirical risk minimization}~\cite{shalev2014understanding}. In this section, we will briefly discuss how it can be used to provide regret bounds for stochastic contextual bandits with pre-training similar to the ones that we have. 

Given $T_2$ data samples $(\widetilde{X}_t,\tilde{r}_t)_{t=1}^{T_2}$, we define empirical risk as
\begin{equation}
  \erisk(\w,\theta)=\frac{1}{T_2}\sum_{t=1}^{T_2} (\w^\tp\phi_\theta(\widetilde{X}_t)-\tilde{r}_r)^2.
\end{equation}
We can compute the true gradients of $\erisk$ and optimize it with regular gradient descent (GD), but now the risk itself is an approximation. Given sufficient computational resources, the minimum of the empirical risk can be found with arbitrary precision. Then, we use a \emph{uniform law}, stating that with $\PP\ge 1-\delta$, 
\begin{equation}
  \forall \omega\qquad|\risk(\omega)-\erisk(\omega) |\le \hat{\epsilon}(\delta)
\end{equation}
for some $\hat{\epsilon}(\delta)$. Let $\widehat{\omega}\in\argmin_{\omega}\erisk(\omega)$ be the weights minimizing the empirical risk. Now it is easy to show that, with $\PP\ge 1-\delta$,
\begin{equation}
  \risk(\widehat{\omega})-\risk(\omega^*)\le 2\hat{\epsilon}(\delta). \label{eq:subopt_gen_bound}
\end{equation}
Naturally, the catch here is ``sufficient computational resources''. We focus on SGD instead of this approach because full-batch gradient descent (required to compute the full gradient of $\erisk$ ) is very rare in practice. We note in passing that SGD on $\erisk$ with multiple passes through the data can also be used and there are results providing suboptimality gap bounds in this case without the need for a uniform bound~\cite{lin2017optimal}. This direction can be interesting for future research.

Uniform laws for risk functions associated with general classes of losses can be derived through \emph{Rademacher complexity}. For a class of functions $\mathcal{F}\subset\left\{ f: \mathcal{X}\to\R \right\} $, it is defined as
\begin{equation}
  \mathcal{R}_n(\mathcal{F})=\E_{X,\epsilon}\left[ \sup_{f\in\mathcal{F}}\left| \frac{1}{n}\sum_{k=1}^{n} \epsilon_i f(X_i) \right|  \right],
\end{equation}
where $X_1,\ldots,X_n$ are i.i.d. samples from $\mathcal{D}_X$ and $\epsilon_1,\ldots,\epsilon_n$ are i.i.d. samples from $U[\left\{ -1,1 \right\} ]$. The class of function relevant to us is
\begin{equation}
  \mathcal{F}=\left\{x\mapsto (\w^\tp\phi_\theta(x)-r)^2 \right\},
\end{equation}
where $x\in\mathcal{X}$, $\norm{\w}\le B_\w$, $\norm{\theta}\le B_\theta$, and $|r|\le B_r$. Any $f\in\mathcal{F}$ satisfies $\norm{f}_\infty \le (B_\w B_\phi+B_r)^2$. Theorem~4.10 in~\cite{wainwright2019high} now gives us that
\begin{equation}
  \PP\left[\forall\omega\quad |\risk(\omega)-\erisk(\omega)|\le 2\mathcal{R}_{T_2}(\mathcal{F})+(B_\w B_\phi + B_r)^2\sqrt{\frac{2\log(1 /\delta)}{T_2}} \right] \ge 1-\delta. \label{eq:use_rad_nn}
\end{equation}

Rademacher complexity of neural networks has been extensively studied, with one of the recent results being~\cite{golowich2018size}. Authors of this paper improve the dependence of Rademacher complexity on the depth of the network, but for us the most important part is that it depends on $n$ as  
\begin{equation}
  \mathcal{R}_n(\mathcal{F})=O(n^{-1 /2}) \label{eq:rad_nn}
\end{equation}
as long as the depth of the network is kept fixed and Frobenius norms of parameters are bounded and some assumptions on activation functions are satisfied. In this case, we require that the input features come from a space $\mathcal{X}\subset\R^{d_{in}}$ with bounded norms. Note that the paper actually bounds the complexity of the class
\begin{equation}
  \widetilde{\mathcal{F}}=\left\{ x\mapsto \w^\tp\phi_\theta(x) \right\}, 
\end{equation}
but the contraction lemma (Lemma~26.9 in~\cite{shalev2014understanding}) and Lipschitzness of $x\mapsto (x-a)^2$ defined on a bounded subset of $\R$ give us that $\mathcal{R}_n(\mathcal{F})=O(\mathcal{R}_n(\widetilde{\mathcal{F}}))$. Substituting~\eqref{eq:rad_nn} into~\eqref{eq:use_rad_nn} and the result into~\eqref{eq:subopt_gen_bound} we get that, with $\PP\ge 1-\delta$,
\begin{equation}
   \risk(\widehat{\omega})-\risk(\omega^*)=O\left(\sqrt{\frac{\log(1/\delta)}{T_2}}\right). \label{eq:rademacher_sub}
\end{equation}
This is the same asymptotics as what we will show for SGD in Chapter~\ref{ch:sgd} when only one pass is made through the data, and the true risk $\risk$ is (locally) convex.

It turns out that the asymptotic bound~\eqref{eq:rademacher_sub} can actually be improved. Theorem~C.1 in~\cite{liu2023global}, itself adapted from lecture notes~\cite{nowak2007complexity}, shows that the suboptimality gap for mean squared losses can be bounded as
\begin{equation}
  \risk(\widehat{\omega})-\risk(\omega^*)= O\left( \frac{\log(T_2 /\delta)}{T_2} \right),
\end{equation}
where $\widehat{\omega}$ is the exact minimizer of the empirical risk $\erisk$. This improvement over~\eqref{eq:rademacher_sub} could potentially be adapted to the more realistic case of an inexact minimizer of $\erisk$ and lead to a reduction in regret bounds from  $O(T^{4 /5})$ that we currently have to  $O(T^{2 /3})$, but efficient optimization methods of $\erisk$ would still be required.

\chapter{Linear estimation of the last layer}
\label{ch:linear}
In this chapter we provide guarantees on the initial value $\w_0$ of the last layer weights that we compute in Algorithm~\ref{alg:main_algo} using Ridge regression. In this and the following chapters, we will use notation
\begin{equation}
  \sigz=\Sigma(\theta_0),\quad\sigl=\Sigma(\theta_0)+\lambda I,\quad\hsigz=\widehat{\Sigma(\theta_0)},\quad\hsigl=\widehat{\Sigma(\theta_0)}+\lambda I. \label{eq:succ_sigdef}
\end{equation}
For convenience, we repeat the equations that define these matrices and $\wzero$:
\begin{gather}
  \sigz=\E_X\left[ \pzero(X)\pzero(X)^\tp \right],\quad \hsigz=\frac{1}{T_1}\sum_{t=1}^{T_1} \pzero(X_t)^\tp\pzero(X_t), \\
  \wzero=\hsigl^{-1} \frac{1}{T_1}\sum_{t=1}^{T_1}r_t\phi_{\theta_0}(X_t). \label{eq:wzero_newdef}
\end{gather}
Here, $X_t\sim\mathcal{D}_X$ are i.i.d. samples  that correspond to the random features of the actions that we choose during the first phase of exploration in E2TC. 

We would like to provide a high-probability bound on ``closeness'' between $\mathbf{w}_0$ and $\mathbf{w}^*$, for some appropriate definition of closeness. Since we are using the misspecified weights $\theta_0$, we can't expect this closeness to go to zero with $T_1$, but only down to some constant that corresponds to the cost of misspecification. Two notions of closeness will be relevant to us: $\norm{\wzero-\wstar}_{\hsigl}$ and $\norm{\wzero-\wstar}_{\sigz}$. The first expression is easier to bound. In fact, in the bandit literature, this quantity is bounded \emph{for any} sequence of $\{X_t\}_t$, w.h.p. over the noise  $\eta_t$. In the following section, we provide a bound on this quantity. The second quantity has been analyzed in the literature on \emph{random design Ridge regression}. We consider it in Section~\ref{sec:ridge_lls}. Note that we cannot bound $\norm{\w-\wstar}_2$. This is a fundamental limitation: if $\mathcal{D}_X$ is only supported on a subspace of $\R^d$, we cannot learn anything about the projections of $\wstar$ onto directions orthogonal to this subspace. To work around this limitation, as we will see, we can perform pre-conditioned SGD, guarantees for which are given in terms of matrix-weighted distance from $\wstar$ to  $\wzero$.

\section{Fixed design Ridge regression}
\label{sec:bandit_lls}

 % \begin{gather}
  % V(\lambda)=\sum_{t=1}^{T_1}\phi_{\theta_0}(X_t)\phi_{\theta_0}(X_t)^\tp+\lambda T_1 I, \label{eq:vdef} \\
  % \mathbf{w}_0=V(\lambda)^{-1}\sum_{t=1}^{T_1}r_t\phi_{\theta_0}(X_t). \label{eq:w0def}
% \end{gather}
% % We use an aggressive regularization, $\lambda T_1^\alpha I$ instead of the usual  $\lambda I$, for some $\alpha > 0$ to be specified later. This is done to better control $\norm{\mathbf{w}_0}$, which might without this regularization grow linearly with $T_1$.
% \todo{Remove $V(\lambda)$, rewrite it all with $\widehat{\sigz}$. We only need the bound on $\norm{\wzero-\wstar}_{\widehat{\sigz}+\lambda I}$. The lemma about covariance estimation can be moved to appendix.} The factor $T_1$ in regularization was introduced for consistency of notation later. It will be shown that the optimal bounds can be obtained by setting $\lambda$ depending on  $d$ and $T_1$. 

Due to realizability, the rewards are generated as $r_t=\wstartp\pstar(X_t)+\eta_t$. Our Ridge estimate~\eqref{eq:wzero_newdef} relies on $\pzero(X_t)$. Misspecification comes from the discrepancy between $\phi_{\theta_0}$ and  $\phi_{\theta^*}$. Let
\begin{equation}
  \label{eq:deltaphi_def}
 \Delta\phi(x)=\phi_{\theta^*}(x)-\phi_{\theta_0}(x),\quad \Sigma_\delta=\E_X\left[ \Delta\phi(X)\Delta\phi(X)^\tp \right].
\end{equation}
With this notation,
\begin{gather}
  r_t=\left(\phi_{\theta_0}(X_t)+\Delta\phi(X_t)\right)^\tp\mathbf{w}^{*}+\eta_t, \\
  \mathbf{w}_0=\mathbf{w}^*-\lambda \hsigl^{-1}\mathbf{w}^*+\hsigl^{-1} \frac{1}{T_1}\sum_{t=1}^{T_1} \left( \phi_{\theta_0}(X_t)\Delta\phi(X_t)^\tp\mathbf{w}^*+\phi_{\theta_0}(X_t)\eta_t \right). \label{eq:rewritew0}
\end{gather}
% Under this assumption, we have
% \begin{equation}
  % \norm{\mathbf{w}_0}<\frac{1}{\lambda}()B_{\mathbf{w}_0}
% \end{equation}
Let us now focus on the suboptimality gap of $\omega_0=(\wzero,\theta_0)$:
\begin{align}
  &\sqrt{\left(\risk(\omega_0)-\risk(\omega^*)\right)} \nonumber \\
  &\quad=\sqrt{\E_{X}\left[\left(\mathbf{w}_0^\tp\phi_{\theta_0}(X)-\mathbf{w}^{*\tp}\phi_{\theta^*}(X)\right)^2 \right]} \label{eq:boundloss} \\
                                              &\quad\le \sqrt{\E_X\left[ \left((\mathbf{w}_0-\mathbf{w}^*)^\tp\phi_{\theta_0}(X)\right)^2 \right]}+\sqrt{\E_X\left[\left( \mathbf{w}^{*\tp}\Delta\phi(X)\right)^2 \right]}   \\
                                              &\quad= \norm{\mathbf{w}_0-\mathbf{w}^*}_{\sigz}+\norm{\mathbf{w}^*}_{\sigd  }. \label{eq:twoterms}
\end{align}
In the first equality we used~\eqref{eq:risk_decomposed}. The first term in the above is the main character of Section~\ref{sec:ridge_lls}. The second term has a somewhat intuitive interpretation: it measures the average discrepancy between the true feature map $\phi_{\theta^*}$ and the initial one $\phi_{\theta_0}$ in the direction of the true weights $\mathbf{w}^*$. As it turns out, this term is an appropriate measure of misspecification of $\theta_0$ compared to  $\theta^*$, and we can derive the bounds on  $\norm{\wzero-\wstar}_{\hsigl}$ and $\norm{\wzero-\wstar}_{\sigz}$ using it. We will use notation
\begin{equation}
  \epsilon_0^2=\norm{\wstar}_{\sigd}^2=\E_X\left[ \left( \wstartp(\pzero(X)-\pstar(X)) \right) ^2 \right]. \label{eq:epsilonzero_def}
\end{equation}
Note that requiring a small $\epsilon_0$ is weaker than, for example, requiring small $\E_X\left[ \norm{\phi_{\theta_0}(X)-\phi_{\theta^*}(X)}^2 \right] $ or $\norm{\theta_0-\theta^*}^2$, although the latter will be required for SGD analysis later.

Now, we substitute~\eqref{eq:rewritew0} into the distance we are trying to bound:
\begin{align}
  &\norm{\wzero-\wstar}_{\hsigl} \nonumber \\
  &\quad=\left\Vert -\lambda\wstar+\frac{1}{T_1}\sum_{t=1}^{T_1}\left( \pzero(X_t)\Delta\phi(X_t)^\tp\wstar+\pzero(X_t)\eta_t\right)\right\Vert_{\hsigl^{-1}} \\
  &\quad\le \lambda\norm{\wstar}_{\hsigl^{-1}}+\left\Vert{\frac{1}{T_1}\sum_{t=1}^{T_1} \pzero(X_t)\Delta\phi(X_t)^\tp\wstar}\right\Vert_{\hsigl^{-1}}+\left\Vert{\frac{1}{T_1}\sum_{t=1}^{T_1} \pzero(X_t)\eta_t}\right\Vert_{\hsigl^{-1}}. \label{eq:hsiglnorm_decomp}
\end{align}
The first term here can be bounded by observing that $\hsigl^{-1}\le \lambda^{-1}I$, so
\begin{equation}
  \lambda\norm{\wstar}_{\hsigl^{-1}}\le \sqrt{\lambda} \norm{\wstar}. \label{eq:bound_lambdanorm}
\end{equation}
The second term in~\eqref{eq:hsiglnorm_decomp} is the cost of misspecification. The following lemma bounds it in terms of $\epsilon_0$.
\begin{lemma}[The cost of misspecification]\label{lem:misspec}
  For all $\delta>0$,
  \begin{equation}
    \PP\left[\left\Vert \frac{1}{T_1}\sum_{t=1}^{T_1} \left(\phi_{\theta_0}(X_t)\Delta\phi(X_t)^\tp\mathbf{w}^{*}\right)\right\Vert_{\hsigl^{-1}} < \sqrt{d\left( \epsilon_0^2+\epsilon_\delta \right)}\right] \ge 1-\delta, \label{eq:boundmisspec}
  \end{equation}
  where
  \begin{equation}
    \epsilon_{\delta  }=4B_\w ^2B_\phi^2\sqrt{\frac{1}{2T_1}\log \frac{1}{\delta}}.
  \end{equation}
\end{lemma}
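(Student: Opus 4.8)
The plan is to reduce the $\hsigl^{-1}$-weighted norm to a product of two scalar quantities and control each separately: one by a deterministic trace bound, the other by Hoeffding's inequality applied over the i.i.d.\ contexts.

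First I would introduce the scalars $a_t := \Delta\phi(X_t)^\tp\wstar\in\R$ and the vectors $b_t := \pzero(X_t)\in\R^d$, so that the vector inside the norm is $\frac{1}{T_1}\sum_{t=1}^{T_1} a_t b_t$. Since $\norm{a_t b_t}_{\hsigl^{-1}} = |a_t|\,\norm{b_t}_{\hsigl^{-1}}$, the triangle inequality in the $\hsigl^{-1}$-norm followed by the Cauchy--Schwarz inequality over $t$ gives
\begin{equation}
  \left\Vert \frac{1}{T_1}\sum_{t=1}^{T_1} a_t b_t \right\Vert_{\hsigl^{-1}}^2 \;\le\; \left(\frac{1}{T_1}\sum_{t=1}^{T_1} a_t^2\right)\left(\frac{1}{T_1}\sum_{t=1}^{T_1}\norm{\pzero(X_t)}_{\hsigl^{-1}}^2\right).
\end{equation}

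For the geometric (second) factor, using $\hsigz=\frac{1}{T_1}\sum_t\pzero(X_t)\pzero(X_t)^\tp$ and cyclicity of the trace, $\frac{1}{T_1}\sum_t\norm{\pzero(X_t)}_{\hsigl^{-1}}^2 = \tr(\hsigl^{-1}\hsigz)$, and since $\hsigz\preceq\hsigl$ this is at most $\tr(\hsigl^{-1}\hsigl)=d$. (One could keep the sharper quantity $\tr(\hsigl^{-1}\hsigz)$, an effective dimension, but $d$ is what the lemma states.) For the stochastic (first) factor, $a_t^2=(\wstartp\Delta\phi(X_t))^2$ are i.i.d.\ by Assumption~\ref{as:context}; they lie in $[0,4B_\w^2 B_\phi^2]$ by Assumption~\ref{as:bound} (because $\norm{\Delta\phi(X_t)}\le\norm{\pstar(X_t)}+\norm{\pzero(X_t)}\le 2B_\phi$ and $\norm{\wstar}\le B_\w$), and have mean $\epsilon_0^2$ by the definition~\eqref{eq:epsilonzero_def}. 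Hoeffding's inequality then yields $\PP[\frac{1}{T_1}\sum_t a_t^2\ge\epsilon_0^2+s]\le\exp\!\big(-2T_1 s^2/(4B_\w^2B_\phi^2)^2\big)$; setting the right-hand side equal to $\delta$ gives exactly $s=\epsilon_\delta$. Multiplying the two bounds, with probability at least $1-\delta$ the squared norm is strictly below $d(\epsilon_0^2+\epsilon_\delta)$, which is the claim.

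I do not expect a real obstacle: the computation is routine. The one step that carries the idea is the Cauchy--Schwarz split, which decouples the randomness of the contexts $X_t$ (absorbed into $\frac{1}{T_1}\sum_t a_t^2$, which concentrates around $\epsilon_0^2$) from the spectrum of the regularized empirical covariance $\hsigl$ (handled deterministically). A minor point to verify is that $\norm{\pzero(X_t)}\le B_\phi$ and $\norm{\pstar(X_t)}\le B_\phi$, which hold under Assumption~\ref{as:bound} provided $\norm{\theta_0}\le B_\theta$; this is consistent with E2TC initializing SGD inside $\dom_\theta$.
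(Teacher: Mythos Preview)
Your proposal is correct and follows essentially the same argument as the paper: triangle inequality plus Cauchy--Schwarz to split the norm into a stochastic factor $\frac{1}{T_1}\sum_t(\Delta\phi(X_t)^\tp\wstar)^2$ and a geometric factor $\tr(\hsigl^{-1}\hsigz)\le d$, then Hoeffding on the former. The paper phrases the first factor as $\norm{\wstar}_{\widehat{\Sigma}_\delta}^2$ with $\widehat{\Sigma}_\delta=\frac{1}{T_1}\sum_t\Delta\phi(X_t)\Delta\phi(X_t)^\tp$, but this is the same quantity and the concentration step is identical.
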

\begin{proof}
We define the empirical estimator of $\Sigma_\delta$ as
\begin{equation}
  \widehat{\Sigma}_\delta= \frac{1}{T_1}\sum_{t=1}^{T_1} \Delta\phi(X_t)\Delta\phi(X_t)^\tp.
\end{equation}
With this notation,
\begin{align}
  &\left\Vert{\frac{1}{T_1}\sum_{t=1}^{T_1} \left(\phi_{\theta_0}(X_t)\Delta\phi(X_t)^\tp\mathbf{w}^{*}\right)}\right\Vert_{\hsigl^{-1}} \nonumber \\
  &\qquad\qquad\qquad\le \frac{1}{T_1}\sum_{t=1}^{T_1} \left| \Delta\phi(X_t)^\tp\mathbf{w}^* \right| \norm{\phi_{\theta_0}(X_t)}_{\hsigl^{-1}} \\
  &\qquad\qquad\qquad\le \sqrt{\frac{1}{T^2_1}\sum_{t=1}^{T_1}\left(\Delta\phi(X_t)^\tp\mathbf{w}^*  \right)^2}\sqrt{\sum_{t=1}^{T_1} \norm{\phi_{\theta_0}(X_t)}_{\hsigl^{-1}}^2 } \\
  &\qquad\qquad\qquad= \frac{\norm{\mathbf{w}^*}_{\widehat{\Sigma}_\delta}}{\sqrt{T_1} }\sqrt{\tr\left[\hsigl^{-1}\sum_{t=1}^{T_1}\pzero(X_t)\pzero(X_t)^\tp   \right]}  \\
  &\qquad\qquad\qquad= \frac{\norm{\mathbf{w}^*}_{\widehat{\Sigma}_\delta}}{\sqrt{T_1} }\sqrt{T_1 \tr\left[I-\lambda \hsigl^{-1} \right]}\le \sqrt{d}\norm{\mathbf{w}^*}_{\widehat{\Sigma}_\delta}. \label{eq:wdeltamhat}
\end{align}
In~\eqref{eq:wdeltamhat}, we used that $\hsigz=\hsigl-\lambda I$ and the definition of $\hsigz$. Since $\mathbf{w}^*$ is not random, Hoeffding's inequality is sufficient to replace $\norm{\wstar}_{\widehat{\Sigma}_\delta}$ by $\norm{\mathbf{w}^*}_{\Sigma_\delta  }$. Let 
\begin{equation}
  Y_t=(\Delta\phi(X_t)^\tp\wstar)^2.
\end{equation}
These are i.i.d. random variables with mean $\E[Y_t]=\norm{\wstar}_{\sigd}^2$. Their empirical mean is 
\begin{equation}
  \frac{1}{T_1}\sum_{t=1}^{T_1}Y_t=\norm{\wstar}_{\widehat{\Sigma}_\delta}^2.
\end{equation}
We also have that  $0\le Y_t\le 4B_\w^2 B_\phi^2$ a.s. Hence, we can apply Hoeffding's inequality to $Y_t$. Introducing an event $E_\delta$, bound $\epsilon_\delta$, and a probability of failure $\delta$, we have
\begin{gather}
  E_{\delta  }=\left\{ \norm{\mathbf{w}^*}^2_{\widehat{\Sigma}_\delta}-\norm{\mathbf{w}^*}^2_{\Sigma_\delta  }<\epsilon_{\delta  } \right\},\quad\PP\left[ E_{\delta  } \right] \ge  1-\delta, \\
  \epsilon_{\delta  }=4B_\w ^2B_\phi^2\sqrt{\frac{1}{2T_1}\log \frac{1}{\delta}}.
\end{gather}
When $E_{\delta}$ holds, we can continue~\eqref{eq:wdeltamhat} as
\begin{equation}
  \sqrt{d}\norm{\mathbf{w}^*}_{\widehat{\Sigma}_\delta} < \sqrt{d\left(\norm{\mathbf{w}^*}^2_{\Sigma_\delta  }+\epsilon_\delta\right)}=\sqrt{d(\epsilon_0^2+\epsilon_\delta)} 
\end{equation}
and arrive at~\eqref{eq:boundmisspec}.  \[\]
\end{proof}

The next lemma bounds the third term in~\eqref{eq:hsiglnorm_decomp}. Since the proof closely follows the exposition at the beginning of Chapter~20 in~\cite{lattimore2020bandit}, we defer it to Appendix~\ref{ap:proof_wellspec_new}.
\begin{lemma}[Noise in well-specified linear bandits] \label{lem:wellspec_new}
  \begin{equation}
  \PP\left[\left\Vert{\frac{1}{T_1} \sum_{t=1}^{T_1} \phi_{\theta_0}(X_t)\eta_t}\right\Vert_{\hsigl^{-1}}<\epsilon_\eta \right] \ge 1-\delta, \label{eq:boundeta}
  \end{equation}
  where
\begin{equation}
  \epsilon_\eta=2\sqrt{\frac{d\log 6 + \log(1 /\delta)}{T_1}}.
\end{equation}
\end{lemma}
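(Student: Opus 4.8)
The plan is to recognise this as the standard noise‑concentration step in the confidence‑ellipsoid analysis of regularized least squares, i.e.\ exactly the material at the start of Chapter~20 of~\cite{lattimore2020bandit}, after first rescaling to account for the fact that $\hsigl$ is an \emph{averaged} rather than a \emph{summed} Gram matrix. Write $S=\sum_{t=1}^{T_1}\pzero(X_t)\eta_t$ and $V=\sum_{t=1}^{T_1}\pzero(X_t)\pzero(X_t)^\tp+\lambda T_1 I = T_1\hsigl$. Since $\hsigl^{-1}=T_1 V^{-1}$, a one‑line computation gives
\[
  \left\Vert \tfrac1{T_1}\textstyle\sum_{t=1}^{T_1}\pzero(X_t)\eta_t\right\Vert_{\hsigl^{-1}}^2 = \tfrac1{T_1}\,S^\tp V^{-1}S = \tfrac1{T_1}\Vert S\Vert_{V^{-1}}^2 ,
\]
so it suffices to show $\Vert S\Vert_{V^{-1}}^2\lesssim d\log 6 + \log(1/\delta)$ with probability at least $1-\delta$, which then yields the claimed $\epsilon_\eta$ after dividing by $T_1$ and taking a (slightly loose) square root.

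Next I would set up the martingale structure needed for the self‑normalized bound. By Assumptions~\ref{as:context} and~\ref{as:realizability} the contexts $\{X_t\}$ (hence $\{\pzero(X_t)\}$) are independent of the noise $\{\eta_t\}$, so I can condition on all of $X_1,\dots,X_{T_1}$ and take the filtration $\mathcal{G}_t=\sigma(X_1,\dots,X_{T_1},\eta_1,\dots,\eta_t)$. Then each $\pzero(X_t)$ is $\mathcal{G}_{t-1}$‑measurable (predictable), $\E[\eta_t\mid\mathcal{G}_{t-1}]=0$, and $\eta_t\mid\mathcal{G}_{t-1}$ is subgaussian since $\abs{\eta_t}\le B_\eta$ by Assumption~\ref{as:bound}. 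The self‑normalized martingale tail inequality underlying the Chapter~20 confidence ellipsoid of~\cite{lattimore2020bandit}, applied with $V_0=\lambda T_1 I$, then gives, with probability at least $1-\delta$,
\[
  \Vert S\Vert_{V^{-1}}^2 \;\le\; 2\log\!\left(\frac{\det(V)^{1/2}\det(\lambda T_1 I)^{-1/2}}{\delta}\right) \;=\; \log\frac{\det V}{(\lambda T_1)^d} + 2\log\frac1\delta ,
\]
with the subgaussian scale $B_\eta^2$ absorbed into the constant under the boundedness normalization.

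Finally I would bound the log‑determinant. By Assumption~\ref{as:bound}, $\tr\!\big(\sum_t\pzero(X_t)\pzero(X_t)^\tp\big)=\sum_t\norm{\pzero(X_t)}^2\le T_1 B_\phi^2$, so the arithmetic–geometric mean inequality applied to the eigenvalues of $V$ gives $\det V\le(\lambda T_1 + T_1 B_\phi^2/d)^d$, hence $\det V/(\lambda T_1)^d\le (1+B_\phi^2/(\lambda d))^d$; under the boundedness of $B_\phi$ and a mild lower bound on $\lambda$ this is at most $6^d$. Substituting, $\Vert S\Vert_{V^{-1}}^2\le d\log 6 + 2\log(1/\delta)\le 2\big(d\log6+\log(1/\delta)\big)$, and therefore $\left\Vert \tfrac1{T_1}\sum_t\pzero(X_t)\eta_t\right\Vert_{\hsigl^{-1}}\le 2\sqrt{(d\log6+\log(1/\delta))/T_1}=\epsilon_\eta$, as claimed.

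The main obstacle is not any single estimate but the bookkeeping. One must (i) be careful that $\pzero(X_t)$ is predictable with respect to the noise increments — here made painless by the independence of contexts and noise, which lets us condition on all contexts up front — so that the imported self‑normalized inequality actually applies; and (ii) track how the problem constants ($B_\eta$ from the noise, $B_\phi$ and $\lambda$ from the determinant) collapse into the clean form stated, which is where the loose factor $2$ and the constant $\log 6$ come from. The self‑normalized tail inequality itself is used as a black box from~\cite{lattimore2020bandit}.
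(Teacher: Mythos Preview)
Your approach is a genuine alternative to the paper's, but it has a real gap at the determinant step. You claim $\det V/(\lambda T_1)^d\le(1+B_\phi^2/(\lambda d))^d\le 6^d$ ``under \ldots\ a mild lower bound on $\lambda$''; concretely this needs $\lambda\ge B_\phi^2/(5d)$. The lemma, however, is stated and used for arbitrary $\lambda>0$, and elsewhere in the thesis $\lambda$ is taken as small as $T_1^{-1/2}$ or $O((\log T_1)/T_1)$. In that regime your log-determinant is of order $d\log T_1$, not $d\log 6$, so the stated $\epsilon_\eta$ does not follow from your argument. The self-normalized bound is the right black box, but it cannot by itself produce a $\lambda$-independent constant here; the $\lambda$-dependence is intrinsic to the $\det V/\det V_0$ term.

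The paper avoids this entirely by \emph{not} going through the determinant. It conditions on $X_1,\dots,X_{T_1}$ (as you do), observes that for any fixed direction $\mathbf{x}$ the sum $\mathbf{x}^\tp\hsigl^{1/2}(\wzero-\hwl)$ is subgaussian with parameter $B_\eta^2/T_1$ (the $\hsigl$-weighting cancels exactly, which is where the $\lambda$-dependence disappears), then takes a union bound over an $\epsilon$-net of $S^{d-1}$ with $\epsilon=1/2$ and $|C_{1/2}|\le 6^d$, and finishes with the usual $1/(1-\epsilon)$ extension to all directions. The $\log 6$ is the covering constant, not a determinant ratio. If you want to keep your self-normalized route, you would need to replace $\epsilon_\eta$ by something like $B_\eta\sqrt{(d\log(1+B_\phi^2/(\lambda d))+2\log(1/\delta))/T_1}$, which is a different (and, for the later theorems, worse) bound.
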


We are now ready to continue the chain of inequalities~\eqref{eq:hsiglnorm_decomp}. Using~\eqref{eq:bound_lambdanorm}, Lemma~\ref{lem:misspec}, and Lemma~\ref{lem:wellspec_new}, we have, with $\PP\ge 1-2\delta$,
\begin{equation}
  \norm{\wzero-\wstar}_{\hsigl}\le \sqrt{d(\epsilon_0^2+\epsilon_\delta)}+\epsilon_\eta+\sqrt{\lambda} \norm{\wstar}. \label{eq:highp_bound_hsigl}
\end{equation}
This is the result we needed. Asymptotically, it reads as
\begin{equation}
  \norm{\wzero-\wstar}_{\hsigl}^2=\widetilde{O}\left(\epsilon_0^2d+\frac{d}{\sqrt{T_1} }+\lambda\right),
\end{equation}
where $\widetilde{O}$ hides the dependence on $\log(1 /\delta)$. At this point, we could also convert this bound to one on $\norm{\wzero-\wstar}_{\sigl}$ by using a bound on $\norm{\sigz-\hsigz}_2$, provided by Lemma~\ref{lem:matmomentum}. With a suitable $\lambda$, this would give a guarantee  $\norm{\wzero-\wstar}_{\sigl}^2=O(\epsilon_0^2d+d /\sqrt{T_1} )$. This turns out to be suboptimal. In the next section, we use more advanced techniques to show that this bound can be improved to $\widetilde{O}(\epsilon_0^2d+ d /T_1)$ or to $\widetilde{O}(\epsilon_0^2d+1 /\sqrt{T_1} )$ with high probability, depending on the regularization.

\section{Random design Ridge regression}
\label{sec:ridge_lls}
The approach we outlined in the last paragraph of the previous section relies on a bound on $\norm{\sigz-\hsigz}_2$ in covariance estimation. As it turns out, we can instead rely on the \emph{relative} error in covariance estimation $\norm{\sigl^{1 /2}\hsigl^{-1}\sigl^{1 /2}}_2$. This idea is exploited in the work~\cite{hsu2012random}, on which we base the analysis in this section. 

When the weights of the feature map $\phi$ are fixed at $\theta_0$, the estimate $\mathbf{w}_0$ given by~\eqref{eq:wzero_newdef} is solving the Ridge regression problem
\begin{equation}
  \mathbf{w}_0=\argmin_{\mathbf{w}}\frac{1}{T_1}\sum_{t=1}^{T_1} \left(\mathbf{w}^\tp\pzero(X_t)-r_t\right)^2+\lambda\norm{\mathbf{w}}^2.
\end{equation}
The way our algorithm is designed, $\pzero(X_t)$ are i.i.d. random variables for $t$ from  $1$ to $T_1$. Therefore, the problem can be framed as a \emph{random design} ridge regression. This problem is studied in the literature, with the goal of bounding the difference of the estimate $\wzero$ to the minimizer $\widetilde{\w}$ of the true square error. In our notation, this minimizer is given by
\begin{align}
  \widetilde{\w}&\in\argmin_\w \E_{X,\eta}\left[\left( \w^\tp\pzero(X)-\wstartp\pstar(X)-\eta \right) ^2\right] \label{eq:wtildef} \\
  &= \argmin_\w \E_{X}\left[\left( \w^\tp\pzero(X)-\wstartp\pstar(X)\right) ^2\right].
\end{align}
Authors of~\cite{hsu2012random} give a high-probability upper bound on the \emph{excess risk} of $\wzero$, which can be conveniently expressed as
\begin{multline}
  \E_{X}\left[\left( \wzero^\tp\pzero(X)-\wstartp\pstar(X)\right) ^2\right]-\E_{X}\left[\left( \widetilde{\w}^\tp\pzero(X)-\wstartp\pstar(X)\right) ^2\right] \\
  =\norm{\wzero-\widetilde{\w}}_{\sigz}^2. \label{eq:exrisk}
\end{multline}
Note that the ambiguity of $\argmin$ in~\eqref{eq:wtildef} is such that all minimizers produce the same excess risk landscape: for any $\w\in\R^d$ and two minimizers $\widetilde{\w}_1,\widetilde{\w}_2$ of~\eqref{eq:wtildef}, it holds that
\begin{equation}
  \norm{\w-\widetilde{\w}_1}_{\sigz}^2=\norm{\w-\widetilde{\w}_2}_{\sigz}^2,
\end{equation}
because $\widetilde{\w}_1-\widetilde{\w}_2$ lies in the nullspace of $\sigz$. The bounds on the excess risk in~\cite{hsu2012random} are given in terms of the \emph{effective dimension} of the problem,
\begin{equation}
  d_{p,{\lambda}}=\sum_{j=1}^{d} \left(\frac{\lambda_j}{\lambda_j+{\lambda}}\right)^p\quad\text{for $p\in\{1,2\}$}, \label{eq:deffot_def}
  \end{equation}
  where $\{\lambda_j\}_j$ are the eigenvalues of  $\sigz$. It always holds that $d_{p,\lambda}\le d$, but we try to preserve $d_{p,\lambda}$ whenever possible, because sometimes this allows to substitute explicit $d$ by a bound on the norm of $\wstar$ or $\phi_\theta(X)$. The following inequalities will be helpful for this:
\begin{equation}
  d_{1,\lambda}\le \frac{\tr \sigz}{\lambda},\quad d_{2,\lambda}\le \frac{\tr \sigz}{2\lambda},\quad \tr \sigz=\E_X\left[ \norm{\pzero(X)}^2 \right]  \le B_\phi^2. \label{eq:boundeffdim}
\end{equation}
Apart from the notions of effective dimension above, we will use another two. The first is introduced in~\cite{hsu2012random}. The second we introduce ourselves.
\begin{equation}
  \ceff=\max\left\{ \deffo,1 \right\},\qquad  \dht=\sum_{j=1}^{d} \left(\frac{\lambda}{\lambda_j+\lambda} \right)^2 \le d. \label{eq:extra_effdim_def}
\end{equation}
We will denote the unit eigenvector of $\sigz$ corresponding to  $\lambda_i$ as  $\mathbf{v}_i$. $\sigz$ admits a spectral decomposition
\begin{equation}
  \sigz=\sum_{i=1}^{d} \lambda_i \mathbf{v}_i\mathbf{v}_i^\tp.
\end{equation}
Using the definition~\eqref{eq:epsilonzero_def} of $\epsilon_0$, bounds on the excess risk~\eqref{eq:exrisk} translate into bounds on our excess loss of the estimated weights. We first note that by definition of $\widetilde{\w}$,
  \begin{equation}
    \E_X\left[\left( \widetilde{\w}^\tp\pzero(X)-\wstartp\pstar(X)\right) ^2  \right] \le \E_X\left[\left( \wstartp\pzero(X)-\wstartp\pstar(X)\right) ^2  \right] = \epsilon_0^2.\label{eq:boundapprox}
\end{equation}
Next, continuing~\eqref{eq:boundloss},
\begin{align}
  &\sqrt{\risk(\omega_0)-\risk(\omega^*)} \nonumber\\
  &\quad=\sqrt{\E_{X}\left[\left(\mathbf{w}_0^\tp\phi_{\theta_0}(X)-\mathbf{w}^{*\tp}\phi_{\theta^*}(X)\right)^2 \right]} \\
  &\quad\le \sqrt{\E_{X}\left[\left(\mathbf{w}_0^\tp\phi_{\theta_0}(X)-\widetilde{\w}^{\tp}\phi_{\theta_0}(X)\right)^2 \right]}+\sqrt{\E_{X}\left[\left(\widetilde{\w}^{\tp}\phi_{\theta_0}(X)-\mathbf{w}^{*\tp}\phi_{\theta^*}(X)\right)^2 \right]} \\
  &\quad\le\norm{\wzero-\widetilde{\w}}_{\sigz}+\epsilon_0. \label{eq:ridgeboundrisk}
\end{align}
% Next, continuing~\eqref{eq:twoterms},
% \begin{align}
  % \left(f(\omega_0)-f(\omega^*)\right)^{1 / 2} &\le \norm{\mathbf{w}_0-\mathbf{w}^*}_{\sigz}+\norm{\mathbf{w}^*}_{\Sigma_\delta  } \\
                                               % & \le \norm{\wzero- \widetilde{\w}}_{\sigz}+\norm{\widetilde{\w}-\wstar}_{\sigz}+\epsilon_0 \\
                                               % &\le \norm{\wzero- \widetilde{\w}}_{\sigz}+\E_X\left[ \left( \widetilde{\w}^\tp\pzero(X)-\wstartp\pstar(X) \right)^2  \right] ^{1 /2}\\
                                               % &\qquad\qquad\qquad+\E_X\left[ \left(\wstartp\pstar(X)-\wstartp\pzero(X) \right)^2  \right] ^{1 /2}+\epsilon_0 \\
                                               % &\le \norm{\wzero- \widetilde{\w}}_{\sigz}+3\epsilon_0.
% \end{align}
The term $\norm{\wzero-\widetilde{\w}}_{\sigz}$ is analyzed in~\cite{hsu2012random} by decomposing it into three parts, each corresponding to an error coming from a specific source. The first source is the \emph{regularization} error that stems from the introduction of $\lambda$. Let
  \begin{align}
    \w^\lambda &= \argmin_\w \left\{\E_X\left[\left(\w^\tp\pzero(X)-\wstartp\pstar(X)\right)^2\right]+{\lambda}\norm{\w}^2\right\} \label{eq:wlam_def} \\
               &= \left(\sigz+{\lambda} I\right)^{-1}\E_X\left[ \pzero(X)\pstar(X)^\tp \right]\wstar \label{eq:matrix_expr_wlambda} \\
               &= \sum_{i=1}^{d} \frac{\mathbf{v}_i^\tp\E_X\left[ \pzero(X)\pstar(X)^\tp \right]\wstar}{\lambda + \lambda_i} \mathbf{v}_i
  \end{align}
  be the minimizer of the true Ridge loss. If we introduce
\begin{equation}
  \alpha_i=\mathbf{v}_i^\tp\E_X\left[ \pzero(X)\pstar(X)^\tp \right]\wstar, \label{eq:alphadef}
\end{equation}
$\w^\lambda$ can be expressed as
\begin{equation}
  \w^\lambda=\sum_{i=1}^{d} \frac{\alpha_i}{\lambda+\lambda_i}\mathbf{v}_i. \label{eq:wlamexpr}
\end{equation}
We also introduce the conditional expectation of $\wzero$,
\begin{equation}
  \w^c=\E_X\left[ \wzero|X_1,\ldots,X_{T_1} \right].
\end{equation}
The error is now split into three terms:
  \begin{equation}
    \norm{\wzero- \widetilde{\w}}_{\sigz}^2\le 3\left(\underbrace{\Vert{\widetilde{\w}-\mathbf{w}^\lambda\Vert}^2_{\sigz}}_{=:\epsilon_{rg}}+\underbrace{\Vert{\w^\lambda-\w^c\Vert}^2_{\sigz}}_{=:\epsilon_{bs}}+\underbrace{\Vert{\w^c-\w_0\Vert}^2_{\sigz}}_{=:\epsilon_{vr}}\right), \label{eq:introerrors}
  \end{equation}
  where  $\epsilon_{rg}$ is the regularization error (coming from the introduction of ${\lambda}$), $\epsilon_{bs}$ is the bias error, introduced by the randomness of the design, and $\epsilon_{vr}$ is the variance error, coming from noise $\eta_t$. Alternatively to the above decomposition, we could bound the excess risk using~\eqref{eq:twoterms} as
\begin{align}
  \sqrt{\risk(\omega_0)-\risk(\omega^*)}&\le \norm{\wzero-\wstar}_{\sigz}+\epsilon_0 \\
                                             &\le \sqrt{3\Vert\wstar-\w^\lambda\Vert^2_{\sigz}+3\epsilon_{bs}+3\epsilon_{vr}} +\epsilon_0. \label{eq:altriskbound}
\end{align}
Both $\Vert\wstar-\w^\lambda\Vert^2_{\sigz}$ and $\Vert\widetilde{\w}-\w^\lambda\Vert^2_{\sigz}$ can be bounded  in terms of $\lambda$ and  $\epsilon_0$, as the following two lemmas show.
\begin{lemma}[Regularization error in presence of misspecification] \label{lem:ridgeregerr}
  \begin{equation}
    \epsilon_{rg}=\Vert\w^\lambda-\widetilde{\w}\Vert_{\sigz}^2 \le \frac{\lambda}{2}\norm{\wstar}^2+2\epsilon_0^2 \dht.
  \end{equation}
\end{lemma}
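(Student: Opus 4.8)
The plan is to diagonalize everything in the eigenbasis $\{\gv_i\}_{i=1}^d$ of $\sigz$ and reduce $\epsilon_{rg}$ to a sum of scalar terms. First I would record the normal equation for $\widetilde{\w}$: differentiating the quadratic in~\eqref{eq:wtildef} gives $\sigz\widetilde{\w}=\E_X[\pzero(X)\pstar(X)^\tp]\wstar$, hence $\lambda_i(\gv_i^\tp\widetilde{\w})=\alpha_i$ with $\alpha_i$ as in~\eqref{eq:alphadef}. When $\lambda_i=0$ we have $\gv_i^\tp\pzero(X)=0$ a.s., so $\alpha_i=0$ and this direction contributes nothing to $\norm{\cdot}_{\sigz}^2$; for $\lambda_i>0$ we get $\gv_i^\tp\widetilde{\w}=\alpha_i/\lambda_i$. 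Combining with the spectral form~\eqref{eq:wlamexpr} of $\w^\lambda$, a direct computation gives
\[
  \epsilon_{rg}=\norm{\w^\lambda-\widetilde{\w}}_{\sigz}^2=\sum_{i:\lambda_i>0}\lambda_i\left(\frac{\alpha_i}{\lambda+\lambda_i}-\frac{\alpha_i}{\lambda_i}\right)^2=\sum_{i:\lambda_i>0}\frac{\lambda^2\alpha_i^2}{\lambda_i(\lambda+\lambda_i)^2}.
\]

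Next I would split $\alpha_i$ into a well-specified part and a misspecification part. Writing $\pstar=\pzero+\Delta\phi$ with $\Delta\phi$ as in~\eqref{eq:deltaphi_def}, one obtains $\alpha_i=\gv_i^\tp\sigz\wstar+\E_X[(\gv_i^\tp\pzero(X))(\Delta\phi(X)^\tp\wstar)]=\lambda_i\beta_i+\gamma_i$, where $\beta_i:=\gv_i^\tp\wstar$ and $\gamma_i:=\E_X[(\gv_i^\tp\pzero(X))(\Delta\phi(X)^\tp\wstar)]$. Applying $(\lambda_i\beta_i+\gamma_i)^2\le 2\lambda_i^2\beta_i^2+2\gamma_i^2$ splits the bound into two sums. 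For the $\beta$-part, AM--GM gives $(\lambda+\lambda_i)^2\ge 4\lambda\lambda_i$, so $\frac{\lambda^2\lambda_i^2\beta_i^2}{\lambda_i(\lambda+\lambda_i)^2}=\frac{\lambda\lambda_i}{(\lambda+\lambda_i)^2}\,\lambda\beta_i^2\le\tfrac{\lambda}{4}\beta_i^2$ (the estimate $\tfrac{\lambda\lambda_i}{(\lambda+\lambda_i)^2}\le\tfrac14$ also holds trivially when $\lambda_i=0$), and $\sum_i\beta_i^2=\norm{\wstar}^2$ yields the term $\tfrac{\lambda}{2}\norm{\wstar}^2$ after the factor $2$. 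For the $\gamma$-part, Cauchy--Schwarz gives $\gamma_i^2\le\E_X[(\gv_i^\tp\pzero(X))^2]\,\E_X[(\Delta\phi(X)^\tp\wstar)^2]=\lambda_i\epsilon_0^2$ by~\eqref{eq:epsilonzero_def}, whence $\frac{\lambda^2\gamma_i^2}{\lambda_i(\lambda+\lambda_i)^2}\le\epsilon_0^2\left(\frac{\lambda}{\lambda_i+\lambda}\right)^2$; summing over $i$ and recognizing $\dht$ from~\eqref{eq:extra_effdim_def} gives $2\epsilon_0^2\dht$. Adding the two pieces proves the claim.

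The only genuine subtlety I anticipate is the bookkeeping around the nullspace of $\sigz$: one must check that the $\argmin$ ambiguity in $\widetilde{\w}$ does not matter (already noted after~\eqref{eq:wtildef}) and that $\alpha_i=0$ whenever $\lambda_i=0$, so that the displayed reduction of $\epsilon_{rg}$ is legitimate and depends only on the $\lambda_i>0$ directions. Everything else is elementary --- a single scalar AM--GM estimate and one application of Cauchy--Schwarz --- and the two estimates are tuned precisely to produce the constants $\tfrac12$ and $2$ appearing in the statement.
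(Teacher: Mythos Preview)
Your proof is correct and follows essentially the same approach as the paper: diagonalize in the eigenbasis of $\sigz$, split $\alpha_i$ into the well-specified and misspecified parts via $\pstar=\pzero+\Delta\phi$, then apply $(a+b)^2\le 2a^2+2b^2$, the AM--GM bound $\lambda\lambda_i/(\lambda+\lambda_i)^2\le 1/4$, and Cauchy--Schwarz for $\gamma_i^2\le\lambda_i\epsilon_0^2$. The only difference is cosmetic: you handle the nullspace of $\sigz$ directly by observing that $\lambda_i=0$ forces $\gv_i^\tp\pzero(X)=0$ a.s.\ and hence $\alpha_i=0$, whereas the paper first assumes $\sigz$ is nonsingular and then passes to the limit $\lambda_i\to 0$ at the end.
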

When misspecification is not present, i.e. $\theta_0=\theta^*$, we have $\wstar=\widetilde{\w}$, and we can similarly bound $\epsilon_{rg}\le \lambda\norm{\w^*}^2/4$, hence the name of the lemma.
\begin{proof}
  Let us for now assume that $\sigz$ is nonsingular. Then, $\lambda_i>0$ for all $i$, and the inverse of $\sigz$ is expressed as
\begin{equation}
  \sigz^{-1}=\sum_{i=1}^{d} \lambda_i^{-1}\mathbf{v}_i\mathbf{v}_i^\tp.
\end{equation}
The minimizer of~\eqref{eq:wtildef} is now unique and given by
\begin{align}
  \widetilde{\w} &=\sigz^{-1}\E_X\left[ \pzero(X)\pstar(X)^\tp \right]\wstar = \sum_{i=1}^{d} \frac{\alpha_i}{\lambda_i}\mathbf{v}_i. \label{eq:wtilexpr}
\end{align}
$\alpha_i$ can be bounded as follows. Using $(a+b)^2\le 2a^2+2b^2$,
\begin{align}
  \alpha_i^2 &\le 2 \left( \mathbf{v}_i^\tp\E_X\left[ \pzero(X)\pzero(X)^\tp \right]\wstar \right) ^2+2\left( \mathbf{v}_i^\tp\E_X\left[ \pzero(X)\Delta\phi(X)^\tp \right]\wstar\right)^2. \\
 \intertext{Remembering that $\sigz=\E_X\left[ \pzero(X)\pzero(X)^\tp \right]$ and that $\mathbf{v}_i$ is its eigenvector,}
  &=  2 \lambda_i^2 \left( \mathbf{v}_i^\tp \wstar \right) ^2+2\E_X\left[  \mathbf{v}_i^\tp\pzero(X)\Delta\phi(X)^\tp\wstar  \right] ^2 \\
  &\le 2 \lambda_i^2 \left( \mathbf{v}_i^\tp \wstar \right)^2+2\E_X\left[  \left(\mathbf{v}_i^\tp\pzero(X)\right)^2\right]\E_X\left[\left(\Delta\phi(X)^\tp\wstar\right)^2\right].
\end{align}
Using the definitions of $\mathbf{v}_i$ and $\epsilon_0$ once again,
\begin{align}
  \alpha_i^2&\le 2 \lambda_i^2 \left( \mathbf{v}_i^\tp \wstar \right)^2+2\epsilon_0^2\lambda_i. \label{eq:boundalpha}
\end{align}
With the help of decompositions~\eqref{eq:wlamexpr} and~\eqref{eq:wtilexpr}, we can express $\epsilon_{rg}$ as
\begin{align}
\epsilon_{rg}&=\sum_{i=1}^{d} \lambda_i\left( \frac{\alpha_i}{\lambda_i}-\frac{\alpha_i}{\lambda+\lambda_i} \right)^2   \\
             &= \sum_{i=1}^{d} \frac{\lambda^2 \alpha_i^2}{\lambda_i\left( \lambda+\lambda_i \right)^2 }\\
             &\le 2 \sum_{i=1}^{d} \frac{\lambda^2\lambda_i}{\left( \lambda+\lambda_i \right)^2 }\left( \mathbf{v}_i^\tp \wstar \right)^2+2\epsilon_0^2\sum_{i=1}^{d} \frac{\lambda^2}{\left(\lambda+\lambda_i\right)^2} \\
             &\le \frac{\lambda}{2}\norm{\wstar}^2+2\epsilon_0^2 \dht.
\end{align}
In the last inequality we used that for $a,b>0$, $ab/(a+b)^2\le 1 /4$. As long as $\lambda>0$,  $\dht$ has a finite limit when  $\lambda_i\to 0$. Taking this limit, we can see that the bound holds for singular  $\sigz$ as well. \[\]
\end{proof}

A similar bound could be given for the weighted distance $\Vert \w^\lambda-\wstar\Vert_{\sigz}^2$:
\begin{lemma}[Regularization error w.r.t. $\wstar$]
  \label{lem:ridgeregerr_wstar}
  \begin{equation}
    \Vert \w^\lambda-\wstar\Vert_{\sigz}^2 \le \frac{\lambda}{2}\norm{\wstar}^2+2\epsilon_0^2 \defft.
  \end{equation}
\end{lemma}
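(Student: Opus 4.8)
The plan is to mirror the proof of Lemma~\ref{lem:ridgeregerr}, working in an orthonormal eigenbasis $\{\mathbf{v}_i\}$ of $\sigz$, but now comparing $\w^\lambda$ to $\wstar$ rather than to $\widetilde{\w}$. Writing $\wstar=\sum_i(\mathbf{v}_i^\tp\wstar)\mathbf{v}_i$ and, via~\eqref{eq:wlamexpr}, $\w^\lambda=\sum_i\frac{\alpha_i}{\lambda+\lambda_i}\mathbf{v}_i$, the target becomes
\begin{equation}
  \Vert\w^\lambda-\wstar\Vert_{\sigz}^2=\sum_{i=1}^{d}\lambda_i\left(\frac{\alpha_i}{\lambda+\lambda_i}-\mathbf{v}_i^\tp\wstar\right)^2.
\end{equation}
The key step is to split $\alpha_i$ using $\pstar=\pzero+\Delta\phi$: since $\E_X[\pzero(X)\pstar(X)^\tp]=\sigz+\E_X[\pzero(X)\Delta\phi(X)^\tp]$, we get $\alpha_i=\lambda_i(\mathbf{v}_i^\tp\wstar)+\beta_i$ with $\beta_i:=\mathbf{v}_i^\tp\E_X[\pzero(X)\Delta\phi(X)^\tp]\wstar$, so the shrinkage factor telescopes to $\frac{\alpha_i}{\lambda+\lambda_i}-\mathbf{v}_i^\tp\wstar=\frac{\beta_i-\lambda\,\mathbf{v}_i^\tp\wstar}{\lambda+\lambda_i}$.

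I would then apply $(a+b)^2\le 2a^2+2b^2$ to separate the pure regularization term from the misspecification term. For the former, $\sum_i\frac{\lambda_i\lambda^2(\mathbf{v}_i^\tp\wstar)^2}{(\lambda+\lambda_i)^2}\le\frac{\lambda}{4}\norm{\wstar}^2$ by $ab/(a+b)^2\le 1/4$, contributing $\frac{\lambda}{2}\norm{\wstar}^2$ after the factor~$2$. For the latter, Cauchy--Schwarz gives $\beta_i^2\le\E_X[(\mathbf{v}_i^\tp\pzero(X))^2]\cdot\E_X[(\Delta\phi(X)^\tp\wstar)^2]=\lambda_i\epsilon_0^2$ (using $\mathbf{v}_i^\tp\sigz\mathbf{v}_i=\lambda_i$ and the definition~\eqref{eq:epsilonzero_def} of $\epsilon_0$), so $\sum_i\frac{\lambda_i\beta_i^2}{(\lambda+\lambda_i)^2}\le\epsilon_0^2\sum_i\frac{\lambda_i^2}{(\lambda+\lambda_i)^2}=\epsilon_0^2\defft$, contributing $2\epsilon_0^2\defft$. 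Adding the two pieces yields the claimed bound.

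The only delicate point — the ``hard part'' such as it is — is the bookkeeping that produces $\defft=d_{2,\lambda}$ here instead of the $\dht$ that appeared in Lemma~\ref{lem:ridgeregerr}: it is precisely the pairing of the weight $\lambda_i$ with $\beta_i^2\lesssim\lambda_i$ that raises the exponent on $\lambda_i/(\lambda_i+\lambda)$ to $2$. I would also remark that, in contrast to Lemma~\ref{lem:ridgeregerr}, no nonsingularity assumption or limiting argument on $\sigz$ is really needed: eigendirections with $\lambda_i=0$ contribute nothing to $\norm{\cdot}_{\sigz}^2$ and $\defft$ is continuous as $\lambda_i\to 0$, so the derivation above applies verbatim to singular $\sigz$ as well.
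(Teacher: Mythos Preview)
Your proposal is correct and follows essentially the same approach as the paper's proof: the paper expands in the eigenbasis, writes $\alpha_i-(\lambda+\lambda_i)\mathbf{v}_i^\tp\wstar$, splits via $(a+b)^2\le 2a^2+2b^2$ into the $\lambda$-term and the term $(\alpha_i-\lambda_i\mathbf{v}_i^\tp\wstar)^2$ (which is exactly your $\beta_i^2$), and then bounds the latter by $\lambda_i\epsilon_0^2$ via Cauchy--Schwarz. Your additional remark that no nonsingularity/limiting argument is needed here is a nice observation not made explicit in the paper.
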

Compared to Lemma~\ref{lem:ridgeregerr}, the effective dimension $\dht$ is replaced by  $\defft$. As we shall see later, both bounds can be useful in different contexts.
\begin{proof}
  Using~\eqref{eq:wlamexpr} once again,
  \begin{align}
    % \Vert \w^\lambda-\wstar\Vert_{M(\omega_0)}^2 &= \norm{\sigz^{1 /2}\big(\sigl\big)^{-1}\big( \E_X\left[ \pzero(X)\pstar(X)^\tp\right]-\E_X\big[ \pzero(X)\pzero(X)^\tp \big] -\lambda I  \big)\wstar  }^2  \\
    \Vert \w^\lambda-\wstar\Vert_{\sigz}^2 &= \sum_{i=1}^{d} \lambda_i\left( \frac{\alpha_i}{\lambda + \lambda_i}-\mathbf{v}_i^\tp \wstar \right) ^2 \\
    &=  \sum_{i=1}^{d} \frac{\lambda_i}{(\lambda+\lambda_i)^2}\left( \alpha_i-(\lambda+\lambda_i)\mathbf{v}_i^\tp\wstar\right)^2 \\
    &\le 2\sum_{i=1}^{d} \frac{\lambda_i\lambda^2}{(\lambda+\lambda_i)^2}\left( \mathbf{v}_i^\tp\wstar \right) ^2 +2\sum_{i=1}^{d} \frac{\lambda_i}{(\lambda+\lambda_i)^2}(\alpha_i-\lambda_i \mathbf{v}_i^\tp\wstar)^2 \\
    &\le \frac{\lambda}{2}\norm{\wstar}^2+2\sum_{i=1}^{d} \frac{\lambda_i}{(\lambda+\lambda_i)^2}(\alpha_i-\lambda_i \mathbf{v}_i^\tp\wstar)^2. \label{eq:wstarbound_decomp}
  \end{align} 
  In the first inequality, we used $(a+b)^2\le 2a^2+2b^2$. In the second --- that $\lambda\lambda_i/(\lambda+\lambda_i)^2 \le 1/4$. We will now focus on the remaining sum. Using the definition~\eqref{eq:alphadef} of $\alpha_i$,
  \begin{align}
    % (\alpha_i-\lambda_i \mathbf{v}_i^\tp\wstar)^2 &= \left( \mathbf{v}_i^\tp\big(\E_X\left[ \pzero(X)\pstar(X)^\tp \right] -\E_X\big[\pzero(X)\pzero(X)^\tp\big]\mathbf{v}_i\mathbf{v}_i^\tp\big)\wstar \right)^2  \\
    (\alpha_i-\lambda_i \mathbf{v}_i^\tp\wstar)^2 &= \left( \mathbf{v}_i^\tp\big(\E_X\left[ \pzero(X)\pstar(X)^\tp \right] -\lambda_i I\big)\wstar \right)^2  \\
    &= \left( \mathbf{v}_i^\tp\big(\E_X\left[ \pzero(X)\pzero(X)^\tp \right] -\lambda_i I +\E_X\left[ \pzero(X)\Delta\phi(X)^\tp \right]\big)\wstar \right)^2 \\
    &= \left( \lambda_i\mathbf{v}_i^\tp\wstar-\lambda_i\mathbf{v}_i^\tp\wstar+\mathbf{v}_i^\tp\E_X\left[ \pzero(X)\Delta\phi(X)^\tp \right]\wstar \right)^2 \label{eq:usedeflam} \\
    &= \left(\mathbf{v}_i^\tp\E_X\left[ \pzero(X)\Delta\phi(X)^\tp \right]\wstar \right)^2 \\
    &\le \E_X\left[ \left( \mathbf{v}_i^\tp\pzero(X) \right) ^2 \right] \E_X\left[ \left( \wstartp\Delta\phi(X) \right)^2  \right] \\
    &= \epsilon_0^2\lambda_i\label{eq:usedeflam_again}.
  \end{align}
  In~\eqref{eq:usedeflam} and~\eqref{eq:usedeflam_again}, we used the definitions of $\lambda_i$ and  $\mathbf{v}_i$, and  in~\eqref{eq:usedeflam_again} we also used the definition of $\epsilon_0$. Substituting this into~\eqref{eq:wstarbound_decomp},
  \begin{equation}
    \Vert \w^\lambda-\wstar\Vert_{\sigz}^2\le \frac{\lambda}{2}\norm{\wstar}^2+2\epsilon_0^2\sum_{i=1}^{d} \left(\frac{\lambda_i}{\lambda+\lambda_i}\right)^2 = \frac{\lambda}{2}\norm{\wstar}^2+2\epsilon_0^2\defft.
  \end{equation}
\end{proof}

The bound on $\norm{\wzero-\widetilde{\w}}_{\sigz}^2$ is given in~\cite[Theorem 16]{hsu2012random}. We adapted this theorem to our notation as Theorem~\ref{thm:ridge_base} in the appendix. Our version introduces notation 
\begin{equation}
  \text{approx}_\lambda(X)=\left(\wstartp\pstar(X)-\w^{\lambda\tp}\pzero(X)\right).
\end{equation}
The theorem relies on several conditions, which we will now verify in our setup. The first condition is that
\begin{equation}
  \frac{\norm{\left(\sigl\right)^{-1 /2} \phi_{\theta_0}(X)}}{\sqrt{d_{1,\lambda}} }\le \rho_\lambda
\end{equation}
holds almost surely for some $\rho_\lambda$. Since $\norm{\pzero(X)}\le B_\phi$, it is satisfied with
\begin{equation}
  \rho_\lambda = B_\phi / \sqrt{d_{1,\lambda}\lambda}. \label{eq:good_rho}
\end{equation}
Another condition concerns the boundedness of misspecification. The following lemma states this condition and provides a constant $b_\lambda$ with which it is satisfied. \begin{lemma}[Bounded misspecification] \label{lem:ridgeboundmisspec}
  \begin{equation}
    \delta_{\text{ms}}=\frac{\norm{\sigl^{-1 /2}\pzero(X)\left(\wstartp\pstar(X)-\w^{\lambda\tp}\pzero(X)\right) }}{\sqrt{d_{1,\lambda}} }\le b_\lambda
  \end{equation}
  almost surely, where
  \begin{equation}
    b_\lambda= \frac{1}{\sqrt{d_{1,\lambda}\lambda }}\left(1+2\sqrt{2}  \right)B_\w  B_\phi^2+\frac{\epsilon_0}{\lambda}(2+\sqrt{2})B_\phi^2.
  \end{equation}
\end{lemma}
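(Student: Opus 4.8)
The plan is to bound $\delta_{\text{ms}}$ pointwise (almost surely) by factoring the numerator as $\norm{\sigl^{-1/2}\pzero(X)}\cdot\abs{\text{approx}_\lambda(X)}$ and estimating the two factors separately. The first is handled exactly as the constant $\rho_\lambda$ in~\eqref{eq:good_rho}: since $\sigz$ is positive semidefinite, $\sigl=\sigz+\lambda I\succeq\lambda I$, hence $\norm{\sigl^{-1/2}}_2\le\lambda^{-1/2}$, and by Assumption~\ref{as:bound} (with $\norm{\theta_0}\le B_\theta$ in force throughout) $\norm{\sigl^{-1/2}\pzero(X)}\le B_\phi/\sqrt\lambda$ almost surely. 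After dividing by $\sqrt{d_{1,\lambda}}$, it therefore suffices to prove the almost-sure bound
\[
  \abs{\text{approx}_\lambda(X)}=\abs{\wstartp\pstar(X)-\w^{\lambda\tp}\pzero(X)}\le(1+2\sqrt2)B_\w B_\phi+(2+\sqrt2)\,\epsilon_0 B_\phi\sqrt{d_{1,\lambda}/\lambda}.
\]

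For this I would use the crude split $\abs{\text{approx}_\lambda(X)}\le\norm{\wstar}\,\norm{\pstar(X)}+\norm{\w^\lambda}\,\norm{\pzero(X)}\le B_\w B_\phi+\norm{\w^\lambda}\,B_\phi$, which reduces everything to an $\ell_2$ bound on the regularized population solution $\w^\lambda$. This is the one genuinely load-bearing step. Using the spectral representation $\w^\lambda=\sum_{i=1}^d\frac{\alpha_i}{\lambda+\lambda_i}\mathbf{v}_i$ from~\eqref{eq:wlamexpr} with $\{\mathbf{v}_i\}$ orthonormal, and the bound $\alpha_i^2\le 2\lambda_i^2(\mathbf{v}_i^\tp\wstar)^2+2\epsilon_0^2\lambda_i$ from~\eqref{eq:boundalpha},
\[
  \norm{\w^\lambda}^2=\sum_{i=1}^d\frac{\alpha_i^2}{(\lambda+\lambda_i)^2}\le 2\sum_{i=1}^d\frac{\lambda_i^2}{(\lambda+\lambda_i)^2}(\mathbf{v}_i^\tp\wstar)^2+2\epsilon_0^2\sum_{i=1}^d\frac{\lambda_i}{(\lambda+\lambda_i)^2}\le 2\norm{\wstar}^2+\frac{2\epsilon_0^2}{\lambda}\,d_{1,\lambda},
\]
where I used $\lambda_i^2/(\lambda+\lambda_i)^2\le1$ in the first sum and, crucially, $\frac{\lambda_i}{(\lambda+\lambda_i)^2}\le\frac1\lambda\cdot\frac{\lambda_i}{\lambda+\lambda_i}$ in the second, so that it collapses to $d_{1,\lambda}/\lambda$ rather than an explicit $d/\lambda$. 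The case of singular $\sigz$ needs no separate treatment: if $\lambda_i=0$ then $\mathbf{v}_i^\tp\pzero(X)=0$ almost surely, hence $\alpha_i=0$, and every term above degenerates harmlessly. Taking square roots, $\norm{\w^\lambda}\le\sqrt2\,B_\w+\sqrt2\,\epsilon_0\sqrt{d_{1,\lambda}/\lambda}$.

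Assembling the pieces, $\abs{\text{approx}_\lambda(X)}\le B_\w B_\phi+(\sqrt2\,B_\w+\sqrt2\,\epsilon_0\sqrt{d_{1,\lambda}/\lambda})B_\phi=(1+\sqrt2)B_\w B_\phi+\sqrt2\,\epsilon_0 B_\phi\sqrt{d_{1,\lambda}/\lambda}$, which is term-by-term smaller than the target; multiplying back by $B_\phi/\sqrt{d_{1,\lambda}\lambda}$ yields exactly $b_\lambda$, with room to spare in the constants. So once one spots that the correct move is to control $\norm{\w^\lambda}$ spectrally via~\eqref{eq:boundalpha} together with the $\frac{\lambda_i}{(\lambda+\lambda_i)^2}\le\frac1\lambda\frac{\lambda_i}{\lambda+\lambda_i}$ trick --- which is what keeps the dimension dependence packaged inside the effective dimension $d_{1,\lambda}$ instead of leaking out as a bare $d$ --- the argument is essentially bookkeeping. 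The only thing to be careful about is not letting a stray $\sqrt d$ or an extra $B_\phi$ creep in, and checking the degenerate-covariance case.
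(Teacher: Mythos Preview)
Your proof is correct and in fact simpler than the paper's, with strictly better constants.

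The paper follows the structure of \cite[Remark~4]{hsu2012random}, which decomposes $\abs{\text{approx}_\lambda(X)}$ through an auxiliary vector $\widehat{\w}$: one writes
\[
  \delta_{ms}\le \rho_\lambda\big(B_\w B_\phi + B_\phi\norm{\widehat{\w}} + \rho_\lambda\sqrt{d_{1,\lambda}}\,\Vert\widehat{\w}-\w^\lambda\Vert_{\sigl}\big).
\]
In \cite{hsu2012random} one takes $\widehat{\w}=\widetilde{\w}$, but the paper observes that $\norm{\widetilde{\w}}$ need not be bounded (its components scale like $\alpha_i/\lambda_i$). To fix this, the paper splits the eigenvalue indices into $I_0=\{i:\lambda_i\le\lambda\}$ and $I_1=\{i:\lambda_i>\lambda\}$, defines a truncated $\widehat{\w}=\sum_{i\in I_1}(\alpha_i/\lambda_i)\mathbf{v}_i$, bounds $|I_1|<2d_{1,\lambda}$, and then separately controls $\norm{\widehat{\w}}^2$ and $\Vert\widehat{\w}-\w^\lambda\Vert_{\sigl}^2$.

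You bypass all of this by noticing that the object one actually needs to control is $\norm{\w^\lambda}$, not $\norm{\widetilde{\w}}$, and $\w^\lambda$ has components $\alpha_i/(\lambda+\lambda_i)$ which are already regularized. Your inequality $\frac{\lambda_i}{(\lambda+\lambda_i)^2}\le\frac{1}{\lambda}\cdot\frac{\lambda_i}{\lambda+\lambda_i}$ does exactly the work that the paper's index-splitting does, but in one line. The upshot is a bound of $(1+\sqrt2)B_\w B_\phi^2/\sqrt{d_{1,\lambda}\lambda}+\sqrt2\,\epsilon_0 B_\phi^2/\lambda$, which improves on the paper's constants $(1+2\sqrt2)$ and $(2+\sqrt2)$. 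The paper's route appears to be an artifact of staying close to the template in \cite{hsu2012random}; your direct spectral bound on $\norm{\w^\lambda}$ is the more natural argument here.
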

In the final bound on the excess risk, $b_\lambda$ enters as a $O(b_\lambda^2d_{1,\lambda}/n^2)$ term. This lemma allows us to write this as
\begin{equation}
  O\left(\frac{1}{\lambda n^2}+\frac{\epsilon_0^2 d_{1,\lambda}}{\lambda^2 n^2} \right). \label{eq:asympblambda}
\end{equation}
The leading term coincides with the case of bounded $\norm{\widetilde{\w}}$, considered in~\cite[Remark 4]{hsu2012random}. 
\begin{proof}
  Let $\widehat{\w}\in\R^d$ to be specified later. We have that
  \begin{align}
    \delta_{ms} &\le \rho_\lambda\left| \wstartp\pstar(X)-\w^{\lambda\tp}\pzero(X) \right|  \\
                &\le \rho_\lambda\left( \left| \wstartp\pstar(X)-\widehat{\w}^{\tp}\pzero(X) \vphantom{\w^{\lambda\tp}}\right|+\left| (\widehat{\w}-\w^\lambda)^\tp\pzero(X)\right| \right)\\
                &\le \rho_\lambda\left( B_\w B_\phi+B_\phi \norm{\widehat{\w}}+\Vert{\widehat{\w}-\w^\lambda}\Vert_{\sigl}\norm{\pzero(X)}_{\left(\sigl\right)^{-1}} \right) \\
                &\le \rho_\lambda\left( B_\w B_\phi+B_\phi \norm{\widehat{\w}}+\rho_\lambda \sqrt{d_{1,\lambda}} \Vert{\widehat{\w}-\w^\lambda}\Vert_{\sigl}\right). \label{eq:preliminary_deltams}
  \end{align}
  A similar derivation is presented in~\cite[Remark 4]{hsu2012random} with $\widehat{\w}=\widetilde{\w}$. The authors then proceed to bound
  \begin{equation}
    \Vert{\widetilde{\w}-\w^\lambda}\Vert_{\sigl}\le \sqrt{\lambda}\norm{\widetilde{\w}}
  \end{equation}
  and thus get an expression for $b_\lambda$ in terms of  $\norm{\widetilde{\w}}$. Whenever $\norm{\widetilde{\w}}$ is bounded, this is a valid technique. However, an examination of~\eqref{eq:wtilexpr} reveals that the components of $\widetilde{\w}$ in the eigenbasis of $\sigz$ depend on $\lambda_i^{-1}$, so without additional assumptions on the spectrum of  $\sigz$, we cannot bound $\norm{\widetilde{\w}}$. Yet the largest components come from directions with small $\lambda_i$. The components of $\widetilde{\w}$ along these directions also contribute little to the value of $\Vert{\widetilde{\w}-\w^\lambda}\Vert_{\sigl}$. We can exploit this informal observation in the following manner.

Let us divide the set of indices $[ d ] $ into two subsets:
\begin{equation}
  % I_0=\left\{ i\in[d]:\ \lambda_i \le \frac{\lambda}{\sqrt{d} }  \right\}\quad\text{and}\quad I_1=\left\{ i\in[d]:\ \lambda_i > \frac{\lambda}{\sqrt{d} }  \right\}.
  I_0=\left\{ i\in[d]:\ \lambda_i \le \lambda  \right\}\quad\text{and}\quad I_1=\left\{ i\in[d]:\ \lambda_i > \lambda  \right\}.
\end{equation}
One of them can be empty without affecting the proof. In this case, a sum over an empty set is assumed to be $0$. We set $\widehat{\w}$ to
\begin{equation}
  \widehat{\w}=\sum_{i\in I_1} \frac{\alpha_i}{\lambda_i}\mathbf{v}_i.
\end{equation}
That is, we take $\widetilde{\w}$, but set some of its components to zero. The size of $I_1$ can be bounded by the effective dimension:
\begin{equation}
    |I_1|=\sum_{i\in I_1}\frac{\lambda +\lambda_i}{\lambda +\lambda_i} < 2\sum_{i\in I_1}\frac{\lambda_i}{\lambda +\lambda_i} \le 2 d_{1,\lambda}.
\end{equation}
Now, the norm of $\widehat{\w}$  is bounded by
\begin{align}
  \norm{\widehat{\w}}^2&= \sum_{i\in I_1} \frac{\alpha_i^2}{\lambda_i^2} \le  2\sum_{i\in I_1}\left(\left( \wstartp\mathbf{v}_i \right)^2 +\frac{\epsilon_0^2}{\lambda_i}\right) < 2\norm{\wstar}^2+4\frac{\epsilon_0^2d_{1,\lambda}}{\lambda}. \label{eq:barwnorm}
\end{align}
Above, we used~\eqref{eq:boundalpha} and the definition of $I_1$. It remains to deal with $\Vert{\widetilde{\w}-\w^\lambda}\Vert_{\sigl}$. Taking into account the definition of $\widehat{\w}$ and~\eqref{eq:wlamexpr},
\begin{align}
  \Vert{\widehat{\w}-\w^\lambda}\Vert_{\sigl}^2 &=\sum_{i\in I_0} \frac{\alpha_i^2}{\lambda+\lambda_i}+\sum_{i\in I_1}\left( \lambda+\lambda_i \right) \left( \frac{\alpha_i}{\lambda_i}-\frac{\alpha_i}{\lambda+\lambda_i} \right) ^2 \\
                                                                 &= \sum_{i\in I_0} \frac{\alpha_i^2}{\lambda+\lambda_i}+\sum_{i \in I_1}\frac{\alpha_i^2 \lambda^2}{\lambda_i^2\left( \lambda+\lambda_i \right) }.
\end{align}
Using the bound~\eqref{eq:boundalpha} on $\alpha_i^2$,
\begin{align}
  \Vert{\widehat{\w}-\w^\lambda}\Vert_{\sigl}^2 &\le 2\sum_{i\in I_0}\frac{\lambda_i^2}{\lambda+\lambda_i} \left(\wstartp \mathbf{v}_i\right)^2+2\epsilon_0^2\sum_{i \in I_0}\frac{\lambda_i}{\lambda+\lambda_i} \\
                                                                 &\qquad+2\sum_{i \in I_1}\frac{\lambda^2}{\lambda+\lambda_i}\left(\wstartp \mathbf{v}_i\right)^2+2\epsilon_0^2\sum_{i\in I_1}\frac{\lambda^2}{\lambda_i(\lambda+\lambda_i)}.
  \intertext{By definitions of $I_0$ and  $I_1$, we can bound $\lambda_i^2 \le \lambda^2$ in the first sum and $\lambda^2<\lambda_i^2$ in the fourth. Merging the first sum with the third sum and the second with the fourth,}
                                                                 &< 2 \lambda\norm{\wstar}^2+ 2\epsilon_0^2 d_{1,\lambda}. \label{eq:barwdiffnorm}
\end{align}
Finally, we can substitute~\eqref{eq:barwnorm} and~\eqref{eq:barwdiffnorm} into~\eqref{eq:preliminary_deltams}. When taking the square root, we use $\sqrt{a+b}\le \sqrt{a} +\sqrt{b}$.
\begin{align}
  \delta_{ms} &\le \rho_\lambda\left(B_\w B_\phi +B_\phi\left( \sqrt{2}B_\w +2\epsilon_0\sqrt{\frac{d_{1,\lambda}}{\lambda}}   \right)+\rho_\lambda \sqrt{d_{1,\lambda}} \left(\sqrt{2\lambda}B_\w +\epsilon_0\sqrt{2d_{1,\lambda}} \right)\right)  \\
              &\le  \rho_\lambda\left( (1+\sqrt{2})B_\w  B_\phi+\rho_\lambda\sqrt{2\lambda d_{1,\lambda}}B_\w \right)+\epsilon_0\rho_\lambda\sqrt{d_{1,\lambda}}\left(2\frac{B_\phi}{\sqrt{\lambda} } + \rho_\lambda \sqrt{2d_{1,\lambda}} \right) \\
              &=\frac{1}{\sqrt{d_{1,\lambda}\lambda }}\left(1+2\sqrt{2}  \right)B_\w  B_\phi^2+\frac{\epsilon_0}{\lambda}(2+\sqrt{2})B_\phi^2.
\end{align}
In the final equality, we substituted $\rho_\lambda=B_\phi /\sqrt{\lambda d_{1,\lambda}} $. \[\]

\end{proof}

We are now ready to use~Theorem~\ref{thm:ridge_base} and provide regret bounds for the explore-then-commit algorithm with misspecified features. As mentioned before, this theorem provides all bounds in terms of the effective dimensions $d_{p,\lambda}$. We can bound the effective dimensions using~\eqref{eq:boundeffdim} or simply as $d_{p,\lambda}\le d$. It turns out that these two techniques lead to different regret bounds (the first with $\lambda=T_1^{-1 /2}$, and the second with $\lambda=O(\log T_1 /T_1)$). The first bound has a dimension-independent sublinear term! The bounds are given in the two theorems that follow. The proofs for both theorems are deferred to Appendix~\ref{ap:boringproofs}, since they are not very insightful. Both proofs mostly consist of verifying the conditions for Theorem~\ref{thm:ridge_base} and substituting the specific hyperparameters into the bounds this theorem provides.

\begin{theorem}[E2TC with $T_2=0$, data-poor regime] \label{thm:nodimregret} $\ $ \\
  Let E2TC be run with ${T_2=0}$ (only the first stage of exploration). Let Assumptions~\ref{as:context},~\ref{as:realizability},~\ref{as:bound} hold. There exist $N_0\in\mathbb{N}$ and a universal constant $C>0$ such that, whenever $T_1>N_0$, with $\lambda=T_1^{-1 /2}$, the following statements hold simultaneously:
  \begin{enumerate}
    \item With $\PP\ge 1-C / \sqrt{T_1}$,
      \begin{equation}
        \norm{\wzero-\wstar}_{\sigz}^2=O\left(\epsilon_0^2d+ \frac{1}{\sqrt{T_1} } \right). \label{eq:wclose_sigz_datapoor}
      \end{equation}
    \item Regret of E2TC the following asymptotics:
    \begin{equation}
      R_{1:T}=O\left(TT_1^{-1 /2}+T_1 +\epsilon_0 \sqrt{\min\{\dht,\defft\}} KT+T_1^{-1 /4}KT\right). \label{eq:dimfreeregret_general}
    \end{equation}
  \end{enumerate}
\end{theorem}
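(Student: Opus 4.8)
\emph{Strategy.} The plan is to reduce both claims to a single high‑probability bound on $\norm{\wzero-\widetilde{\w}}_{\sigz}^2$ obtained from Theorem~\ref{thm:ridge_base} (our restatement of~\cite[Theorem~16]{hsu2012random}) with regularization $\lambda=T_1^{-1 /2}$, and then to feed the resulting suboptimality‑gap estimate into Theorem~\ref{thm:risktoregret} with $T_2=0$. First I would check that the hypotheses of Theorem~\ref{thm:ridge_base} hold in our misspecified instance: the uniform boundedness ratio is $\rho_\lambda=B_\phi /\sqrt{\deffo\lambda}$ by~\eqref{eq:good_rho}; the bounded‑misspecification constant $b_\lambda$ is the one produced by Lemma~\ref{lem:ridgeboundmisspec}; the noise is sub‑Gaussian since $\abs{\eta_t}\le B_\eta$ a.s.\ by Assumption~\ref{as:bound}; and the sample‑size condition, which is of the form $T_1\gtrsim \rho_\lambda^2\deffo\log(1 /\delta')=(B_\phi^2 /\lambda)\log(1 /\delta')$ for the relevant confidence $\delta'=\Theta(1 /\sqrt{T_1})$, reads $\sqrt{T_1}\gtrsim B_\phi^2\log T_1$ after substituting $\lambda=T_1^{-1 /2}$. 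This holds for every $T_1>N_0$, and it is precisely this requirement (together with the analogous thresholds below) that defines $N_0$; its dependence on $B_\phi,B_\w,B_\eta,d$ is harmless because these are fixed.

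\emph{Part 1.} For the first claim I would coarsen every effective dimension by $\deffo,\defft,\dht\le d$. By the triangle inequality in the $\sigz$‑seminorm,
\begin{equation*}
  \norm{\wzero-\wstar}_{\sigz}\le \norm{\wzero-\widetilde{\w}}_{\sigz}+\norm{\widetilde{\w}-\w^\lambda}_{\sigz}+\norm{\w^\lambda-\wstar}_{\sigz},
\end{equation*}
where the last two terms are deterministic and each bounded by $\sqrt{\tfrac{\lambda}{2}\norm{\wstar}^2+2\epsilon_0^2 d}$ via Lemmas~\ref{lem:ridgeregerr} and~\ref{lem:ridgeregerr_wstar}. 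For the stochastic term, Theorem~\ref{thm:ridge_base} with $\lambda=T_1^{-1 /2}$ and failure probability $\Theta(1 /\sqrt{T_1})$ contributes a regularization term $O(\lambda+\epsilon_0^2 d)=O(\epsilon_0^2 d+T_1^{-1 /2})$, together with bias and variance terms each $O\big((1+\epsilon_0^2)\,d\log T_1 /T_1\big)$; by~\eqref{eq:asympblambda} the $b_\lambda$‑dependent contribution is only $O(T_1^{-3 /2}+\epsilon_0^2 d /T_1)$. All of these extra terms are $o(\epsilon_0^2 d+T_1^{-1 /2})$ once $T_1>N_0$, so squaring the triangle inequality and collecting gives~\eqref{eq:wclose_sigz_datapoor} with $\PP\ge 1-C /\sqrt{T_1}$.

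\emph{Part 2.} For the regret I would \emph{not} coarsen the effective dimensions, in order to retain the factor $\min\{\dht,\defft\}$. Bounding $\norm{\wzero-\widetilde{\w}}_{\sigz}^2$ through~\eqref{eq:introerrors} and Lemma~\ref{lem:ridgeregerr} yields the $\dht$ branch, while bounding $\risk(\omega_0)-\risk(\omega^*)$ through~\eqref{eq:altriskbound} and Lemma~\ref{lem:ridgeregerr_wstar} yields the $\defft$ branch; combined with Theorem~\ref{thm:ridge_base} at $\lambda=T_1^{-1 /2}$ and failure probability $\Theta(1 /\sqrt{T_1})$ this gives, with $\PP\ge 1-C /\sqrt{T_1}$,
\begin{equation*}
  \risk(\omega_0)-\risk(\omega^*)=O\!\left(\epsilon_0^2\min\{\dht,\defft\}+T_1^{-1 /2}\right).
\end{equation*}
Taking this to be $\epsilon(T_1,0,C /\sqrt{T_1})$ and applying Theorem~\ref{thm:risktoregret}, the term $K(T-T_1)\sqrt{\epsilon(T_1,0,\delta)}$ of~\eqref{eq:abstract_regret} becomes $O\big(\epsilon_0\sqrt{\min\{\dht,\defft\}}\,KT+T_1^{-1 /4}KT\big)$, while the remaining two terms of~\eqref{eq:abstract_regret} are $2B_\w B_\phi T\cdot C /\sqrt{T_1}=O(TT_1^{-1 /2})$ and $2B_\w B_\phi T_1=O(T_1)$; this is exactly~\eqref{eq:dimfreeregret_general}.

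\emph{Main obstacle.} No step is conceptually hard; the real work — and the reason the complete argument is relegated to the appendix — is the bookkeeping in Part~1: verifying every hypothesis of Theorem~\ref{thm:ridge_base} for the misspecified Ridge problem, and then, after the substitution $\lambda=T_1^{-1 /2}$ and a fixed choice of confidence level, checking term by term which of its several error contributions are genuinely $O(\epsilon_0^2 d+T_1^{-1 /2})$ and which are strictly lower order. That last check is exactly what forces the threshold $T_1>N_0$.
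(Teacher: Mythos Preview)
Your proposal is correct and follows the paper's own route: verify the hypotheses of Theorem~\ref{thm:ridge_base} at $\lambda=T_1^{-1/2}$ with confidence $\Theta(1/\sqrt{T_1})$, read off asymptotics for $\epsilon_{rg},\epsilon_{bs},\epsilon_{vr}$, run the two branches through Lemma~\ref{lem:ridgeregerr} (for $\dht$) and~\eqref{eq:altriskbound} with Lemma~\ref{lem:ridgeregerr_wstar} (for $\defft$), and then feed the gap into Theorem~\ref{thm:risktoregret}.

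One bookkeeping slip is worth flagging, since it is precisely the kind of term-by-term check you identify as the main obstacle. Because $\rho_\lambda^2\deffo=B_\phi^2/\lambda$ is an \emph{equality} (cf.~\eqref{eq:good_rho}), coarsening $\deffo\le d$ does not touch it, and the leading piece of~\eqref{eq:epsilonbs} is always
\[
\frac{\rho_\lambda^2\deffo\,\E_X[\text{approx}_\lambda(X)^2]}{T_1}\cdot O(\nu)
=O\!\left(\frac{\epsilon_0^2\log^2 T_1}{\sqrt{T_1}}+\frac{\log T_1}{T_1}\right),
\]
which is \emph{not} $O((1+\epsilon_0^2)d\log T_1/T_1)$ for large $T_1$. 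The conclusion is unharmed—this term is still $o(\epsilon_0^2 d)$—but your stated bias estimate understates it. Relatedly, the paper gets the genuinely dimension-free $T_1^{-1/4}KT$ in~\eqref{eq:dimfreeregret_general} by controlling the effective dimensions inside $\epsilon_{vr}$ via~\eqref{eq:boundeffdim} ($\deffp\le B_\phi^2/\lambda$), not via $\deffp\le d$; since in Part~2 you explicitly refrain from coarsening, you are implicitly doing the same, but you should name~\eqref{eq:boundeffdim} as the tool there.
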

When $T\ge K^4$, we can select $T_1=(KT)^{4 /5}$ to get
\begin{equation}
  R_{1:T}=O\left(\epsilon_0 \sqrt{\min\{\dht,\defft\}} KT+(KT)^{4 /5}\right). \label{eq:dimfreeregret}
\end{equation}
Otherwise, with $T_1=KT^{4 /5}$,
\begin{equation}
    R_{1:T}=O\left(\epsilon_0 \sqrt{\min\{\dht,\defft\}} KT+KT^{4 /5}\right).
\end{equation}
Since our analysis targets applications with small $K$ (e.g. a form of few-class classification), the first regret bound is the main one.

The constant $N_0$ is rather large: it depends on  $B_\phi$ approximately as  $B_\phi^4$ due to the restrictions of the bound from ~\cite{hsu2012random}. To resolve this issue, we can scale $\lambda$ up by a factor of $B_\phi^2$. This will make $N_0$ independent of $B_\phi$, but the factor will also show up in the regret bound. While it doesn't change the asymptotics, in practice this approach makes the guarantees we get much weaker. 
% \todo{discuss lower bounds for contextual linear bandits, and how we are able to avoid $d$ because of stochastic contexts. Find the paper by Rakhlin (?) where bounds with K are used instead of $d$ (maybe ask Parnian). Also discuss how $d$ here is an alternative notion of effective dimension, and how we can get sublinear results under extra assumptions on the spectrum of $\sigz$. Also, $N_0$ can be made $O(1)$ if we are willing to introduce an extra factor of $B_\phi^2$ to the regret by scaling  $\lambda$.}

We left two notions of the effective dimension ($\dht$ and  $\defft$) in the regret bound on purpose.  $\defft$ allows us to discard the dimension of the nullspace of $\sigz$. In networks with ReLU activations, this can amount to a large fraction of $d$. On the other hand, using $\dht$ can lead to sublinear regret in strongly convex losses, as we show at the end of this section.

The next theorem shows that if we can afford to introduce a factor of $d$ in the sublinear term of the regret, then the $T$ dependency of this term can be cut to $T^{2 /3}$.

\begin{theorem}[E2TC with $T_2=0$, data-rich regime] \label{thm:dimregret}
  Let E2TC be run with $T_2=0$ (only the first stage of exploration). Let Assumptions~\ref{as:context},~\ref{as:realizability},~\ref{as:bound} hold. Let $a>0$. There exist $N_0(a)\in\N$ such that, whenever $T_1>\max\{N_0, d^{1 /a}\}$, with
\begin{equation}
  \lambda:=\frac{7 B_\phi^2 \log T_1}{T_1}, \label{eq:lambdatwo}
\end{equation}
the following statements hold simultaneously:
\begin{enumerate}
  \item With $\PP>1-4 /\sqrt{T_1} $,
    \begin{equation}
      \norm{\w_0-\wstar}_{\sigz}^2=\widetilde{O}\left( \epsilon_0^2d+\frac{d}{T_1} \right).
    \end{equation}
  \item Regret of E2TC has the following asymptotics:
  \begin{multline}
    \E\left[ R_{1:T} \right]= O\left(\epsilon_0 \sqrt{d} KT+\frac{T}{\sqrt{T_1}} +T_1
    \vphantom{+KT\left( \sqrt{\frac{d}{T_1}} +\frac{d^{1 /4}\log ^{1 /4}T_1}{\sqrt{T_1} }+\sqrt{\frac{\log T_1}{T_1}}\right)}
    \right.\\
    \left.+KT\left( \sqrt{\frac{d}{T_1}} +\frac{d^{1 /4}\log ^{1 /4}T_1}{\sqrt{T_1} }+\sqrt{\frac{\log T_1}{T_1}}  \right) \right).
  \end{multline}
\end{enumerate}
\end{theorem}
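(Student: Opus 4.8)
The plan is to follow the recipe announced just before the statement: use Theorem~\ref{thm:ridge_base} to control the excess risk $\norm{\wzero-\widetilde{\w}}_{\sigz}^2$ of the Ridge estimate, convert this into a high-probability bound on the suboptimality gap $\risk(\omega_0)-\risk(\omega^*)$, and then feed the latter into Theorem~\ref{thm:risktoregret} with $T_2=0$. The only genuinely new ingredient is the choice $\lambda=7B_\phi^2\log T_1/T_1$ together with the verification that, under $T_1>\max\{N_0,d^{1/a}\}$, this choice makes every hypothesis of Theorem~\ref{thm:ridge_base} hold.

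First I would discharge the hypotheses of Theorem~\ref{thm:ridge_base}. The almost-sure boundedness constant for $\norm{\sigl^{-1/2}\pzero(X)}/\sqrt{\deffo}$ is $\rho_\lambda=B_\phi/\sqrt{\deffo\lambda}$ by~\eqref{eq:good_rho}, and the bounded-misspecification constant is the $b_\lambda$ supplied by Lemma~\ref{lem:ridgeboundmisspec}. The crucial remaining hypothesis is the sample-size condition, which has the form $T_1\gtrsim\rho_\lambda^2\deffo\cdot\mathrm{polylog}(\deffo/\delta)=(B_\phi^2/\lambda)\cdot\mathrm{polylog}(\deffo/\delta)$. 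Taking the failure probability $\delta$ of order $1/\sqrt{T_1}$ makes $\log(1/\delta)=\Theta(\log T_1)$, and the constraint $T_1>d^{1/a}$ gives $\log\deffo\le\log d<a\log T_1$, so the polylogarithmic factor is $O(\log T_1)$ and the condition collapses to $T_1\gtrsim (T_1/(7\log T_1))\cdot O(\log T_1)$, i.e.\ it holds once $T_1$ exceeds a constant $N_0=N_0(a)$. This is exactly why the coefficient of $\log T_1/T_1$ in $\lambda$ is pinned down (up to the universal constants from~\cite{hsu2012random}) and why $\lambda$ cannot be pushed much smaller: a smaller $\lambda$ would break the covariance concentration Theorem~\ref{thm:ridge_base} relies on.

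Next I would assemble the excess-risk bound. Substituting $\rho_\lambda$, $b_\lambda$, $\lambda$ and $\delta\asymp 1/\sqrt{T_1}$ into the conclusion of Theorem~\ref{thm:ridge_base}, and bounding every effective dimension ($\deffo,\defft,\dht,\ceff$) crudely by $d$ --- this crude bounding, traded against the small $\lambda$, is precisely what distinguishes the ``data-rich'' from the ``data-poor'' regime --- the regularization part contributes $\tfrac{\lambda}{2}\norm{\wstar}^2+2\epsilon_0^2\dht=\widetilde O(1/T_1)+O(\epsilon_0^2 d)$ via Lemma~\ref{lem:ridgeregerr}; the variance part contributes $O(B_\eta^2 d/T_1)$; and the bias part together with the $O(b_\lambda^2\deffo/T_1^2)$ contribution, rewritten via~\eqref{eq:asympblambda}, contributes $O(\epsilon_0^2 d+d/T_1+\sqrt{d\log T_1}/T_1)$. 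Adding back the bare misspecification $\epsilon_0$ through~\eqref{eq:ridgeboundrisk} and squaring with $(a+b)^2\le 2a^2+2b^2$ yields, with probability at least $1-4/\sqrt{T_1}$ (the $4$ from a union bound over the handful of failure events used inside Theorem~\ref{thm:ridge_base}, each set to fail with probability $\lesssim 1/\sqrt{T_1}$), the bound $\risk(\omega_0)-\risk(\omega^*)=\widetilde O(\epsilon_0^2 d+d/T_1)$; bounding $\norm{\wzero-\wstar}_{\sigz}$ directly through the decomposition underlying~\eqref{eq:altriskbound} and Lemma~\ref{lem:ridgeregerr_wstar} gives the companion statement $\norm{\wzero-\wstar}_{\sigz}^2=\widetilde O(\epsilon_0^2 d+d/T_1)$, which is part~1.

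Finally, for part~2 I would apply Theorem~\ref{thm:risktoregret} with $T_2=0$, $\delta\asymp 1/\sqrt{T_1}$, and $\epsilon(T_1,0,\delta)$ set to the log-explicit version of the excess-loss bound just obtained, i.e.\ $O\big(\epsilon_0^2 d+d/T_1+\sqrt{d\log T_1}/T_1+\log T_1/T_1\big)$. Then $2B_\w B_\phi T\delta=O(T/\sqrt{T_1})$, $2B_\w B_\phi(T_1+T_2)=O(T_1)$, and $K(T-T_1)\sqrt{\epsilon(T_1,0,\delta)}=O\big(KT(\epsilon_0\sqrt d+\sqrt{d/T_1}+d^{1/4}\log^{1/4}T_1/\sqrt{T_1}+\sqrt{\log T_1/T_1})\big)$ after distributing the square root over the summands of $\epsilon$ term by term, which is exactly the claimed asymptotic. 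The main obstacle is bookkeeping rather than conceptual: lining up the constant in $\lambda$ and the threshold $d^{1/a}$ with the somewhat opaque sample-size hypothesis of Theorem~\ref{thm:ridge_base}, and then patiently checking that each of the many summands of that theorem's bound is dominated by $\epsilon_0^2 d$ or $d/T_1$ up to logarithms --- which, as the text itself warns, is ``not very insightful''.
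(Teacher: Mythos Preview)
Your proposal is correct and follows essentially the same route as the paper's proof: set $\delta=1/\sqrt{T_1}$, use $\ceff\le d<T_1^a$ to verify the sample-size hypothesis of Theorem~\ref{thm:ridge_base} for $T_1>N_0(a)$, bound each effective dimension by $d$, assemble $\epsilon_{rg}+\epsilon_{bs}+\epsilon_{vr}$, and feed the result into Theorem~\ref{thm:risktoregret}. One minor bookkeeping slip: the $\sqrt{d\log T_1}/T_1$ and $\log T_1/T_1$ terms actually originate from $\epsilon_{vr}$ (see~\eqref{eq:epsilonvar} with $\defft\le d$ and $\delta_f\sqrt{\deffo\defft}=O(d)$), while $\epsilon_{bs}$ only contributes $O(\epsilon_0^2 d+\log T_1/T_1)$; your final $\epsilon(T_1,0,\delta)$ is nevertheless correct since everything is summed.
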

Depending on the relative size of $T$, $d$, and $K$, we should use different choices for $T_1$. In the best case scenario, we can choose $T_1=d^{1 /3}(KT)^{2 /3}$ and get
\begin{equation}
  \E\left[R_{1:T}  \right] =O(\epsilon_0 \sqrt{d} KT)+\widetilde{O}(d^{1/3}(KT)^{2 /3}).
\end{equation}
For $T< d^{5 /2}$ and $K=O(1)$, our previous theorem provides better regret bounds, hence ``data-poor'' and ``data-rich'' regimes in the names of the theorems. $N_0$ depends on $a$ only linearly, so the lower bound on $T_1$ here is not restrictive.

\paragraph{Strong convexity makes weak learning achieve sublinear regret} Let us focus on Theorem~\ref{thm:nodimregret}. It is easy to see that the Hessian of $\risk$ w.r.t.  $\w$ at $\theta_0$ is constant and given by
\begin{equation}
  \frac{\partial^2}{\partial\w^2} \risk(\w,\theta_0)=2\sigz.
\end{equation}
Let us assume that the loss is strongly convex at $(\wstar,\theta_0)$ w.r.t. $\w$, and that the strong convexity parameter is bounded from below by $\mu$. In this case, the lowest eigenvalue of the Hessian is bounded by the same value:
\begin{equation}
  \lambda_{min}(2\sigz)\ge \mu.
\end{equation}
We can use this to bound $\dht$, which, we remind the reader, is defined in~\eqref{eq:extra_effdim_def}:
\begin{equation}
  \dht=\sum_{i=1}^{d} \frac{\lambda}{\lambda+\lambda_i}\le \frac{2d\lambda}{\mu}.
\end{equation}
Taking into account that Theorem~\ref{thm:nodimregret} sets $\lambda=O(T_1^{-1 /2})$, the regret bound~\eqref{eq:dimfreeregret_general} can be rewritten as
\begin{equation}
  R_{1:T}=O\left(\epsilon_0\sqrt{d /\mu}T_1^{-1 /4} KT+T_1^{-1 /4}KT\right).
\end{equation}
Selecting $T_1=(KT)^{4 /5}$ again, we arrive at
\begin{equation}
  R_{1:T}=O\left((\epsilon_0\sqrt{d /\mu}+1)(KT)^{4 /5}\right).
\end{equation}
The regret is sublinear in $T$, even though we only learn the last layer of the network! We believe that this counterintuitive result points that \emph{strong convexity should not be considered as a framework for neural network learning}. Even if the risk is strongly convex, it is unrealistic to expect that a lower bound on $\mu$ will be known.

% \todo{experiments: set up for images (eg cifar), synthetic with realizability, try starting with a random $w_0$, check when the assumption is approx satisfied and we do get sublinear regret. In general, sanity checks on the assumptions. Calculate $\epsilon_0$. Look into "simple regret" for ETC. }

\chapter{High-probability guarantees for stochastic gradient descent}
In this chapter we study high-probability bounds on the suboptimality gap that can be achieved with Stochastic Gradient Descent in the case of convex risk. The expected value of the gap can be bounded using standard techniques, described e.g. in~\cite[Chapter 14]{shalev2014understanding}. In Section~\ref{sec:global_sgd}, we show how this can be generalized to a high-probability bound using the Hoeffding-Azuma inequality. Although we believe this result has been demonstrated before, we could not find a proper reference for it, so we decided to derive it ourselves. In Section~\ref{sec:locally_convex_loss}, we examine the more nuanced case of a risk function that is only locally convex. In this situation, we provide a restriction on the learing rate under which we can guarantee that SGD does not escape  the convex basin, given that it is initialized in it. Finally, in Section~\ref{sec:preconditioned_sgd}, we introduce the specific flavor of SGD that E2TC uses, namely preconditioned SGD. We discuss the intuition behind it and how it leads to different guarantees compared to vanilla SGD.

\label{ch:sgd}
\section{Globally convex loss}
\label{sec:global_sgd}
Let $\dom \subset\R^P$ be a closed convex set. Projection $\Pi_\dom$ onto it is defined as
\begin{equation}
  \forall \omega\in\R^P,\quad \Pi_\dom(\omega)=\argmin_{\widetilde{\omega} \in\dom}\norm{\widetilde{\omega}-\omega}^2.
\end{equation}
Let $\risk: \dom\to \R$ be defined over this set as $\risk(\omega)=\E_{\mathbf{z}\sim\mathcal{D}}\left[ \ell(\omega,\mathbf{z}) \right] $. This is a generalization of the mean squared risk~\eqref{eq:risk_decomposed} that we use in E2TC. Here, $\mathcal{D}$ is the data distribution over some set $\mathcal{Z}$, and $\ell$ is the loss function that depends on the model parameters $\omega\in \dom$ and a random data sample $\mathbf{z}$. We consider the stochastic gradient descent (SGD) to minimize $\risk$ as described in~\cite[Chapter 14]{shalev2014understanding}. From an initialization $\omega_1\in\dom$ it proceeds as $\omega_{t+1}:=\Pi_\mathcal{B}(\omega_t-\zeta \mathbf{v}_t)$, where $\mathbf{v}_t=\nabla_\omega \ell(\omega_t,\mathbf{z}_t)$ is an unbiased estimator of $\nabla \risk(\omega_{t})$, $\zeta>0$ is the learning rate, and $\{\mathbf{z}_t\}$ are iid samples from $\mathcal{D}$. After $T$ updates, the algorithm outputs 
\begin{equation}
 \overline{\omega}:=\frac{1}{T}\sum_{t=1}^{T}\omega_t.
\end{equation}
Let $\omega^*\in\argmin_{\omega \in\dom} \risk(\omega)$. \cite[Theorem 14.8]{shalev2014understanding} bounds $\E\left[ \risk(\overline{\omega})\right]-\risk(\omega^*)$ under certain conditions. Here, we will adapt the proof under the same conditions, but instead obtain a high-probability bound on $\risk(\overline{\omega})-\risk(\omega^*)$.

\begin{theorem}
  \label{thm:highp_sgd}
  Assume that $\risk$ is convex, $\norm{\mathbf{v}_t}\le D $ a.s., and the domain $\dom$ is a $B_\omega$-ball:
\begin{equation}
  \dom=\left\{ \omega:\ \norm{\omega}\le B_\omega \right\}.
\end{equation}
Then, for any $\delta>0$, with probability not less than $1-\delta$,
\begin{equation}
  \risk(\overline{\omega})-\risk(\omega^*)< \frac{\norm{\omega^*-\omega_1}^2}{2\zeta T}+\frac{\zeta}{2T}\sum_{t=1}^{T} \norm{\gv_t}^2+
  4 D  B_\omega \sqrt{\frac{2}{T}\log \frac{1}{\delta}}. \label{eq:subopt_bound_sgd_default}
\end{equation}
\end{theorem}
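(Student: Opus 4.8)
The plan is to run the standard regret analysis of projected SGD for convex losses (\cite[Theorem 14.8]{shalev2014understanding}) and, at the single step where that argument passes to an expectation, instead invoke a concentration bound for a bounded martingale difference sequence.

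First I would record the one-step inequality coming from non-expansiveness of the projection onto the convex set $\dom$. Since $\omega^*\in\dom$ and $\omega_{t+1}=\Pi_\dom(\omega_t-\zeta\gv_t)$,
\begin{equation}
  \norm{\omega_{t+1}-\omega^*}^2\le\norm{\omega_t-\zeta\gv_t-\omega^*}^2=\norm{\omega_t-\omega^*}^2-2\zeta\langle\omega_t-\omega^*,\gv_t\rangle+\zeta^2\norm{\gv_t}^2 ,
\end{equation}
which rearranges to $\langle\omega_t-\omega^*,\gv_t\rangle\le\tfrac{1}{2\zeta}\bigl(\norm{\omega_t-\omega^*}^2-\norm{\omega_{t+1}-\omega^*}^2\bigr)+\tfrac{\zeta}{2}\norm{\gv_t}^2$. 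Summing over $t=1,\dots,T$, telescoping the first term and discarding the nonnegative remainder $\norm{\omega_{T+1}-\omega^*}^2$, gives
\begin{equation}
  \sum_{t=1}^T\langle\omega_t-\omega^*,\gv_t\rangle\le\frac{\norm{\omega_1-\omega^*}^2}{2\zeta}+\frac{\zeta}{2}\sum_{t=1}^T\norm{\gv_t}^2 .
\end{equation}

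Next I would set up the martingale. Let $\mathcal{F}_{t-1}=\sigma(\mathbf{z}_1,\dots,\mathbf{z}_{t-1})$, so that $\omega_t$ is $\mathcal{F}_{t-1}$-measurable while $\gv_t=\nabla_\omega\ell(\omega_t,\mathbf{z}_t)$ satisfies $\E[\gv_t\mid\mathcal{F}_{t-1}]=\nabla\risk(\omega_t)$. Define $\xi_t=\langle\omega_t-\omega^*,\gv_t-\nabla\risk(\omega_t)\rangle$; then $\E[\xi_t\mid\mathcal{F}_{t-1}]=0$, so $(\xi_t)_{t=1}^T$ is a martingale difference sequence. Since $\norm{\nabla\risk(\omega_t)}=\norm{\E[\gv_t\mid\mathcal{F}_{t-1}]}\le\E[\norm{\gv_t}\mid\mathcal{F}_{t-1}]\le D$ and $\omega_t,\omega^*\in\dom$ force $\norm{\omega_t-\omega^*}\le 2B_\omega$, we get $\abs{\xi_t}\le 2B_\omega\cdot 2D=4DB_\omega$ almost surely. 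Applying the Hoeffding--Azuma inequality to $(-\xi_t)$ yields, with probability at least $1-\delta$,
\begin{equation}
  -\sum_{t=1}^T\xi_t<4DB_\omega\sqrt{2T\log(1/\delta)} .
\end{equation}

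Finally I would assemble the bound. By convexity of $\risk$, $\risk(\omega_t)-\risk(\omega^*)\le\langle\nabla\risk(\omega_t),\omega_t-\omega^*\rangle=\langle\gv_t,\omega_t-\omega^*\rangle-\xi_t$. Summing over $t$, substituting the telescoped bound for $\sum_t\langle\omega_t-\omega^*,\gv_t\rangle$ and the Azuma bound for $-\sum_t\xi_t$, dividing by $T$, and using $\risk(\overline{\omega})\le\tfrac1T\sum_{t=1}^T\risk(\omega_t)$ (convexity again, i.e.\ Jensen) produces exactly \eqref{eq:subopt_bound_sgd_default}. The one genuinely delicate point is the martingale bookkeeping above: one must keep $\gv_t$'s dependence on the fresh sample $\mathbf{z}_t$ cleanly separated from the $\mathcal{F}_{t-1}$-measurable iterate $\omega_t$, so that $\xi_t$ is conditionally mean-zero with the stated almost-sure bound; once that is in place, everything else is the textbook calculation with the expectation step replaced by concentration.
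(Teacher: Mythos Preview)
Your proof is correct and follows essentially the same route as the paper: both telescope the projected-SGD step to bound $\sum_t\langle\omega_t-\omega^*,\gv_t\rangle$ (the paper cites this as Lemma~14.1 from \cite{shalev2014understanding}, you re-derive it), then apply Hoeffding--Azuma to a bounded difference sequence with the same $4DB_\omega$ bound, and finish with Jensen. The only cosmetic difference is that the paper folds the convexity gap into the sequence $a_t=(\risk(\omega_t)-\risk(\omega^*))-(\omega_t-\omega^*)^\tp\gv_t$ and treats it as a supermartingale difference, whereas you separate out the centered martingale $\xi_t$ and apply convexity afterwards; the two are equivalent since $a_t-\E[a_t\mid\mathcal{F}_{t-1}]=-\xi_t$.
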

Compared to the bound on $\E\left[ \risk(\overline{\omega})\right]-\risk(\omega^*)$ from~\cite{shalev2014understanding}, this one has an extra term $4DB_\omega\sqrt{2(\log(1 /\delta)) /T} $. The rest of the bound is the same. If we set $\zeta=2B_\omega /( D  \sqrt{T} )$ and bound $\norm{\gv}\le D,\ \norm{\omega^*-\omega_1}\le 2B_\omega$, this theorem implies that with $\PP>1-\delta$,
\begin{equation}
  \risk(\overline{\omega})-\risk(\omega^*)<  \frac{2 D  B_\omega}{\sqrt{T} }\left(1+2\sqrt{2\log \frac{1}{\delta}}\right). \label{eq:suboptimality_bound_convex_f}
\end{equation}

We will use a lemma from the original proof:
\begin{lemma}[Lemma 14.1 in~\cite{shalev2014understanding}]
  \label{lem:steps}
  Let $\mathbf{v}_1,\cdots,\mathbf{v}_{T}\in\R^P$ be a sequence of vectors and $\mathcal{B}\subset\R^P$ be a convex set. We define a sequence $\omega_t$ recursively. Let $\omega_1 \in\R^P$ be arbitrary and 
\begin{equation}
  \omega_{t+1}=\Pi_\dom(\omega_{t}-\zeta\mathbf{v}_t) \label{eq:default_sgd_step}
\end{equation} 
for $t>1$. Then, for any $\omega^*\in\dom$,
\begin{equation}
  \sum_{t=1}^{T}(\omega_t-\omega^*)^\tp\gv_t\le \frac{\norm{\omega^*-\omega_1}^2}{2\zeta}+\frac{\zeta}{2}\sum_{t=1}^{T} \norm{\gv_t}^2.
\end{equation}
\end{lemma}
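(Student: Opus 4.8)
The plan is to prove this purely deterministic inequality by tracking how the squared distance $\norm{\omega_t - \omega^*}^2$ from the iterate to the reference point $\omega^*$ evolves across a single update, and then summing a telescoping bound. Nothing about convexity of $\risk$ or unbiasedness of the $\gv_t$ is needed here: the $\gv_t$ are arbitrary fixed vectors, and the only structure we exploit is that $\dom$ is convex and that $\omega^* \in \dom$.

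First I would record the one key geometric fact: the Euclidean projection onto a closed convex set is non-expansive, i.e.\ $\norm{\Pi_\dom(x) - y} \le \norm{x - y}$ for every $x \in \R^P$ and every $y \in \dom$. Applying this with $x = \omega_t - \zeta\gv_t$ and $y = \omega^* \in \dom$, together with the update rule~\eqref{eq:default_sgd_step}, gives
\[
  \norm{\omega_{t+1} - \omega^*}^2 = \norm{\Pi_\dom(\omega_t - \zeta\gv_t) - \omega^*}^2 \le \norm{\omega_t - \zeta\gv_t - \omega^*}^2.
\]

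Next I would expand the right-hand side as
\[
  \norm{\omega_t - \zeta\gv_t - \omega^*}^2 = \norm{\omega_t - \omega^*}^2 - 2\zeta(\omega_t - \omega^*)^\tp\gv_t + \zeta^2\norm{\gv_t}^2,
\]
and rearrange to isolate the inner product:
\[
  (\omega_t - \omega^*)^\tp\gv_t \le \frac{\norm{\omega_t - \omega^*}^2 - \norm{\omega_{t+1} - \omega^*}^2}{2\zeta} + \frac{\zeta}{2}\norm{\gv_t}^2.
\]

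Finally I would sum this over $t = 1, \ldots, T$. The first group of terms telescopes to $(\norm{\omega_1 - \omega^*}^2 - \norm{\omega_{T+1} - \omega^*}^2)/(2\zeta)$, which I bound above by $\norm{\omega_1 - \omega^*}^2/(2\zeta)$ after discarding the nonnegative subtracted term; the second group sums to exactly $\frac{\zeta}{2}\sum_{t=1}^{T} \norm{\gv_t}^2$. Combining the two yields the claimed inequality. The only step requiring genuine care is the non-expansiveness estimate, which is precisely where the convexity of $\dom$ enters; it follows from the variational (obtuse-angle) characterization of the projection, namely $(x - \Pi_\dom(x))^\tp(y - \Pi_\dom(x)) \le 0$ for all $y \in \dom$, and I would either invoke this as a standard fact or deduce non-expansiveness from it in a single line.
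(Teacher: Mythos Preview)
Your proof is correct and is precisely the standard telescoping argument. The paper does not supply its own proof of this lemma but instead cites \cite[Lemma~14.1 and Section~14.4.1]{shalev2014understanding}, noting only that the original statement (without projection and with $\omega_1 = 0$) extends unchanged to the projected, shifted version; your argument is exactly that extension spelled out.
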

The original lemma omits the projection $\Pi_\dom$, but subsequent discussion in~\cite[Section 14.4.1]{shalev2014understanding} shows that the lemma holds with projection as well. The lemma was modified to accommodate nonzero $\omega_1$ without affecting the proof. If we further assume that $\norm{\mathbf{v}_t}\le  D $ for all $t$, then, with the learning rate $\zeta=2B_\omega / ( D \sqrt{T})$, we get 
  \begin{equation}
    \frac{1}{T}\sum_{t=1}^{T}(\omega_t-\omega^*)^\tp\gv_t\le\frac{2B_\omega D }{\sqrt{T}}.
  \end{equation}

\begin{proof}[Proof of Theorem~\ref{thm:highp_sgd}.]
  Let $\{\mathcal{F}_t\}_t$ be a filtration defined as
  \begin{equation}
    \mathcal{F}_t=\sigma(\mathbf{z}_1,\ldots,\mathbf{z}_t),
  \end{equation}
  where $\mathbf{z}_t\in\mathcal{Z}$ are the iid random data samples used to compute stochastic gradients $\mathbf{v}_t=\nabla_\omega l(\omega_t,\mathbf{z}_t)$. With this definition, the sequence $\{\omega_t\}_t$ is  $\mathcal{F}$-predictable and the sequence $\{\mathbf{v}_t\}_t$ is $\mathcal{F}$-adapted. We now define the $\mathcal{F}$-adapted sequence $\{a_t\}_t$ as
\begin{equation}
  a_t=\left(\risk(\omega_t)-\risk(\omega^*)\right)-(\omega_t-\omega^*)^\tp\gv_t.
\end{equation}
Since $\risk$ is convex,
\begin{equation}
  \E\left[ a_t | \mathcal{F}_{t-1}\right] =
  \left(\risk(\omega_t)-\risk(\omega^*)\right)-(\omega_t-\omega^*)^\tp\nabla f(\omega_t) \le  0.
\end{equation}
This means that $a_t$ form a supermartingale difference sequence. Furthermore, bounded domain of $\risk$ and $\norm{\gv_t}\le  D $ (from which follows Lipschitzness of $\risk$ with parameter $ D $) allow us to write
\begin{equation}
  |a_t|\le 4 D  B_\omega\quad\text{a.s.}
\end{equation}
Therefore, Hoeffding-Azuma inequality (Lemma~\ref{lem:azuma}) is applicable:
\begin{equation}
  \PP\left[ \sum_{t=1}^{T}\left(\risk(\omega_t)-\risk(\omega^*)\right)< \sum_{t=1}^{T}(\omega_t-\omega^*)^\tp\gv_t+4 D  B_\omega\sqrt{2T\log \frac{1}{\delta}}  \right] \ge  1-\delta.
  \label{eq:appl_azuma}
\end{equation}
Assume that the event in~\eqref{eq:appl_azuma} holds. Then, we can employ Lemma~\ref{lem:steps} to finish the proof:
\begin{align}
  \risk(\overline{\omega})-\risk(\omega^*) & \le \frac{1}{T}\sum_{t=1}^{T}(\risk(\omega_t)-\risk(\omega^*))\\
                                   & < \frac{1}{T}\sum_{t=1}^{T}(\omega_t-\omega^*)^\tp\mathbf{v}_t+4 D  B_\omega \sqrt{\frac{2}{T}\log \frac{1}{\delta}} \\
                                   & \le \frac{\norm{\omega^*-\omega_1}^2}{2\zeta T}+\frac{\zeta}{2T}\sum_{t=1}^{T} \norm{\gv_t}^2+4 D  B_\omega \sqrt{\frac{2}{T}\log \frac{1}{\delta}}
\end{align}
The first step here relies on the convexity of $\risk$. The third --- on Lemma~\ref{lem:steps}.  \[\]
\end{proof}

\section{Locally convex loss}
\label{sec:locally_convex_loss}
The analysis above requires $\risk$ to be convex on its entire domain. Often, this is an unrealistic assumption. We can relax it by requiring $\risk$ to only be convex \emph{locally}, in a region around $\omega^*$. We will call this region the \emph{convex basin} around $\omega^*$. However, if we want SGD to converge to  $\omega^*$, we still have to require that it is initialized within the convex basin. Now we have an additional worry: we need to avoid the trajectory escaping the basin at any point. Without knowing where in the basin did we land, we cannot simply enforce the trajectory to stay within it. Instead, we can provide a high-probability statement that the trajectory will stay in the basin without any enforcement.

\begin{theorem}[Containing SGD within a convex basin] \label{thm:sgd_containment}
  Let $\risk:\dom\to\R$ be defined over a ball 
\begin{equation}
  \dom=\left\{ \omega:\ \norm{\omega}\le B_\omega \right\} 
\end{equation}
  as before. Starting at a point $\omega_1\in\dom$, we run SGD for $T$ steps with constant stepsize $\zeta$. Assume that $\norm{\mathbf{v}_t}\le  D $ a.s. Let $\omega^*\in\argmin_{\omega\in\dom}\risk(\omega)$ and assume $\norm{\omega_1-\omega^*}< \epsilon$. Further, assume that $\risk$ is convex within the region 
  \begin{equation}
    \mathcal{C}=\left\{ \omega: \norm{\omega^*-\omega}<\epsilon_c \right\} \subset\dom
  \end{equation}
  for some  $\epsilon_c>\epsilon$. Finally, assume that for some $\delta>0$,
\begin{multline}
  \!\!\!\!\!\zeta^2 D ^2 T+5(\zeta^2 D ^2+4\zeta D  B_\omega)\sqrt{T\left(\log\log (eT(2\zeta^2 D ^2+8\zeta D  B_\omega)^2)\right)_++T\log(2 /\delta) } \\
  < \epsilon_c^2-\epsilon^2.\label{eq:containment_req}
\end{multline}
  Then, with probability $\PP\ge 1-2\delta$, the entire trajectory of SGD will be contained within $\mathcal{C}$ ($\forall t\ \omega_t\in\mathcal{C}$) and the suboptimality gap will be bounded as
  \begin{equation}
    \risk(\overline{\omega})-\risk(\omega^*)< \frac{\epsilon^2}{2\zeta T}+\frac{\zeta  D ^2}{2}+4 D  B_\omega \sqrt{\frac{2}{T}\log \frac{1}{\delta}}. \label{eq:contained_sgd_suboptimality_bound}
  \end{equation}
\end{theorem}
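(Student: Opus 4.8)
The plan is to follow the squared distance $\norm{\omega_t-\omega^*}^2$ along the trajectory, control its drift with a single stopped martingale, and then reuse the machinery of Theorem~\ref{thm:highp_sgd} to upgrade containment into the suboptimality bound. First, since $\Pi_\dom$ is non-expansive and $\omega^*\in\dom$,
\[
  \norm{\omega_{t+1}-\omega^*}^2 \le \norm{\omega_t-\omega^*}^2+\Delta_t,\qquad \Delta_t:=\zeta^2\norm{\gv_t}^2-2\zeta(\omega_t-\omega^*)^\tp\gv_t .
\]
Conditioning on $\mathcal F_{t-1}=\sigma(\mathbf z_1,\dots,\mathbf z_{t-1})$ and using unbiasedness of $\gv_t$ gives $\E[\Delta_t\mid\mathcal F_{t-1}]\le \zeta^2 D^2-2\zeta(\omega_t-\omega^*)^\tp\nabla\risk(\omega_t)$, and as soon as $\omega_t\in\conv$ first-order convexity yields $(\omega_t-\omega^*)^\tp\nabla\risk(\omega_t)\ge\risk(\omega_t)-\risk(\omega^*)\ge 0$, so the conditional drift there is at most $\zeta^2 D^2$. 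To make this an honest martingale statement even though ``being in $\conv$'' is a random event, I would introduce the first exit time $\tau=\inf\{t:\omega_t\notin\conv\}$ (note $\tau\ge 2$ because $\norm{\omega_1-\omega^*}<\epsilon<\epsilon_c$) and set $\eta_t:=(\Delta_t-\E[\Delta_t\mid\mathcal F_{t-1}])\,\mathbf 1[t<\tau]$. Since $\{t<\tau\}=\{\omega_1,\dots,\omega_t\in\conv\}$ is $\mathcal F_{t-1}$-measurable (the iterates are $\mathcal F$-predictable), $\{\eta_t\}$ is a genuine martingale difference sequence, and on $\{t<\tau\}$ one has $\norm{\omega_t-\omega^*}\le 2B_\omega$, whence $|\eta_t|\le 2(\zeta^2 D^2+4\zeta D B_\omega)=2\zeta^2 D^2+8\zeta D B_\omega$ a.s. — which is exactly the per-step range appearing in~\eqref{eq:containment_req}.

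Second, I would apply the time-uniform (anytime) Hoeffding--Azuma inequality of~\cite{kassraie2023anytime} to $S_t:=\sum_{s\le t}\eta_s$ with per-step bound $c_0:=2\zeta^2 D^2+8\zeta D B_\omega$. With failure probability $\delta$ this gives an event on which, simultaneously for all $t\ge1$, $S_t<5(\zeta^2 D^2+4\zeta D B_\omega)\sqrt{t(\log\log(e t c_0^2))_++t\log(2/\delta)}$. On this event, suppose $\tau\le T$. Telescoping the recursion above up to $\tau$ and using $\Delta_s\le\eta_s+\zeta^2 D^2$ for $s<\tau$ (valid because $\omega_s\in\conv$ there) yields $\epsilon_c^2\le\norm{\omega_\tau-\omega^*}^2<\epsilon^2+\zeta^2 D^2 T+S_{\tau-1}$, and since the right-hand bound on $S_t$ is non-decreasing in $t$ we may replace $\tau-1$ by $T$; this contradicts assumption~\eqref{eq:containment_req}. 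Hence $\tau>T$, i.e.\ $\omega_t\in\conv$ for all $t\le T$. The step I expect to be the main obstacle is precisely this coupling of the data-dependent stopping time $\tau$ with the concentration bound: a plain Azuma bound plus union over $t\le T$ would cost a $\log T$ rather than a $\log\log T$, so one genuinely needs a bound holding uniformly in $t$ so it can be instantiated at $t=\tau-1$, while the truncation $\mathbf 1[t<\tau]$ must be arranged to preserve the martingale property without assuming in advance that the iterate lies in $\conv$.

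Third, on the containment event the suboptimality bound~\eqref{eq:contained_sgd_suboptimality_bound} follows by re-running the proof of Theorem~\ref{thm:highp_sgd}, again with the supermartingale differences stopped at $\tau$. Writing $a_t:=(\risk(\omega_t)-\risk(\omega^*))-(\omega_t-\omega^*)^\tp\gv_t$ and $b_t:=a_t\mathbf 1[t<\tau]$, the same convexity-on-$\conv$ inequality gives $\E[b_t\mid\mathcal F_{t-1}]\le 0$, while $|b_t|\le 4D B_\omega$ ($D$-Lipschitzness of $\risk$ on $\conv$ together with $\norm{\omega_t-\omega^*}\le 2B_\omega$). The ordinary Hoeffding--Azuma inequality (Lemma~\ref{lem:azuma}) then gives, with probability $\ge1-\delta$, $\sum_{t\le T}b_t<4D B_\omega\sqrt{2T\log(1/\delta)}$. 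Intersecting with the containment event (so $b_t=a_t$ for all $t\le T$ and $\overline\omega=\tfrac1T\sum_t\omega_t\in\conv$), one writes $\sum_{t\le T}(\risk(\omega_t)-\risk(\omega^*))=\sum_{t\le T}b_t+\sum_{t\le T}(\omega_t-\omega^*)^\tp\gv_t$, bounds the second sum by Lemma~\ref{lem:steps} with $\tfrac{\epsilon^2}{2\zeta}+\tfrac{\zeta T D^2}{2}$, divides by $T$, and applies Jensen ($\risk(\overline\omega)\le\tfrac1T\sum_t\risk(\omega_t)$ by convexity on $\conv$) to arrive at~\eqref{eq:contained_sgd_suboptimality_bound}. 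A union bound over the two events of probability $\ge1-\delta$ each gives the claimed $\PP\ge1-2\delta$.
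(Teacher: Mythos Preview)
Your proposal is correct and follows essentially the same approach as the paper: control $\norm{\omega_t-\omega^*}^2$ recursively, apply the time-uniform Hoeffding--Azuma bound to the centered increments (with the same per-step range $2\zeta^2D^2+8\zeta D B_\omega$) to obtain containment, then reuse the argument of Theorem~\ref{thm:highp_sgd} on the contained trajectory for the suboptimality bound. The only formal difference is that you package the induction via a stopping time $\tau$ and stopped martingales, whereas the paper uses the nested events $F_t=\{\omega_1,\dots,\omega_t\in\conv\}$ and shows $\PP[F_{t+1}\mid E\cap F_t]=1$; your use of non-expansiveness of $\Pi_\dom$ also slightly streamlines the recursion compared to the paper, which explicitly tracks that projection is never invoked.
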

Note that if $\epsilon_c>2\epsilon$, a simple solution would be to project SGD onto the known region
\begin{equation}
  \widetilde{\mathcal{C}}=\left\{ \omega: \norm{\omega_0-\omega}\le \epsilon \right\},
\end{equation}
since in this case $\omega^*\in\widetilde{\mathcal{C}}$ and $\risk$ is convex on $\widetilde{\mathcal{C}}$. However, this stops working when $\epsilon_c<2\epsilon$. What is more, this approach requires accurate knowledge of $\epsilon$, while the lemma above simply asks for enough data as $\epsilon_c^2-\epsilon^2$ becomes smaller. The precise relationship between the epsilons, $\zeta$,  $T$, and $\delta$ is captured by~\eqref{eq:containment_req}. Asymptotically, as $\delta\to 0,\ \zeta\to 0,\ T\to \infty$, it asks for
\begin{equation}
  O\left(\zeta^2 T +\zeta\sqrt{T\left(\log \log (\zeta^2 T)\right)_+ + T\log(1 /\delta)} \right) < \epsilon_c^2-\epsilon^2.
\end{equation}
One possible choice of $\zeta$ that satisfies this requirement for large $T$ is
\begin{equation}
  \zeta=O\left( \frac{\epsilon_c^2-\epsilon^2}{\sqrt{T \log (1 /\delta)}}\right).\label{eq:choice_zeta}
\end{equation}
% It works because the logarithmic factors under the square root can be bounded as
% \begin{equation}
  % \zeta \le \sqrt{1 /T} < \sqrt{e^\delta /T}  \implies \log \log (\zeta^2 T) + \log (1/\delta) 
% \end{equation}
Thus, we adjust the asymptotics of $\zeta$ by a factor of $(\epsilon_c^2-\epsilon^2) /\sqrt{\log (1 /\delta)}$ compared to the case where $\risk$ is convex on the entire domain. Despite the different $\zeta$, for a fixed $\epsilon_c^2-\epsilon^2$ asymptotically we get the same high-probability bound as~\eqref{eq:suboptimality_bound_convex_f}. Epsilons enter the high-probability guarantee as
\begin{equation}
  \risk(\overline{\omega})-\risk(\omega^*)<O\left(\sqrt{\frac{\log(1 /\delta)}{T}}\Bigg(1+\frac{\epsilon^2}{\epsilon_c^2-\epsilon^2}+\frac{\epsilon_c^2-\epsilon^2}{\log(1 /\delta)}\Bigg)\right). \label{eq:new_contained_sgd_suboptimality_bound}
\end{equation}
However, selecting  $\zeta$ according to~\eqref{eq:choice_zeta} might require a large $T$ to satisfy~\eqref{eq:containment_req}. It also requires a known lower bound on $\epsilon_c^2-\epsilon^2$. If we are willing to sacrifice the asymptotics to make Theorem~\ref{thm:sgd_containment} work in the small $T$ and unknown $\epsilon_c^2-\epsilon^2$ regime, we can make $\zeta$ decay faster in $T$.

Internally, the theorem depends on a high-probability uniform bound for a maringale with bounded differences. We rely on the time-uniform Hoeffding-Azuma inequality, in the form it was presented in~\cite{kassraie2023anytime}. This form was itself adapted from the recent work~\cite{howard2020time}, which presents a new method to derive so-called \emph{curved} uniform bounds for martingales.

\begin{proof}[Theorem~\ref{thm:sgd_containment}]

  Assume that during the first $t-1$ steps, the projection operator $\Pi_\dom$ was never involved, i.e. that
\begin{equation}
  \forall k\le t-1\quad \norm{\omega_k-\zeta \mathbf{v}_k}<B_\omega.
\end{equation}
Let $\widetilde{\omega}_{t+1}=\omega_{t}-\zeta \mathbf{v}_t$. Note that we have $\omega_{t+1}=\Pi_\dom \widetilde{\omega}_{t+1}$. Then,
\begin{align}
  \norm{\widetilde{\omega}_{t+1}-\omega^*}^2 &= \norm{\omega_t-\omega^*}^2+\zeta^2\norm{\mathbf{v}_t}^2+2\zeta (\omega^*-\omega_t)^\tp \mathbf{v}_t.
\end{align}
Since by assumption for $k<t$ we have  $\omega_{k+1}=\omega_k-\zeta \mathbf{v}_k$, we can apply the above equation recursively and get
\begin{equation}
  \norm{\widetilde{\omega}_{t+1}-\omega^*}^2=\norm{\omega_1-\omega^*}^2+\zeta^2 \sum_{k=1}^{t} \norm{\mathbf{v}_k}^2+2\zeta \sum_{k=1}^{t} (\omega^*-\omega_k)^\tp \mathbf{v}_k. \label{eq:rewrite_normdiff}
\end{equation}
Now the way forward becomes more clear. We know that $\norm{\omega_1-\omega^*}^2<\epsilon^2$, and we need to show that the two sums at all times stay below  $\epsilon_c^2-\epsilon^2$. Let us introduce
\begin{equation}
  a_k=\zeta^2\norm{\mathbf{v}_k}^2+2\zeta(\omega^*-\omega_k)^\tp\mathbf{v}_k-\E\left[\zeta^2\norm{\mathbf{v}_k}^2+2\zeta(\omega^*-\omega_k)^\tp\mathbf{v}_k\mid\mathcal{F}_{k-1}  \right] .
\end{equation}
It is an $\mathcal{F}$-adapted martingale difference sequence. We also know that
\begin{equation}
  |a_k|\le 2\zeta^2  D ^2+8\zeta  D  B_\omega=:B_a\quad\text{a.s.} \label{eq:ak_bound}
\end{equation}
Let $S_t=\sum_{k=1}^{t}a_k$. By Lemma~\ref{lem:uniform_azuma},
\begin{equation}
  \PP\left[ \forall t\quad S_{t}<\frac{5B_a}{2}\sqrt{t\left((\log\log etB_a^2)_++\log(2 /\delta)\right) }  \right] \ge 1-\delta.
\end{equation}
We will denote the event in the above as $E$. We will also introduce a shorthand
\begin{equation}
  B_\delta(t)=\frac{5B_a}{2}\sqrt{t\left((\log\log etB_a^2)_++\log(2 /\delta)\right) }.
\end{equation}
Whenever $E$ holds, we can write for all $t$ 
\begin{align}
    &\zeta^2 \sum_{k=1}^{t} \norm{\mathbf{v}_k}^2+2\zeta \sum_{k=1}^{t} (\omega^*-\omega_k)^\tp \mathbf{v}_k \nonumber\\
    &\qquad\qquad\qquad< \sum_{k=1}^{t}\E\left[\zeta^2\norm{\mathbf{v}_k}^2+2\zeta(\omega^*-\omega_k)^\tp\mathbf{v}_k\mid\mathcal{F}_{k-1}  \right] +B_\delta(t)  \\
  &\qquad\qquad\qquad\le \zeta^2 D ^2t +2\zeta\sum_{k=1}^{t}(\omega^*-\omega_k)^\tp \nabla \risk(\omega_k)+B_\delta(t). \label{eq:bound_two_sums}
\end{align}
Let $F_t$ be the event defined as
\begin{equation}
  F_t=\left\{ \forall k\le t\quad \omega_k\in\mathcal{C} \right\}\ \cap\ \left\{ \forall k< t\quad \omega_k-\zeta\mathbf{v}_k\in\mathcal{C} \right\} .
\end{equation}
Note that $F_t$ implies that the projection $\Pi_\dom$ was not involved during the first  $t-1$ steps, since $\mathcal{C}\subset\dom$. Therefore, whenever $F_t$ holds, we can use~\eqref{eq:rewrite_normdiff}. The events $\{E\cap F_t\}_t$ form a nested sequence. Hence,
\begin{align}
  \PP\left[ E\cap F_T \right] &=\PP\left[ E\cap F_1 \right] \prod_{t=1}^{T-1}\PP\left[ E\cap F_{t+1}|E\cap F_t \right] \\
                              &=  \PP\left[ E\right] \prod_{t=1}^{T-1}\PP\left[ F_{t+1}|E\cap F_t \right]. \label{eq:p_decomposition}
\end{align}
In the above, we used that by the conditions of the theorem, $F_1$ always holds. We will now focus on the conditional probability $\PP\left[ F_{t+1}|E\cap F_t \right]$. Assuming $E$ and $F_t$ hold, we use~\eqref{eq:rewrite_normdiff} and~\eqref{eq:bound_two_sums} as
\begin{align}
  \norm{\widetilde{\omega}_{t+1}-\omega^*}^2 &=\norm{\omega_1-\omega^*}^2+\zeta^2 \sum_{k=1}^{t} \norm{\mathbf{v}_k}^2+2\zeta \sum_{k=1}^{t} (\omega^*-\omega_k)^\tp \mathbf{v}_k \\
                                &< \norm{\omega_1-\omega^*}^2+\zeta^2  D ^2t +2\zeta \sum_{k=1}^{t}(\omega^*-\omega_k)^\tp \nabla \risk(\omega_k)+B_\delta(t).
\end{align}
Since $F_t$ holds,  $\risk$ is convex at $\omega_k$ for all  $k\le t$, so $(\omega^*-\omega_k)^\tp \nabla \risk(\omega_k)\le \risk(\omega^*)-\risk(\omega_k)$. Taking into account that $\omega^* \in\argmin_{\omega\in\dom} \risk(\omega)$,
\begin{align}
  \norm{\widetilde{\omega}_{t+1}-\omega^*}^2&<\norm{\omega_1-\omega^*}^2+\zeta^2  D ^2t+B_\delta(t) \\
                                          &< \epsilon^2+(\epsilon_c^2-\epsilon^2)=\epsilon_c^2. \label{eq:desired_next_norm_bound}
\end{align}
In the last inequality, we used~\eqref{eq:containment_req} and the definition~\eqref{eq:ak_bound} of $B_a$. It follows from~\eqref{eq:desired_next_norm_bound} that $\omega_{t+1}=\widetilde{\omega}_{t+1}$, and, together with $F_t$, this gives that  $F_{t+1}$ also holds. Overall, we have shown that $\PP\left[ F_{t+1}|E\cap F_t \right]=1$. Substituting into~\eqref{eq:p_decomposition}, we get
\begin{equation}
  \PP\left[ F_T \right] \ge  \PP\left[ E\cap F_T \right] =\PP\left[ E \right] \ge 1-\delta. \label{eq:btprob_bound}
\end{equation}
Once we know that $F_T$ holds, SGD stays in the convex basin of $\risk$, and Theorem~\ref{thm:highp_sgd} becomes applicable (our SGD takes the same steps as SGD on a restriction $\left. \risk\right|_\mathcal{C}$ ). Choosing the same probability of failure $\delta$,
\begin{equation}
  \PP\left[ \left.\risk(\overline{\omega})-\risk(\omega^*)< \frac{\epsilon^2}{2\zeta T}+\frac{\zeta  D ^2}{2}+4 D  B_\omega \sqrt{\frac{2}{T}\log \frac{1}{\delta}}\quad \right|B_T\right] \ge  1-\delta.
\end{equation}
Together with~\eqref{eq:btprob_bound}, this gives the statement of the theorem.
\end{proof}

\section{Preconditioning in E2TC}
\label{sec:preconditioned_sgd}
Now we can discuss the SGD variant that is performed by E2TC. During the second stage of exploration, Algorithm~\ref{alg:main_algo} jointly refines the parameters $\omega=(\w,\theta)$. It minimizes the risk function $\risk$, which we provide here for a quick reference:
\begin{equation}
  \risk(\w,\theta)=\E_{X,r}\left[ \left(\w^\tp\phi_\theta(X)-r \right)^2 \right]. \label{eq:rewritef}
\end{equation}
Here, $r=\wstartp\pstar(X)+\eta$ is the random reward from action $X$, and $\eta\sim\mathcal{H}$ is zero-mean noise. The function is defined over a product of closed convex sets $\dom_\w\subset\R^d$ and $\dom_\theta\subset\R^{d_0}$. Given a random action-reward pair $(\widetilde{X}_t, \tilde{r}_t)$, we get the following unbiased gradient estimates w.r.t. parameters $\w_t$ and $\theta_t$:
\begin{align}
  \gv^\w_t&=\left.\nabla_\w\right|_{\w=\w_t}\left(\w^\tp_t\phi_{\theta_t}(\widetilde{X}_t)-\tilde{r}_t  \right)^2=2(\w^\tp_t\phi_{\theta_t}(\widetilde{X}_t)-\tilde{r}_t)\phi_{\theta_t}(\widetilde{X}_t), \label{eq:vv_def1}\\
    \gv^\theta_t&=\left.\nabla_\theta\right|_{\theta=\theta_t}\left(\w^\tp_t\phi_{\theta_t}(\widetilde{X}_t)-\tilde{r}_t  \right)^2=2\left(\w^\tp_t\phi_{\theta_t}(\widetilde{X}_t)-\tilde{r}_t  \right)J_\theta(\widetilde{X}_t)^\tp \w_t. \label{eq:vv_def2}
\end{align}
It is worth reminding here that $J_\theta(X) \in\R^{d\times d_0}$ is the Jacobian of $\phi_\theta(X)$ w.r.t. $\theta\in \R^{d_0}$. By Assumption~\ref{as:bound}, its norm is bounded by $\norm{J_\theta(\mathbf{x})}_2\le L_\phi$. Overall, $\gv^\w_t$ and $\gv^\theta_t$ can be bounded almost surely by $D_\w$ and $D_\theta$ respectively, defined in~\eqref{eq:dw_def} and~\eqref{eq:dtheta_def}.

SGD in E2TC is initialized at $\omega_0=(\wzero,\theta_0)$, where  $\wzero$ is produced by Ridge regression. Results from Chapter~\ref{ch:linear} give us high-probability bounds on the suboptimality gap $\risk(\omega_0)-\risk(\omega^*)$. Unfortunately, this cannot be directly plugged into the bounds in Theorem~\ref{thm:highp_sgd} or~\ref{thm:sgd_containment}. If the risk was Polyak-Łojasiewicz (strongly convex functions are also included in that class, but they are not as relevant for us, as we discuss at the end of Section~\ref{sec:ridge_lls}), then a low suboptimality gap of $\omega_0$ would actually imply that it is close in $2$-norm to some minimizer $\omega^*\in\argmin_\omega\risk(\omega)$~\cite{garrigos2023square}. Convex functions do not necessarily satisfy this property. 

Although Ridge regression cannot provide a reasonable bound on $\norm{\wzero-\wstar}$, in Section~\ref{sec:bandit_lls} we have seen that it does give us a bound~\eqref{eq:highp_bound_hsigl} on the matrix-weighted norm $\norm{\wzero-\wstar}_{\hsigl}$, which behaves asymptotically as
\begin{equation}
  \norm{\wzero-\wstar}_{\hsigl}^2=\widetilde{O}\left(\epsilon_0^2d+\frac{d}{\sqrt{T_1} }+\lambda\right). \label{eq:rewrite_asymp} 
\end{equation}
This behaviour is as expected: there is a constant term $O(\epsilon_0^2 d )$ that comes from misspecification, a term decaying with $T_1$, and the cost of regularization.

To get a bound on SGD suboptimality in terms of this norm, we need to modify the algorithm. Instead of $\risk$, we can minimize a similar loss $\crisk:\dom_{\widetilde{\w}}\times\dom_\theta\to\R$,
\begin{equation}
  \crisk(\widetilde{\w},\theta)=\risk\left(\hsigl^{-1 /2}\widetilde{\w}, \theta\right). \label{eq:def_ftilde}
\end{equation}
We introduced a change of variables 
\begin{equation}
  \widetilde{\w}=\hsigl^{1 /2}\w,\quad \dom_{\widetilde{\w}}=\left\{\widetilde{\w}:\ \hsigl^{-1 /2}\widetilde{\w}\in\dom_\w\right\}.  \label{eq:changew}
\end{equation}
Under this change, the 2-norm transforms into the desired matrix norm. Note that we used the regularized sample covariance $\hsigl$, as opposed to true covariance, because this is the matrix that is available when running the algorithm. Thus, results from Section~\ref{sec:ridge_lls} are not directly applicable, and we have to rely on the slightly worse bounds from Section~\ref{sec:bandit_lls}. The rule also requires that $\hsigl$ is invertible, which is why we cannot use $\hsigz$, even though this would give a better asymptotic bound than~\eqref{eq:rewrite_asymp}.

Running SGD on $\crisk$ is not equivalent to running it on $\risk$. The trajectory on $\crisk$ will be generated by the following update rules:
\begin{equation}
  \widetilde{\w}_{t+1}=\Pi_{\dom_{\widetilde{\w}}}(\widetilde{\w}_t-\zeta_\w\widetilde{\gv}^\w_t),\quad\theta_{t+1}=\Pi_{\dom_\theta}(\theta_t-\zeta_\theta\gv^\theta_t),
\end{equation}
where $\zeta_\w,\ \zeta_\theta>0$ are the learning rates corresponding to the last layer and to other parameters, respectively; $\widetilde{\gv}^\w_t$ and $\gv^\theta_t$ are the stochastic gradient estimates. The update rule of $\widetilde{\w}_t$ can be represented in the original space of $(\w,\theta)$ as
\begin{equation}
  \w_{t+1}=\Pi_{\dom_{\w}}(\w_{t}-\zeta_\w\hsigl^{-1 /2}\widetilde{\gv}^\w_t). \label{eq:vvtil_step}
\end{equation}
Since $\widetilde{\gv}^\w_t$ is an unbiased gradient estimator of $\crisk$, it must satisfy
\begin{equation}
  \E\left[ \widetilde{\gv}^\w_t |\widetilde{\w}_{t},\theta_{t}\right] =\nabla_{\widetilde{\w}} \crisk(\widetilde{\w}_t,\theta_t)=(\widehat{\Sigma(\theta_0)}+\lambda I)^{-1 /2}\nabla_\w \risk(\w_t,\theta_t). \label{eq:vvtil_def}
\end{equation}
Now, let $\gv^\w_t$ be a gradient estimate of $\risk$, i.e. a random variable such that
\begin{equation}
  \E\left[ \gv^\w_t |\w_t,\theta_t\right] =\nabla_\w \risk(\w_t,\theta_t).
\end{equation}
Then, 
\begin{equation}
  \widetilde{\gv}^\w_t=(\widehat{\Sigma(\theta_0)}+\lambda I)^{-1 /2}\gv^\w_t \label{eq:changev}
\end{equation}
satisfies~\eqref{eq:vvtil_def}, and therefore provides an unbiased gradient estimator for $\crisk$. Substituting it into~\eqref{eq:vvtil_step}, together with the unchanged rule for $\theta$, we get an update rule of the form
\begin{equation}
  \w_{t+1}=\Pi_{\dom_\w}(\w_{t}-\zeta_\w\hsigl^{-1}{\gv}^\w_t),\quad \theta_{t+1}=\Pi_{\dom_{\theta}}(\theta_t-\zeta_\theta\gv^\theta_t). \label{eq:vv_step}
\end{equation}
This is the update rule that we use in E2TC. In general, performing updates of the form $\w_{t+1}=\w_t-\zeta_\w A^{-1}\gv^\w_t$ instead of $\w_{t+1}=\w_t-\zeta_\w \gv^\w_t$ is called \emph{preconditioning} of SGD, and it is sometimes used to improve convergence of the algorithm in practice. As we shall see in the following chapter, the preconditioned variant with the update rule~\eqref{eq:vv_step} is also helpful theoretically. In our case, it provides bounds suitable for analysis after the first phase of E2TC. 

The coordinate transformation given by~\eqref{eq:def_ftilde} and~\eqref{eq:changew} hints at how the modified guarantees materialize. Let us briefly consider the general setting of a globally convex loss function from Section~\ref{sec:global_sgd} again. If we use updates of the form $\omega_{t+1}=\Pi_\dom(\omega_t-\zeta A^{-1}\gv_t)$, we can apply Theorem~\ref{thm:highp_sgd} to $\crisk(\widetilde{\omega})=\risk(A^{-1 /2}\widetilde{\omega})$, where $\widetilde{\omega}=A^{1 /2}\omega$. We would get a high-probability bound that translates back into the space of $\omega$ as
\begin{equation}
  \risk(\overline{\omega})-\risk(\omega^*)< \frac{\norm{\omega^*-\omega_1}_{A}^2}{2\zeta T}+\frac{\zeta}{2T}\sum_{t=1}^{T} \norm{\gv_t}_{A^{-1}}^2+
  4 D  B_\omega \sqrt{\frac{2}{T}\log \frac{1}{\delta}}. \label{eq:subopt_bound_sgd_default}
\end{equation}
Note how the matrix-weighted norms appeared in this bound compared to~\eqref{eq:subopt_bound_sgd_default}, which only uses $2$-norms.

\chapter{Analysis of the main algorithm}
\label{ch:combined}
Previous chapters provide the building blocks for analyzing the performance of the entire E2TC. Here, we bring these blocks together. We show sublinear regret assuming that the risk is locally convex around the minimum and that the pre-trained weights are, in a certain sense, close enough to the optimal ones. We require a special shape of the convex basin for our results to be applicable. Our main result, Theorem~\ref{thm:main}, provides regret and suboptimality gap bounds without hard-coded hyperparameter choices. This result can therefore be used in various situations as a template for hyperparameter tuning. Under the constraints of the theorem, one can optimize the desired bound w.r.t. the hyperparameters of E2TC. In Section~\ref{sec:hyperparams_no_assumptions}, we perform this optimization and show how theory dictates optimal hyperparameter choices for minimal sample complexity and regret. We arrive at a regret bound of $\widetilde{O}((KT)^{4 /5})$ in the regime $d=O(T^{2 /5})$, and to a family of dimension-dependent bounds for $d$ that scales faster than $T^{2 /5}$.

\section{Main result}
Convex loss functions that are not strongly convex occupy the central place in this thesis. Analogously to Theorem~\ref{thm:sgd_containment}, we can show that if SGD is initialized at a point $\omega_0=(\wzero,\theta_0)$ ``sufficiently close'' to a true minimizer $\omega^*=(\wstar,\theta^*)$, and the loss is convex in a basin around  $\omega^*$, then with high probability the trajectory of SGD will not escape the basin and the suboptimality gap of the final estimate will be small. Since we can only show that $\wzero$ is close to  $\wstar$ in terms of the matrix norm distance $\norm{\wzero-\wstar}_{\sigz}$ or $\norm{\wzero-\wstar}_{\hsigl}$, the shape of the convex basin has to be adjusted accordingly. The initial weights $\theta_0$ of the representation network also have to be close to  $\theta^*$. Since $\theta_0$ is provided to the algorithm from outside, we will simply require that the $2$-norm distance $\epsilon_\theta=\norm{\theta_0-\theta^*}$ is small. Note that $\epsilon_0$ given by~\eqref{eq:epsilonzero_def} can be bounded by  $\epsilon_\theta$ under Assumption~\ref{as:bound}:
\begin{equation}
\epsilon_0\le B_\w L_\phi \epsilon_\theta. \label{eq:epsilonzero_bound_epsilontheta}
\end{equation}
However, we believe it is instructive to keep both $\epsilon_\theta$ and  $\epsilon_0$ when formulating the result, since the bound~\eqref{eq:epsilonzero_bound_epsilontheta} can be rather loose.

The main theorem intentionally provides a general statement without committing to a hyperparameter selection. In Section~\ref{sec:hyperparams_no_assumptions}, we provide a less general and more clear statement which follows from this theorem when hyperparameters are chosen in a certain way. Before presenting the theorem, we discuss its assumptions in detail. These assumptions are made in addition to the three basic assumptions we presented in Section~\ref{sec:formulation}.

The convexity requirement on the risk function reads as follows:
\begin{assumption}[Local convexity of $\risk$] \label{as:localconv}
  Let $\risk:\dom_\w\times\dom_\theta \to\R$ be defined by~\eqref{eq:rewritef} on the domains
\begin{equation}
  \dom_\w=\left\{ \w:\ \norm{\w}\le B_\w \right\},\quad\dom_\theta=\left\{ \theta:\ \norm{\theta}\le B_\theta \right\}.
\end{equation}
  There exists $0<\epsilon_c<1$ such that $\risk(\w,\theta)$ is convex in the region $\conv$ around $(\wstar,\theta^*)\in\argmin_{\w\in\dom_\w,\theta\in\dom_\theta}\risk(\w,\theta)$. The region is defined as
  \begin{equation}
  \conv=\left\{ (\w,\theta):\ \norm{\w-\wstar}_{\sigz}^2+\norm{\theta-\theta^*}_2^2<\epsilon_c^2\right\} \subset \dom_\w\times\dom_\theta\label{eq:conv_def}.
  \end{equation}
\end{assumption}
The convex region $\conv$ is a ball around  $\omega^*$ with respect to a metric generated by a special norm which we will denote  $\norm{\cdot}_c$. For $\omega=(\w,\theta)$, it is defined as
\begin{equation}
  \norm{\omega}_c^2=\norm{\w}_{\sigz}^2+\norm{\theta}^2.
\end{equation}
The requirement for $\risk$ to be convex in the ball $\conv$ can be seen as somewhat contrived, especially because of the dependency of the norm $\norm{\cdot}_c$ on $\theta_0$ through $\Sigma_0$. However, there is no clear way to get rid of this dependency, because the linear estimate  $\wzero$ is not in general constrained along the nullspace of  $\Sigma_0$. If the projection of $\wzero$ onto this nullspace can be arbitrary, we need the projection of $\conv$ to be the entire nullspace as well. Otherwise, there is no guarantee that $\w_0$ lands into the basin. A helpful way to think about this requirement is as follows. When $(\w,\theta)\in\conv$, we have that $\norm{\w-\wstar}_{\Sigma_0}<\epsilon_c$ and $\norm{\theta-\theta^*}<\epsilon_c$. Therefore, we know from~\eqref{eq:twoterms} that $\risk(\w,\theta_0)-\risk(\wstar,\theta^*)<\epsilon_c+\epsilon_0$. When $\risk$ is $L_\theta$-Lipschitz w.r.t. $\theta$, we also know that $|\risk(\w,\theta)-\risk(\w,\theta_0)|<2L_\theta\epsilon_c$. Note that $\theta_0\in \conv_\theta$, since $\epsilon_\theta<\epsilon_{c,\theta}$. Together, these facts give us that
\begin{align}
  \risk(\w,\theta)-\risk(\wstar,\theta^*) &=\risk(\w,\theta)-\risk(\w,\theta_0)+\risk(\w,\theta_0)-\risk(\wstar,\theta^*) \\
                                  &<(2L_\theta+1)\epsilon_c+\epsilon_0.
\end{align}
Thus, the convexity condition can be replaced by a (stronger) requirement that $\risk$ is convex in the region $\conv_l$ of low loss, which is somewhat more intuitive:
 \begin{equation}
  \conv\subset\conv_l=\left\{ (\w,\theta):\ \risk(\w,\theta)-\risk(\wstar,\theta^*)< (2L_\theta+1)\epsilon_c+\epsilon_0\right\}.
\end{equation}

Assumption~\ref{as:localconv} imposes $\epsilon_c<1$ without loss of generality. If it does not hold, we can rescale the parameter space appropriately and adjust the constants $B_\w$ and $B_\theta$ that bound the size of the domain. At the same time,  $\epsilon_c<1$ significantly simplifies the exposition of the result.

SGD only has a chance of initializing within the convex basin $\conv$ if the provided pre-trained weights $\theta_0$ are good enough. The following assumption formalizes this requirement:
\begin{assumption}[Quality of $\theta_0$] \label{as:quality}
  The provided $\theta_0$ satisfies
   \begin{equation}
     2\epsilon_0^2d+\epsilon_\theta^2<\epsilon_c^2, \label{eq:epsilons_small}
  \end{equation}
  where
  \begin{equation}
    \epsilon_0^2=\E_X\left[ \left(\wstartp(\pzero(X)-\pstar(X))\right)^2\right],\quad \epsilon_\theta^2=\norm{\theta_0-\theta^*}^2.
  \end{equation}
\end{assumption}
The intuition for~\eqref{eq:epsilons_small} is as follows. In the simplest case when $\epsilon_\theta>\epsilon_c$, SGD cannot be initialized in $\conv$ simply because for any  $\w$,  $(\w,\theta_0)\not\in \conv$. Alternatively, if  $\epsilon_0^2d$ is too large, irreducible misspecification term in the bound~\eqref{eq:wclose_sigz_datapoor} on $\norm{\wzero-\wstar}^2_{\sigz}$ from Theorem~\ref{thm:nodimregret} grows, and any amount of data used for Ridge regression will not help us.

When we discussed SGD on locally convex losses in Section~\ref{sec:locally_convex_loss}, we saw that the learning rate has to satisfy a constraint~\eqref{eq:new_contained_sgd_suboptimality_bound} for SGD to stay in the convex basin. Our main theorem will introduce similar restrictions on its hyperparameters $\lambda$ and $T_1$, which will ensure that SGD is successfully initialized in $\conv$ with high probability. Another restriction on $\zeta$ and  $T_2$ will ensure that SGD stays in $\conv$ during training. The proof of the theorem, which combines techniques from the previous chapters, is deferred to Appendix~\ref{ap:proof_main}.

\begin{theorem}[Suboptimality gap for Algorithm~\ref{alg:main_algo}, locally convex loss]
  \label{thm:main} $\ $ \\
  Let Algorithm~\ref{alg:main_algo} be run with hyperparameters $\zeta_\w=\zeta_\theta=\zeta$, $T_1,\ T_2,$, and $\lambda$. Let Assumptions~\ref{as:context},~\ref{as:realizability},~\ref{as:bound},~\ref{as:localconv}, and~\ref{as:quality} hold. Let $\epsilon_\w \in\R_+$ be arbitrary such that 
\begin{equation}
 2\epsilon_0^2d+\epsilon_\theta^2<\epsilon_\w^2+\epsilon_\theta^2<\epsilon_c^2,
\end{equation}
Assume that $\lambda$ is chosen sufficiently small, so that
\begin{equation}
  2\epsilon_0^2d+\frac{B_\w^2}{2}\lambda < \epsilon^2_\w, \label{eq:require_small_epsilons} \\
\end{equation}
and let $\Delta\epsilon_\w=\epsilon^2_\w-2\epsilon_0^2d-\frac{B_w^2}{2}\lambda$. Assume that $\zeta$ is also sufficiently small, so that
\begin{align}
&4B_\w^2\lambda +\frac{16B_\w^2+4D_\w^2\zeta^2 / \lambda}{T_1}\left(B_\phi^2\log \frac{2d}{\delta}+\sqrt{B_\phi^4\log^2 \frac{2d}{\delta}+2B_\phi^4T_1\log\frac{2d}{\delta}}\right) \nonumber\\ 
\begin{split}
&+\zeta^2 T_2\left(\frac{D_\w^2}{\lambda}+ D_\theta\right) \\
&+ 20 B_\w D_\w\zeta\sqrt{T_2\left((\log \log (64eB_\w D_\w T_2\zeta^2))_++\log(2 /\delta)\right)}\label{eq:require_small_zeta}
\end{split} \\
&+ 20 B_\theta D_\theta\zeta\sqrt{T_2\left((\log \log (64eB_\theta D_\theta T_2\zeta^2))_++\log(2 /\delta)\right)} < \epsilon_c^2-\epsilon_\w^2-\epsilon_\theta^2.\nonumber 
\end{align}
 Let also $\delta\in\R_+$ be such that $\delta<\exp(-2.6)$. There exists $N_1(\delta,\lambda,\epsilon_0,d,\Delta\epsilon_\w)$ 
with asymptotics
\begin{equation}
  N_1=\Theta\left( \left( 1+\epsilon_0^2d \right)\frac{\log (d /\delta)}{\lambda\deps^2}  \right) \label{eq:asymp_n1}
\end{equation}
 such that, for $T_1\ge N_1$, the following statements hold simultaneously with probability not less than $1-10\delta$:
\begin{enumerate}
  \item $\norm{\wzero-\wstar}_{\Sigma_0}<\epsilon_\w$.
  \item For all $t=1,\ldots,T_2$, $(\w_t,\theta_t)\in \conv$.
  \item The final suboptimality gap is bounded by
    \begin{multline}
       \risk(\overline{\w}, \overline{\theta})-\risk(\wstar,\theta^*)< \frac{1}{2\zeta T_2}\left(\left(\sqrt{d(\epsilon_0^2+\epsilon_{\delta })}+\epsilon_\eta+\sqrt{\lambda} \norm{\wstar}\right)^2+\epsilon_\theta^2\right) \\
       +\frac{\zeta D_\w^2}{2\lambda} +\frac{\zeta D_\theta^2}{2}+Q_\delta, \label{eq:main_local_conv_bound}
    \end{multline} 
    where 
  \begin{gather}
    \epsilon_\delta=4B_w^2B_\phi^2\sqrt{\frac{\log(1 /\delta)}{2T_1}},\quad  \epsilon_\eta=2\sqrt{\frac{d\log 6+\log(1 /\delta)}{T_1}},\\
\epsilon_s=4(D_wB_w+D_\theta B_\theta)\sqrt{\frac{2\log(1 /\delta)}{T_2}}, \\ 
  Q_\delta=(4B_\w B_\phi+2B_\w D_\w + 2B_\theta D_\theta)\sqrt{\frac{2\log(1 /\delta)}{T_2}}. 
  \end{gather}
\end{enumerate}
\end{theorem}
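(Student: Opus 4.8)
The plan is to treat the second exploration stage of Algorithm~\ref{alg:main_algo} as \emph{ordinary} projected SGD on the preconditioned risk $\crisk(\widetilde\w,\theta)=\risk(\hsigl^{-1/2}\widetilde\w,\theta)$ of~\eqref{eq:def_ftilde}, in the coordinates $\widetilde\w=\hsigl^{1/2}\w$, and to run the containment argument of Theorem~\ref{thm:sgd_containment} in those coordinates. Item~1 will come from the random-design Ridge analysis, items~2 and~3 from the fixed-design bound~\eqref{eq:highp_bound_hsigl} together with Theorems~\ref{thm:sgd_containment} and~\ref{thm:highp_sgd}, and the two parts are linked by a covariance concentration estimate.

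For item~1 I would write $\wzero-\wstar=(\wzero-\w^c)+(\w^c-\w^\lambda)+(\w^\lambda-\wstar)$ and apply the triangle inequality in $\norm{\cdot}_{\sigz}$. By Lemma~\ref{lem:ridgeregerr_wstar} (with $\defft\le d$) and $\norm{\wstar}\le B_\w$, $\norm{\w^\lambda-\wstar}_{\sigz}^2\le\tfrac{\lambda}{2}B_\w^2+2\epsilon_0^2d=\epsilon_\w^2-\deps$ by~\eqref{eq:require_small_epsilons}, while the bias term $\norm{\w^c-\w^\lambda}_{\sigz}^2$ and the variance term $\norm{\wzero-\w^c}_{\sigz}^2$ are exactly what Theorem~\ref{thm:ridge_base} controls and both vanish with $T_1$, w.h.p. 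Squaring the triangle inequality, the cross products involving these vanishing terms are $o(1)$ (the factor $\norm{\w^\lambda-\wstar}_{\sigz}$ being bounded), so once $T_1$ exceeds a threshold $N_1=\Theta\bigl((1+\epsilon_0^2d)\log(d/\delta)/(\lambda\deps^2)\bigr)$ — the point at which that sum of vanishing quantities drops below $\deps$ and the hypotheses of Theorem~\ref{thm:ridge_base} hold — we obtain $\norm{\wzero-\wstar}_{\sigz}^2<\epsilon_\w^2$, which also places $(\wzero,\theta_0)$ in the basin $\conv$ of~\eqref{eq:conv_def} since $\epsilon_\w^2+\epsilon_\theta^2<\epsilon_c^2$.

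For items~2 and~3, first note that on the high-probability event $\{\lambda_{\max}(\sigz-\hsigz)\le\lambda\}$ (Lemma~\ref{lem:matmomentum}, valid for $T_1$ large) one has $\sigz\preceq\hsigl$, hence $\norm{\w-\wstar}_{\sigz}\le\norm{\w-\wstar}_{\hsigl}$; consequently the $\epsilon_c$-ball $\{\norm{\w-\wstar}_{\hsigl}^2+\norm{\theta-\theta^*}^2<\epsilon_c^2\}$ lies inside $\conv$, so by Assumption~\ref{as:localconv} $\crisk$ is convex on the corresponding $\epsilon_c$-ball in $(\widetilde\w,\theta)$-space, and moreover $\crisk(\hsigl^{1/2}\wstar,\theta^*)=\risk(\wstar,\theta^*)$ and $\crisk(\hsigl^{1/2}\overline\w,\overline\theta)=\risk(\overline\w,\overline\theta)$. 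Writing the preconditioned step back in original coordinates as in~\eqref{eq:vv_step}, the one-step recursion for $\norm{\w_t-\wstar}_{\hsigl}^2+\norm{\theta_t-\theta^*}^2$ is exactly the one analyzed in the proof of Theorem~\ref{thm:sgd_containment}: the cross term $-2\zeta\bigl((\w_t-\wstar)^\tp\gv^\w_t+(\theta_t-\theta^*)^\tp\gv^\theta_t\bigr)$ has nonpositive conditional mean on $\conv$ by convexity, and the squared-step term is $\zeta^2\bigl(\norm{\gv^\w_t}_{\hsigl^{-1}}^2+\norm{\gv^\theta_t}^2\bigr)\le\zeta^2(D_\w^2/\lambda+D_\theta^2)$. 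The initialization radius is $\norm{\wzero-\wstar}_{\hsigl}^2+\epsilon_\theta^2$, bounded via~\eqref{eq:highp_bound_hsigl} (probability $\ge1-2\delta$ by Lemmas~\ref{lem:misspec} and~\ref{lem:wellspec_new}); running the nested-events / time-uniform Hoeffding-Azuma argument separately for the $\w$- and $\theta$-martingale differences (a.s.\ bounds $2\zeta^2 D_\w^2/\lambda+8\zeta B_\w D_\w$ and $2\zeta^2 D_\theta^2+8\zeta B_\theta D_\theta$) turns ``the two sums stay below the remaining budget'' into precisely~\eqref{eq:require_small_zeta}, giving item~2. Then Theorem~\ref{thm:highp_sgd} applied to $\crisk$ on that basin, with $\epsilon^2=\norm{\wzero-\wstar}_{\hsigl}^2+\epsilon_\theta^2$, $D^2=D_\w^2/\lambda+D_\theta^2$, and its Azuma term identified with $Q_\delta$, turns~\eqref{eq:contained_sgd_suboptimality_bound} into~\eqref{eq:main_local_conv_bound} after substituting~\eqref{eq:highp_bound_hsigl} once more. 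A union bound over the roughly ten failure events (covariance concentration, the two terms of~\eqref{eq:highp_bound_hsigl}, the bias/variance control in Theorem~\ref{thm:ridge_base}, the $\w$- and $\theta$-containment Azuma bounds, and the suboptimality Azuma bound) costs $10\delta$.

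The main obstacle is the coexistence of two metrics: $\conv$ and the random-design Ridge analysis live in the $\sigz$-norm (the nullspace of $\sigz$ being exactly where $\wzero$ is unconstrained), while the preconditioned SGD and the bound~\eqref{eq:highp_bound_hsigl} live in the $\hsigl$-norm, and the bridge is the event $\sigz\preceq\hsigl$. Beyond that, the work is quantitative bookkeeping — verifying, with all constants present, that the slacks granted by Assumption~\ref{as:quality} and conditions~\eqref{eq:require_small_epsilons}--\eqref{eq:require_small_zeta} strictly beat the vanishing error terms, and reading off the stated $\Theta(\cdot)$ form of $N_1$; everything else is assembling results from the earlier chapters.
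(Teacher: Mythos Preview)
Your plan is structurally correct and matches the paper: item~1 via the random-design Ridge analysis (Lemma~\ref{lem:ridgeregerr_wstar} for $\norm{\w^\lambda-\wstar}_{\sigz}$, plus the decaying $\epsilon_{bs},\epsilon_{vr}$ from Theorem~\ref{thm:ridge_base}, giving the $N_1$ threshold), and items~2--3 by a containment-then-SGD argument in the preconditioned coordinates, with the final bound~\eqref{eq:main_local_conv_bound} read off from the $\hsigl$-norm estimate~\eqref{eq:highp_bound_hsigl}. The union-bound accounting is also right.

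The divergence is in how you run the containment step. You work entirely in the $\hsigl$-ball and rely on the event $\sigz\preceq\hsigl$ to place that ball inside $\conv$. The paper does \emph{not} do this; it tracks $\norm{\w_t-\wstar}_{\sigz}^2$ directly (since $\conv$ is defined via $\sigz$). Concretely, it passes from $\sigz$ to $\hsigz$ using the quantitative bound of Lemma~\ref{lem:matmomentum} (this is the $2S_\delta$ term in~\eqref{eq:require_small_zeta}), runs the one-step recursion in the $\hsigz$-norm, uses the identity $\hsigz\hsigl^{-1}=I-\lambda\hsigl^{-1}$ to split off a $-2\lambda\zeta(\wstar-\w_k)^\tp\hsigl^{-1}\gv_k^\w$ piece, and then eliminates that piece by subtracting a separate $2$-norm recursion for $\norm{\w_t-\wstar}_2^2$ (this is where the $4B_\w^2\lambda$ term comes from). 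Only after these manipulations do the two sums merge into the clean $\zeta^2\sum\norm{\gv_k^\w}_{\hsigl^{-1}}^2+2\zeta\sum(\wstar-\w_k)^\tp\gv_k^\w$ form you expect.

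This matters because your claim that the containment condition is ``precisely~\eqref{eq:require_small_zeta}'' does not hold for your route: your argument produces no $4B_\w^2\lambda+2S_\delta$ term, but in exchange it needs $\norm{\sigz-\hsigz}_2\le\lambda$ with high probability, which by Lemma~\ref{lem:matmomentum} requires $T_1\gtrsim\log(d/\delta)/\lambda^2$---a different (and in general stronger) constraint than either $N_1=\Theta(\log(d/\delta)/\lambda)$ or the $S_\delta$ part of~\eqref{eq:require_small_zeta}. The theorem's hypotheses do not guarantee your event. So your approach proves a closely related statement (cleaner containment inequality, stronger $T_1$ requirement), but to recover the exact conditions in the theorem you need the paper's $\sigz$-norm tracking with the $\hsigz\hsigl^{-1}$ trick.
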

The theorem could be extended to all $\delta<1$ (as opposed to  $\delta<\exp(-2.6)$), but at the cost of loosing the clarity of presentation. The requirement $\delta<\exp(-2.6)$ comes from Theorem~\ref{thm:ridge_base}. One of the concentration inequalities used in~\cite{hsu2012random} to establish this result has a simpler form for small $\delta$. 

% \todo{Check $\epsilon_\eta$ asymptotics! We could extend it for all $\delta<1$, but loose clarity in the presentation. Also, plot the bound for optimal hyperparams wrt $T_2$. Intuition for bounds $B_\w$, etc for common initializations. Look into neural active learning: https://arxiv.org/pdf/2210.00423.pdf}

This result tells the following story about optimizing a locally convex $\risk$ starting with the pre-trained feature weights $\theta_0$. If the pre-trained feature weights are sufficiently close to $\theta^*$, we can, with enough data, bring the estimate weights of the last layer $\wzero$ close to $\wstar$ during the first stage of E2TC. Once we did this, $(\w_0,\theta_0)$ is in the convex basin of the loss. Therefore, SGD performed during the second stage of E2TC enjoys the convergence guarantees similar to those provided in Section~\ref{sec:locally_convex_loss}.

The theorem introduces an extra hyperparameter $\epsilon_\w$. It stands for a high-probability upper bound on $\norm{\wzero-\wstar}_{\Sigma(\theta_0)}$. This hyperparameter induces the tension between the first and the second stages of exploration. To make $\epsilon_\w$ smaller, we have to increase $T_1$, thus restricting our budget for $T_2$. A certain amount of exploration in the first stage is inevitable, though, because we need $\epsilon_\w^2<\epsilon_c^2-\epsilon_\theta^2$ to get into the convex basin. What is more, when $\epsilon_\w$ is large, requirement~\eqref{eq:require_small_zeta} will ask for a small $\zeta$. Thus, limited exploration in the first stage will force us to a smaller learning rate in $\w$ in the second stage to avoid escaping the convex basin.

The lower bound $T_1>N_1$ is necessary to ensure that $\norm{\w_0-\wstar}^2_{\sigz}<\epsilon_\w^2$. As ${T_1\to\infty}$, the distance $\norm{\wzero-\wstar}_{\Sigma(\theta_0)}^2$ does not go to zero. Misspecification and regularization (with parameter $\lambda$) both contribute to the limit of this distance. Expression $2\epsilon_0^2d+\frac{B_w^2}{2}\lambda$ upper bounds this limit, as shown by Lemma~\ref{lem:ridgeregerr_wstar}. This is why we need the inequality~\eqref{eq:require_small_epsilons}. Moreover, if $\Delta\epsilon_\w=\epsilon^2_\w-2\epsilon_0^2d-\frac{B_w^2}{2}\lambda$ is small, we will need a large $T_1$ to get ${\norm{\wzero-\wstar}^2_{\Sigma(\theta_0)}<\epsilon_\w^2}$. This justifies the dependency of $N_1$ on $\Delta\epsilon_\w$. Asymptotics~\eqref{eq:asymp_n1} of $N_1$ as $d\to \infty $ and $\delta,\lambda \to 0$ can be slightly improved, at the expense of becoming too large to be handled efficiently. The tighter asymptotics of $N_1$ are given by~\eqref{eq:full_asymp_n1} in the proof of the theorem. 

Note that if we only perform linear estimation of $\w_0$ and omit SGD, Theorem~\ref{thm:nodimregret} asks for $\lambda=1 /\sqrt{T_1}$, or, equivalently, $T_1=1 /\lambda^2$. The bound $T_1>N_1$ in Theorem~\ref{thm:main}, on the other hand, only has a $1/\lambda$ term, giving us more freedom in selecting $T_1$. $N_1$ also depends inversely on $\Delta \epsilon_\w$. In case this quantity is so small that $N_1$ becomes larger than $T$ (or when assumptions~\eqref{eq:require_small_epsilons} do not hold), we can still rely on the techniques from Chapter~\ref{ch:linear} to estimate  $\w_0$, but then we have to incur the linear regret coming from misspecified weights $\theta_0$.

We also impose that the learning rates for $\zeta_\w$ and  $\zeta_\theta$ here should be equal. This requirement can be lifted. Consider a function $\overline{\risk}$ in the transformed space, defined as
 \begin{equation}
  \hat{\risk}(\widetilde{\w},\theta):=f\left(\frac{\zeta_\w}{\zeta_\theta}\widetilde{\w}, \theta \right).
\end{equation}
Here, we introduced a coordinate transformation $\widetilde{\w}=(\zeta_\w /\zeta_\theta)\w$. As one can easily verify, running SGD on $\risk$ with learning rates $\zeta_\w$ for $\w$ and  $\zeta_\theta$ for $\theta$ is equivalent to running SGD on  $\overline{\risk}$ with a single learning rate $\zeta_\theta$. If $\risk$ was locally convex around its minimum, $\overline{\risk}$ will keep this property, albeit with a different convex region. Note how this coordinate transformation is similar to the transformation~\eqref{eq:def_ftilde} that we used to describe preconditioned SGD. This transformation gives us a straightforward way to generalize Theorem~\ref{thm:main} to the case of different learning rates, by applying the original theorem to $\overline{\risk}$ and then transforming the results back to $\risk$. However, the learning rates would not be cleanly separated in the resulting statement, and the requirement~\eqref{eq:require_small_zeta} would include both learning rates.

% \todo{Idea for the outlook: continuously check whether  $\epsilon_0$-misspecification is present; if regret at models fitted from the prefix scales as  $c t+t^{4 /5}$, then  $\epsilon_0\approx c/\sqrt{d} $}

\section{Hyperparameter tuning}
\label{sec:hyperparams_no_assumptions}
In this section we select hyperparameters for E2TC. Theorem~\ref{thm:main} poses a constrained optimization problem of minimizing the bound~\eqref{eq:main_local_conv_bound} with respect to the hyperparameters under the constraints~\eqref{eq:require_small_epsilons} and~\eqref{eq:require_small_zeta}. In addition to setting the learning rate  $\zeta$ and the regularization parameter  $\lambda$, we are interested in the optimal distribution of a fixed exploration time $T_{\mathrm{exp}}=T_1+T_2$ between the two stages of exploration. This question is important both to obtain optimal rates of the regret in the bandit setup and to get sample complexity bounds with a fixed amount of data. The following result shows the asymptotic behavior of the optimal hyperparameters satisfying the constraints of Theorem~\ref{thm:main}. 

\begin{theorem}[Hyperparameter tuning for E2TC]
  \label{thm:hyperparam_nocov}
Let $T_{\mathrm{exp}}=T_1+T_2$ be the total number of steps that E2TC spends in the two exploration phases. Suppose assumptions~\ref{as:context},~\ref{as:realizability},~\ref{as:bound},~\ref{as:localconv}, and~\ref{as:quality} hold. Define $  \Delta\epsilon\coloneqq\epsilon_c^2-2\epsilon_0^2d-\epsilon_\theta^2$. Let $\alpha,\beta\in\R_+$. Choose the hyper-parameters such that $T_1< T_{\mathrm{exp}}$ and
% as $d,T_{\mathrm{exp}}\to \infty$ and $\delta\to 0$, we can select
  \begin{gather}
    T_1=O\left( \left(1+\epsilon_0^2d  \right) \frac{d^\alpha T_{\mathrm{exp}}^\beta\log(d /\delta)\log(1 /\delta)} {\Delta\epsilon^3} \right), \label{eq:opt_t1_nocov} \\
    \ T_2=T_{\mathrm{exp}}-T_1 \\
    \lambda=O\left(\frac{\Delta\epsilon}{\sqrt{\log (1/ \delta)} }\right),\ \zeta=O\left(\frac{\Delta\epsilon}{\sqrt{T_2\log (1/ \delta)} }\right), \label{eq:zetalabmda_sel} \\
    \epsilon_\w^2=\frac{1}{2}\left( \epsilon_c^2-\epsilon_\theta^2+2\epsilon_0^2d \right).
  \end{gather}
  Then the constraints~\eqref{eq:require_small_epsilons} and~\eqref{eq:require_small_zeta} are met and E2TC satisfies
  \begin{align}
        \risk(\overline{\w},\overline{\theta})-\risk(\wstar,\theta^*)=
  O\Bigg(& \sqrt{\frac{\log(1 /\delta)}{T_2}} \left(1+\frac{\epsilon_0^2d+\epsilon_\theta^2}{\Delta\epsilon}+\frac{\Delta\epsilon}{\log(1 / \delta)}\right) \notag\\
  & +\frac{\Delta\epsilon^2 d^{1-\alpha /2}}{(1+\epsilon_0^2d)\log(d /\delta)\sqrt{T_2} T_{\mathrm{exp}}^{\beta /2}}  \Bigg)\label{eq:suboptimality_nocov}
  \end{align}
with probability not less than $1-10\delta$.
\end{theorem}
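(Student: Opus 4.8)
The plan is to instantiate Theorem~\ref{thm:main} with the prescribed hyperparameters, check that each of its hypotheses holds for arbitrary $\alpha,\beta\in\R_+$, and then simplify the guarantee~\eqref{eq:main_local_conv_bound}; the whole argument is bookkeeping. First I would certify that $\epsilon_\w$ is admissible: with $\epsilon_\w^2=\frac12(\epsilon_c^2-\epsilon_\theta^2+2\epsilon_0^2 d)$ one gets $\epsilon_\w^2-2\epsilon_0^2 d=\frac12\Delta\epsilon$ and $\epsilon_c^2-\epsilon_\w^2-\epsilon_\theta^2=\frac12\Delta\epsilon$, where $\Delta\epsilon=\epsilon_c^2-2\epsilon_0^2 d-\epsilon_\theta^2>0$ by Assumption~\ref{as:quality}, so the chain $2\epsilon_0^2 d+\epsilon_\theta^2<\epsilon_\w^2+\epsilon_\theta^2<\epsilon_c^2$ is immediate. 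Constraint~\eqref{eq:require_small_epsilons} then reads $\frac{B_\w^2}{2}\lambda<\frac12\Delta\epsilon$, which $\lambda=O(\Delta\epsilon/\sqrt{\log(1/\delta)})$ satisfies once the hidden constant is small enough (using $\log(1/\delta)>2.6$), and hence $\Delta\epsilon_\w=\frac12\Delta\epsilon-\frac{B_\w^2}{2}\lambda=\Theta(\Delta\epsilon)$.

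Next I would check the two size requirements on $T_1$ and $\zeta$. Substituting $\lambda=\Theta(\Delta\epsilon/\sqrt{\log(1/\delta)})$ and $\Delta\epsilon_\w=\Theta(\Delta\epsilon)$ into~\eqref{eq:asymp_n1} gives $N_1=\Theta((1+\epsilon_0^2 d)\log(d/\delta)\sqrt{\log(1/\delta)}/\Delta\epsilon^3)$, and the $T_1$ chosen in~\eqref{eq:opt_t1_nocov} exceeds this by a factor $\Theta(d^\alpha T_{\mathrm{exp}}^\beta\sqrt{\log(1/\delta)})\ge 1$, so $T_1\ge N_1$. For~\eqref{eq:require_small_zeta}, whose right side is $\frac12\Delta\epsilon$, I would bound each of the five left-side summands by a small fixed fraction of $\Delta\epsilon$: $4B_\w^2\lambda$ and the two pieces of $\zeta^2 T_2(D_\w^2/\lambda+D_\theta)$ are $O(\Delta\epsilon/\sqrt{\log(1/\delta)})$ or $O(\Delta\epsilon^2/\log(1/\delta))$, since $\zeta^2 T_2=O(\Delta\epsilon^2/\log(1/\delta))$ and $\lambda^{-1}=O(\sqrt{\log(1/\delta)}/\Delta\epsilon)$; the $T_1$-summand, after $\sqrt{B_\phi^4\log^2(2d/\delta)+2B_\phi^4 T_1\log(2d/\delta)}\le B_\phi^2\log(2d/\delta)+B_\phi^2\sqrt{2T_1\log(2d/\delta)}$ and $16B_\w^2+4D_\w^2\zeta^2/\lambda=O(1)$, is $O(\sqrt{\log(2d/\delta)/T_1})=O(\Delta\epsilon^{3/2})$ because $T_1\ge N_1\gtrsim\log(d/\delta)/\Delta\epsilon^3$; and the two curved-martingale summands are the delicate ones, but $\zeta=O(\Delta\epsilon/\sqrt{T_2\log(1/\delta)})$ makes $T_2\zeta^2=O(\Delta\epsilon^2/\log(1/\delta))=O(1)$, so the iterated logarithms $(\log\log(64eB_\w D_\w T_2\zeta^2))_+$ and $(\log\log(64eB_\theta D_\theta T_2\zeta^2))_+$ are $O(1)$ — in fact the $\zeta$-constant can be taken small enough that $64eB_\w D_\w T_2\zeta^2\le e$ and they vanish — leaving each such term $O(\zeta\sqrt{T_2\log(1/\delta)})=O(\Delta\epsilon)$. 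Fixing the hidden constants of $\lambda$, then $\zeta$, then $T_1$ in that order and small enough, these five summands sum to below $\frac12\Delta\epsilon$, so~\eqref{eq:require_small_zeta} holds.

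Once every hypothesis of Theorem~\ref{thm:main} is in place, its bound~\eqref{eq:main_local_conv_bound} holds with probability at least $1-10\delta$, and it remains to simplify it under the chosen hyperparameters, which is routine. With $\frac1{2\zeta T_2}=\Theta(\sqrt{\log(1/\delta)}/(\Delta\epsilon\sqrt{T_2}))$, expand $(\sqrt{d(\epsilon_0^2+\epsilon_\delta)}+\epsilon_\eta+\sqrt\lambda\,\norm{\wstar})^2+\epsilon_\theta^2\le 4(d\epsilon_0^2+d\epsilon_\delta+\epsilon_\eta^2+\lambda\norm{\wstar}^2)+\epsilon_\theta^2$ and divide by $\zeta T_2$: the $d\epsilon_0^2+\epsilon_\theta^2$ part becomes the $\sqrt{\log(1/\delta)/T_2}\,(\epsilon_0^2 d+\epsilon_\theta^2)/\Delta\epsilon$ term; the $\lambda\norm{\wstar}^2$ part and $\frac{\zeta D_\w^2}{2\lambda}$ are both $O(1/\sqrt{T_2})$, absorbed into the leading $\sqrt{\log(1/\delta)/T_2}$; $\frac{\zeta D_\theta^2}{2}=O(\Delta\epsilon/\sqrt{T_2\log(1/\delta)})$ is the $\sqrt{\log(1/\delta)/T_2}\cdot\Delta\epsilon/\log(1/\delta)$ term; and $Q_\delta=O(\sqrt{\log(1/\delta)/T_2})$ is the ``$1$''. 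Finally the $d\epsilon_\delta+\epsilon_\eta^2$ part decays in $T_1$, and taking $T_1$ at the order prescribed in~\eqref{eq:opt_t1_nocov} (so that $\epsilon_\delta\asymp\sqrt{\log(1/\delta)/T_1}$ and $\epsilon_\eta^2\asymp d/T_1$ are as small as the budget permits) produces the correction term $\Delta\epsilon^2 d^{1-\alpha/2}/((1+\epsilon_0^2 d)\log(d/\delta)\sqrt{T_2}T_{\mathrm{exp}}^{\beta/2})$; summing the pieces yields~\eqref{eq:suboptimality_nocov}. The main obstacle is the ordered constant-chasing in~\eqref{eq:require_small_zeta}, in particular neutralising the two curved-martingale terms, together with the final accounting that shows the $T_1$-decaying pieces fit under the single stated correction term — it is this last constraint that pins down how fast $T_1$ must grow in $d$ and $T_{\mathrm{exp}}$, which is exactly the freedom encoded by $\alpha$ and $\beta$ (subject to $T_1<T_{\mathrm{exp}}$, so $T_2=\Theta(T_{\mathrm{exp}})$).
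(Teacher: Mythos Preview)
Your proposal is correct and follows essentially the same approach as the paper: verify that the prescribed $\epsilon_\w,\lambda,\zeta,T_1$ satisfy the hypotheses of Theorem~\ref{thm:main} (in particular the chain for $\epsilon_\w$, constraint~\eqref{eq:require_small_epsilons}, the lower bound $T_1\ge N_1$, and constraint~\eqref{eq:require_small_zeta} term by term), then substitute into~\eqref{eq:main_local_conv_bound} and simplify. The only cosmetic difference is that the paper first optimises $\lambda$ against the bound to discover the relation $\lambda=\Theta(\zeta\sqrt{T_2})$ and then checks the constraints, whereas you plug in the stated $\lambda$ and $\zeta$ directly; since the theorem already prescribes both, your route is equally valid and arguably cleaner.
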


For the purposes of this theorem, the $O$-notation hides $B_\w,B_\theta,B_\phi,B_\eta,D_\w,D_\theta$, and constant factors. The asymptotic expressions for $T_1,\ \lambda,\ \zeta$ should be understood in a sense that there exist functions $\lambda(\Delta\epsilon,\delta)$, $\zeta(\Delta\epsilon,T_2,\delta)$, and $T_1(\Delta\epsilon,\epsilon_0,d,\delta)$ with the provided asymptotics such that, for large enough $T_{\mathrm{exp}}$ and $d$ and for small enough $\delta$, when the hyperparameters are selected according to these functions, the suboptimality bound will satisfy~\eqref{eq:suboptimality_nocov}. We would like to contrast this interpretation to the \emph{incorrect} idea that any selection of hyperparameters with the provided asymptotics would lead to the suboptimality bound~\eqref{eq:suboptimality_nocov}.

$T_1$ can only be selected according to~\eqref{eq:opt_t1_nocov} if the provided asymptotic expression for it is below $T_{\mathrm{exp}}$. This is why we left two hyperparameters in the new theorem --- the powers $\alpha,\beta$ in the expression for  $T_1$ can be adjusted based on the situation. For example, if $d\ll T_{\mathrm{exp}}$, we are in the data-rich regime. In this case, selecting $\alpha=2$ avoids the dependency of the bound~\eqref{eq:suboptimality_nocov} on $d$ altogether. In many cases, we can expect this to work, because $d$ is the dimension of the last layer $\w$, not of the entire parameter space of $(\w, \theta)$.

We can compare the bound~\eqref{eq:suboptimality_nocov} to~\eqref{eq:new_contained_sgd_suboptimality_bound}, which was derived for a simpler case of a locally convex loss. The new bound shows a very similar dependence on epsilons, except for the final term that depends on $d$ and comes from the need to estimate the first layer from scratch. The last term is the only one with explicit dependence on $d$. We note here that the requirements of the theorem enforce $2\epsilon_0^2d+\epsilon_\theta^2<1$, so $\epsilon_0^2d+\epsilon_\theta$ enjoy a dimension-independent lower bound.

\begin{proof}
  First, we replace the bound~\eqref{eq:main_local_conv_bound} by a slightly looser one, using $(a+b+c)^2\le 3(a^2+b^2+c^2)$. Under the requirements of Theorem~\ref{thm:main}, 
\begin{align}
           \risk(\overline{\w}, \overline{\theta})-\risk(\wstar,\theta^*)< \frac{1}{2\zeta T_2}&\left(3\epsilon_0^2d+3\epsilon_\delta d+3\epsilon_\eta^2+3B_\w^2\lambda+\epsilon_\theta^2\right) \notag \\
       & +\frac{D_\w^2\zeta}{2\lambda} +\frac{\zeta D_\theta^2}{2}+Q_\delta \label{eq:local_conv_bound_nocov}
\end{align}
where we also used $\norm{\wstar}\le B_\w$. In this bound, it is easy to express the optimal $\lambda$ using the other hyperparameters. The dependence on  $\lambda$ is contained in two terms,
   \begin{equation}
     \frac{3B_\w^2\lambda}{2T_2\zeta}+\frac{D_\w^2\zeta}{2\lambda}.
  \end{equation}
The minimal value of this expression with respect to $\lambda$ is achieved for
\begin{equation}
  \lambda=\frac{D_\w}{B_\w\sqrt{3}}\zeta \sqrt{T_2}. \label{eq:opt_lambda_nocov}
\end{equation}
As we shall see, we can pick $\lambda$ with the same asymptotic dependency of  $O(\zeta\sqrt{T_2})$ satisfying the constraints of Theorem~\ref{thm:main}. For now, we will substitute~\eqref{eq:opt_lambda_nocov} into~\eqref{eq:local_conv_bound_nocov} and separate the terms that depend on $\zeta$ from those that do not:
\begin{multline}
  \risk(\overline{\w}, \overline{\theta})-\risk(\wstar,\theta^*)< \frac{1}{2\zeta T_2}\left(3\epsilon_0^2d+3\epsilon_\delta d+3\epsilon_\eta^2+\epsilon_\theta^2\right)+\frac{\zeta D_\theta^2}{2} \\
       +\frac{B_\w D_\w\sqrt{3}}{\sqrt{T_2}} +Q_\delta, \label{eq:zeta_dep_bound_nocov}
\end{multline}
We need to minimize this expression under the constraint~\eqref{eq:require_small_zeta} on $\zeta$. While the exact optimization problem is analytically intractable, we can still get an optimal asymptotic rate for the solution in the regime $T_{\mathrm{exp}}\to \infty $ and $\delta\to 0$. We first deal with the part of~\eqref{eq:require_small_zeta} involving $T_1$. Theorem~\ref{thm:main} imposes that $T_1$ should be above $N_1$, with $N_1$ having the asymptotic form~\eqref{eq:asymp_n1}. We would like to remind the reader that $\epsilon_\w$ is a hyperparameter introduced by Theorem~\ref{thm:main}, which has to satisfy
\begin{equation}
  2\epsilon_0^2 d+\frac{B_\w^2}{2}\lambda<\epsilon_\w^2<\epsilon_c^2-\epsilon_\theta^2.
\end{equation}
Theorem~\ref{thm:main} also defines $\Delta\epsilon_\w=\epsilon_\w^2-2\epsilon_0^2d-\frac{B_\w^2}{2}\lambda$. Thus, through our choice~\eqref{eq:opt_lambda_nocov} of $\lambda$, $\Delta\epsilon_\w$ may also depend on $\zeta$ and $T_2$. Using 
\begin{equation}
  T_1=\Omega\left(\frac{\log (d /\delta)}{\lambda\Delta\epsilon_\w^2}  \right),
\end{equation}
we infer that
\begin{align}
  &\frac{16B_\w^2+4D_\w^2\zeta^2 / \lambda}{T_1}\left(B_\phi^2\log \frac{2d}{\delta}+\sqrt{B_\phi^4\log^2 \frac{2d}{\delta}+2B_\phi^4T_1\log\frac{2d}{\delta}}\right) \nonumber \\
  &\qquad\qquad\qquad=O\left( \lambda\Delta\epsilon_\w^2+\sqrt{\lambda}\Delta\epsilon_\w + \zeta^2\Delta\epsilon_\w^2+\frac{\zeta^2\Delta\epsilon_\w}{\sqrt{\lambda} }\right) \\
  &\qquad\qquad\qquad=O\left(\Delta\epsilon_\w^2\zeta\sqrt{T_2}+\Delta\epsilon_\w\sqrt{\zeta}T_2^{ 1/4}+\Delta\epsilon_\w^2\zeta^2+\frac{\Delta\epsilon_\w\zeta^{3 /2}}{T_2^{1 /4}}  \right).
\end{align}
Substituting this along with the expression~\eqref{eq:opt_lambda_nocov} for $\lambda$ into~\eqref{eq:require_small_zeta}, we get the following asymptotic requirement for $\zeta$:
\begin{multline}
  O\Big(\zeta\sqrt{T_2}+\Delta\epsilon_\w^2\zeta\sqrt{T_2}+\Delta\epsilon_\w\sqrt{\zeta}T_2^{ 1/4}+\Delta\epsilon_\w^2\zeta^2+\frac{\Delta\epsilon_\w\zeta^{3 /2}}{T_2^{1 /4}} \\
  +\zeta\sqrt{T_2(\log\log(\zeta^2 T_2)+\log(1 /\delta))} \Big)<\epsilon_c^2-\epsilon_\w^2-\epsilon_\theta^2. \label{eq:require_small_zeta_nocov}
\end{multline}
We set $\epsilon_\w$ such that
\begin{equation}
  \epsilon_\w^2=\frac{1}{2}\left( \epsilon_c^2-\epsilon_\theta^2+2\epsilon_0^2d \right).
\end{equation}
We claim that there exists a constant $c>0$ such that setting
\begin{equation}
  \zeta=c \frac{\Delta\epsilon}{\sqrt{T_2\log(1 /\delta)} } \label{eq:zeta_asymp_nocov}
\end{equation}
will satisfy the requirements of Theorem~\ref{thm:main} as $T_2\to \infty ,\ \delta\to 0$. Substituting expressions for $\zeta$ and $\epsilon_\w$ into~\eqref{eq:require_small_zeta_nocov} and rearranging,
\begin{multline}
  O\Bigg(\frac{c\Delta\epsilon}{\sqrt{\log(1 /\delta)} }\left( 1+\Delta\epsilon_\w^2+\sqrt{\log\log \left(\frac{1}{\sqrt{\log(1 /\delta)} }\right)+\log(1 /\delta)} \right) \\
  +\frac{\Delta\epsilon_\w\sqrt{c\Delta\epsilon} }{\left( \log(1 /\delta) \right)^{1 /4} }+\frac{\Delta\epsilon_\w^2c^2\Delta\epsilon^2}{T_2\log(1 /\delta)}+\frac{\Delta\epsilon_\w c^{3 /2}\Delta\epsilon^{3 /2}}{T_2(\log(1 /\delta))^{3 /4}}\Bigg) < \frac{\Delta\epsilon}{2}.
\end{multline}
As $\delta\to 0$ while the epsilons and $c$ stay fixed, we can drop the terms that approach $0$ in the limit, and the asymptotic expression above simplifies to
\begin{equation}
  O(c\Delta\epsilon) < \frac{\Delta\epsilon}{2}.
\end{equation}
Thus, selecting a small enough constant $c$ will satisfy~\eqref{eq:require_small_zeta}. Having selected $\zeta$, we substitute it back into the expression~\eqref{eq:opt_lambda_nocov}, and get an asymptotic expression for $\lambda$:
 \begin{equation}
  \lambda=O\left(\frac{\Delta\epsilon}{\sqrt{\log(1 /\delta)} }\right).
\end{equation}
For small enough $\delta$, this selection of $\lambda$ ensures that the requirement~\eqref{eq:require_small_epsilons} is also satisfied. Since $\lambda\to 0$ when $\delta\to 0$, the asymptotic expression for $\Delta\epsilon_\w$ is simply  $\Delta\epsilon_\w=O(\Delta\epsilon)$. Substituting this into~\eqref{eq:asymp_n1}, we learn that we can select $T_1$ such that
\begin{equation}
  T_1=\Theta\left( \left(1+\epsilon_0^2d  \right) \frac{d^\alpha T_{\mathrm{exp}}^\beta\log(d /\delta)\log(1 /\delta) }{\Delta\epsilon^3} \right) .\label{eq:select_t1_nocov}
\end{equation}
Note that we introduced a factor of $d^\alpha T_{\mathrm{exp}}^\beta$, which can only increase $T_1$ over the lower bound~\eqref{eq:asymp_n1}, so the requirement for $T_1$ is still satisfied. Finally, we can substitute the selected $\zeta$ into the suboptimality bound~\eqref{eq:zeta_dep_bound_nocov} and arrive at
\begin{align}
  &\risk(\overline{\w}, \overline{\theta})-\risk(\wstar,\theta^*) \nonumber\\
  &\quad< O\Bigg(\frac{\sqrt{\log (1/\delta)} }{\Delta\epsilon\sqrt{T_2}}\Bigg(\epsilon_0^2d+\epsilon_\theta^2+\frac{d}{T_1}+\frac{\log(1 /\delta)}{T_1}+d\sqrt{\frac{\log(1 / \delta)}{T_1}} \Bigg) \label{eq:prevriskbound} \\
  &\qquad\qquad\qquad\qquad\qquad\qquad\qquad\qquad+\frac{\Delta\epsilon}{\sqrt{T_2\log(1 /\delta)} }+\sqrt{\frac{\log(1/\delta)}{T_2}}  \Bigg) \nonumber \\
  &\quad= O\Bigg(\sqrt{\frac{\log(1 /\delta)}{T_2}} \Bigg(1+\frac{\epsilon_0^2d+\epsilon_\theta^2}{\Delta\epsilon}+\frac{\Delta\epsilon}{\log(1 / \delta)}\Bigg) +\frac{\Delta\epsilon^2 d^{1-\alpha /2}}{(1+\epsilon_0^2d)\sqrt{T_2} \log(d /\delta)}  \Bigg).
\end{align}
In the last equality, we substituted~\eqref{eq:select_t1_nocov} instead of $T_1$ and dropped the terms that decay quicker than the remaining ones. \[\]
\end{proof}

With this bound on the suboptimality gap at hand, it is easy to show a regret guarantee:
\begin{corollary}[Regret of E2TC]
  Let the assumptions of Theorem~\ref{thm:hyperparam_nocov} hold. In addition, assume that the parameters related to sizes of convex regions ($\epsilon_0$, $\epsilon_\theta$,  $\epsilon_c$, $\Delta\epsilon$) are asymptotically $O(1)$. Assume also that $\zeta,\lambda$ are selected according to~\eqref{eq:zetalabmda_sel}, and $T_1$ --- according to~\eqref{eq:opt_t1_nocov} (without yet fixing $\alpha$ and $\beta$). Then, the regret of E2TC admits a bound
\begin{equation}
  R_{1:T}=\widetilde{O}(T_1+T_2+KT T_2^{-1 /4} + KT\sqrt{d}T_1^{-1 /4} T_2^{-1 /4}). \label{eq:e2tc_regret}
\end{equation}
When  $K=O(T^{1 /5})$ and  $d=O(T^{2 /5})$, we can select $T_1=d^2, T_2=(KT)^{4 /5}$ and get a bound $R_{1:T}=\widetilde{O}((KT)^{4 /5})$. 
\end{corollary}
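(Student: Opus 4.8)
The plan is to combine the risk-to-regret reduction, Theorem~\ref{thm:risktoregret}, with the suboptimality bound of Theorem~\ref{thm:hyperparam_nocov}. Under the stated hypotheses, Theorem~\ref{thm:hyperparam_nocov} already certifies that the choices~\eqref{eq:zetalabmda_sel} and~\eqref{eq:opt_t1_nocov} satisfy the constraints~\eqref{eq:require_small_epsilons} and~\eqref{eq:require_small_zeta}, so the only remaining work is to massage the resulting bound~\eqref{eq:suboptimality_nocov}. First I would fix the failure probability $\delta$ polynomially small in $T$, e.g.\ $\delta=T^{-1}$ (legitimate for large $T$, since then $\delta<e^{-2.6}$), so $\log(1/\delta)=O(\log T)$ is absorbed by $\widetilde{O}(\cdot)$ and the term $2B_\w B_\phi T\delta$ in~\eqref{eq:abstract_regret} is $O(1)$. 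The hypothesis that $\epsilon_0,\epsilon_\theta,\epsilon_c,\Delta\epsilon$ are $O(1)$ should be read as these quantities, and hence $\epsilon_0^2 d$ (which Assumption~\ref{as:quality} forces to be bounded), being $\Theta(1)$; with that, the first bracket of~\eqref{eq:suboptimality_nocov} collapses to $\widetilde{O}(T_2^{-1/2})$, while the last term, using that~\eqref{eq:opt_t1_nocov} makes $d^{1-\alpha/2}T_{\mathrm{exp}}^{-\beta/2}=\widetilde{O}(d\,T_1^{-1/2})$ for the chosen $\alpha,\beta$, collapses to $\widetilde{O}(d\,(T_1T_2)^{-1/2})$. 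Hence with probability at least $1-10\delta$,
\begin{equation}
  \risk(\overline{\w},\overline{\theta})-\risk(\wstar,\theta^*)=\widetilde{O}\!\left(\frac{1}{\sqrt{T_2}}+\frac{d}{\sqrt{T_1T_2}}\right)=:\epsilon(T_1,T_2,\delta).
\end{equation}

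Next I would take square roots, $\sqrt{\epsilon}=\widetilde{O}(T_2^{-1/4}+\sqrt d\,T_1^{-1/4}T_2^{-1/4})$ using $\sqrt{a+b}\le\sqrt a+\sqrt b$, and substitute into Theorem~\ref{thm:risktoregret} with the failure probability reset to $10\delta$ (again absorbed into $\widetilde{O}$). The three summands of~\eqref{eq:abstract_regret} become $O(1)$, $O(T_1+T_2)$, and $K(T-T_1-T_2)\sqrt\epsilon\le KT\sqrt\epsilon$, which is precisely~\eqref{eq:e2tc_regret}:
\begin{equation}
  R_{1:T}=\widetilde{O}\!\left(T_1+T_2+KT\,T_2^{-1/4}+KT\sqrt d\,T_1^{-1/4}T_2^{-1/4}\right).
\end{equation}

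For the specialization, take $\alpha=2$, $\beta=0$ in~\eqref{eq:opt_t1_nocov}, which is consistent with $T_1=d^2$: indeed $d^2$ exceeds the required lower bound $N_1$, since with $\delta=T^{-1}$ the quantities $\lambda$ and $\Delta\epsilon_\w$ in~\eqref{eq:asymp_n1} are $\widetilde\Theta(1)$, so $N_1=\widetilde\Theta(1)$. Set $T_2=(KT)^{4/5}$. For $K=O(T^{1/5})$ and $d=O(T^{2/5})$ one has $T_1=d^2=O(T^{4/5})\le T^{4/5}\le (KT)^{4/5}=T_2$, so $T_1+T_2=\widetilde O((KT)^{4/5})$ and the constraint $T_1<T_{\mathrm{exp}}$ holds (and $T_{\mathrm{exp}}=o(T)<T$). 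Computing the remaining terms, $KT\,T_2^{-1/4}=KT(KT)^{-1/5}=(KT)^{4/5}$ and $KT\sqrt d\,T_1^{-1/4}T_2^{-1/4}=KT\sqrt d\,(d^2)^{-1/4}(KT)^{-1/5}=KT(KT)^{-1/5}=(KT)^{4/5}$, so every term is $\widetilde O((KT)^{4/5})$ and thus $R_{1:T}=\widetilde O((KT)^{4/5})$.

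The calculation is essentially bookkeeping; the points needing care are (i) checking that the loose reading ``$O(1)$'' of the convex-region parameters is compatible with Assumption~\ref{as:quality}, so that $\epsilon_0^2 d$ (not merely $\epsilon_0$) is bounded; (ii) confirming that the concrete choices $T_1=d^2$, $T_2=(KT)^{4/5}$ still obey the non-asymptotic constraints of Theorem~\ref{thm:main}, which reduces to $N_1\le T_1$ and a legal value of $\alpha,\beta$ in~\eqref{eq:opt_t1_nocov}; and (iii) correctly identifying the dominant $T_1$-dependent term inside the bracket of~\eqref{eq:suboptimality_nocov} (equivalently~\eqref{eq:prevriskbound}) once the epsilons are treated as constants, namely the $d\sqrt{1/T_1}$ term, which produces the $d/\sqrt{T_1T_2}$ contribution. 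None of these is a genuine obstacle, so the ``hard part'' is just disciplined $\widetilde O$-accounting.
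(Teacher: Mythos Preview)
Your proposal is correct and follows essentially the same route as the paper: obtain a high-probability bound on the suboptimality gap from Theorem~\ref{thm:hyperparam_nocov}, take a square root, and plug it into Theorem~\ref{thm:risktoregret} with $\delta$ chosen polynomially small in $T$. The one cosmetic difference is that the paper cites the intermediate bound~\eqref{eq:prevriskbound} directly, which already carries an explicit $T_1$ dependence ($d/T_1$, $\log(1/\delta)/T_1$, and the dominant $d\sqrt{\log(1/\delta)/T_1}$ inside the bracket), whereas you quote the final form~\eqref{eq:suboptimality_nocov} and then invert the relation~\eqref{eq:opt_t1_nocov} to recover $d^{1-\alpha/2}T_{\mathrm{exp}}^{-\beta/2}=\widetilde O(d\,T_1^{-1/2})$. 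Using~\eqref{eq:prevriskbound} is slightly cleaner since it avoids relying on the $\Theta$ (as opposed to $O$) reading of~\eqref{eq:opt_t1_nocov}, but both lead to the same $\widetilde O(T_2^{-1/2}+d(T_1T_2)^{-1/2})$ gap and the same regret bound. Your handling of the specialization and the check that $T_1=d^2$ clears the $N_1$ threshold are fine.
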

\begin{proof}
  As long as hyperparameters are selected according to Theorem~\ref{thm:hyperparam_nocov}, we know that the conditions for the regret bound will be satisfied. Hence, the suboptimality gap $\risk(\overline{\w}, \overline{\theta})-\risk(\wstar,\theta^*)$ will be with high probability bounded by~\eqref{eq:prevriskbound}. We can now use Theorem~\ref{thm:risktoregret} to translate this risk 
bound into a bound on regret, and arrive at~\eqref{eq:e2tc_regret}. \[\]
\end{proof}

Depending on how $d$ and $K$ scale with $T$, we can select different $T_1$ and $T_2$ and achieve different regrets. For example, we can assume that $d=\Theta(T^\gamma)$ for some  $\gamma>0$ and $K=O(1)$. Depending on  $\gamma$, the regret bounds can be:
\begin{enumerate}
  \item $\gamma\le \frac{2}{5}$. Select $T_1=O(d^2), T_2=O(T^{4 /5})$, achieve $R_{1:T}=O(T^{4/5})$.
  \item $\frac{2}{5}<\gamma<1$. Select $T_1=T_2=O(T^{(\gamma+2)/3})$, achieve $R_{1:T}=O(T^{(\gamma+2)/3})$.
  \item $\gamma\ge 1$. Sublinear regret cannot be reached.
\end{enumerate}
We note that if $\gamma> 1$, the hidden dimension of the last layer of the network is larger than the number of data samples. In this regime, we should not expect local convexity of the loss anyway. NTK- or Polyak-Łojasiewicz-based analysis would be more appropriate. Alternatively, we can fall back to the lazy training analysis from Chapter~\ref{ch:linear}. If the misspecification $\epsilon_0$ is small enough, the regret bound of $O(\epsilon_0\sqrt{d}KT+(KT)^{4 /5})$ might still be useful.

\chapter{Experimental evaluation}
\label{ch:experiments}
\section{Online MNIST classification}
\label{sec:mnist}
Although predicting the reward from the input features is a regression task, our badit setup can be easily adapted to classification problems. Here, we consider the well-known task of handwritten digit recognition from the MNIST dataset~\cite{lecun1998gradient}. To evaluate the effects of pre-training, we split the data into two parts. Digits 0-4 are used to pre-train a classifier. The bandit is then given the pre-trained weights of the representation network and evaluated on digits 5-9.

The $5$-class classification problem is modeled by the bandit setup as follows. At time $t$, we are given a $28\times 28$ greyscale image of a digit as a vector $I_t\in\R^{784}$ drawn uniformly at random from the dataset. We represent the individual features of the action $a\in\{1,\ldots,5\}$ as $X_{t,a}=(\mathbf{0};\ldots; I_t;\ldots; \mathbf{0})\in\R^{3640}$, where $I_t$ is put into the  $a$-th block of size $784$. Five actions correspond to assigning a digit (for pre-training it is $0$ to  $4$, for the bandit ---  $5$ to $9$). If the correct digit is assigned, the reward is  $1$. Otherwise, it is $0$. This setup is employed as a testing ground in many bandit papers~\cite{kassraie2022neural,ban2021ee,zhang2020neural}. Assumptions~\ref{as:context} on the context distribution and~\ref{as:bound} on regularity are satisfied for this problem. Realizability of the reward function (Assumption~\ref{as:realizability}) is not satisfied exactly. Nevertheless, in this case we know that a trained neural network can achieve a success rate of over $99\%$ on the test set, so the rewards are very close to being realizable.

To make the pre-trained weights more transferable, we used several regularization techniques. First, we dealt with the problem of low-rank representations. Since the reward model only predicts a one-dimensional value, there is no need for the representations to span the entire space $\R^d$. In practice, all vectors $\pzero(X_{t,a})$ approximately lie in a low-dimensional subspace of $\R^d$. If the weights $\theta_0$ are only used on the data from the pre-training distribution, this scheme still works. When the weights are transferred, this brings issues. Low-rank representations do not convey valuable information about the input. Re-trained model will put the representations in a different subspace, and two random low-dimensional subspaces are with high probability nearly orthogonal in $\R^d$ for large $d$.

We use the following method to enforce orthogonality of the features. For each batch $\left\{(X_{i},r_i)\right\}_{i=1}^B$ received during pre-training, we request that the output features be approximately mutually orthogonal. We regularize the MSE loss with an additive term of
\begin{equation}
  \frac{c_1}{B^2}\sum_{i\ne j}\frac{\left|\phi_\theta(X_i)^\tp\phi_\theta(X_j)\right|}{\norm{\phi_\theta(X_i)}\norm{\phi_\theta(X_j)}}, \label{eq:cos_reg}
\end{equation}
where $c_1$ is a hyperparameter corresponding to the relative importance of this term. This enforces the cosines of angles between $\phi_\theta(X_i)$ and $\phi_\theta(X_j)$ to be close to zero.  

\begin{figure}[t]
  \centering
  \includegraphics[width=\textwidth]{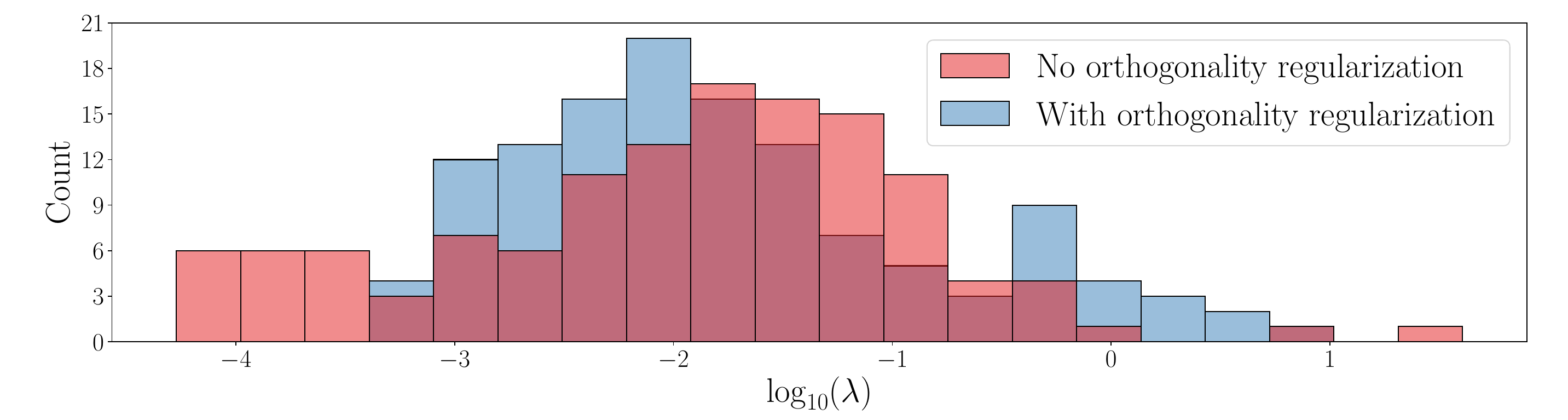}
  \caption{Distributions of spectra of $\widehat{\Sigma(\theta_0)}$ with and without regularization~\eqref{eq:cos_reg}.}
  \label{fig:spectrum_hist}
\end{figure}

To make representations more informative, we also added a decoder network $\psi_{\tilde{\theta}}$ on top of the representation network. We trained its parameters $\tilde{\theta}$ jointly with the rest of the predictor network, and added the auto-regressive squared error into the loss function. Since the features should capture the digit, but not its class, we did not regress into the zero-padded input to the network, but into the $784$-dimensional flattened original image. Let $\{I_k\}_{k=1}^B$ be the non-zero-padded $784$-dimensional images corresponding to the training batch $\{(X_k,r_k)\}_{k=1}^B$, where  $X_k$ is the zero-padded input and  $r_k\in\{0,1\}$ is the expected output of the reward model. Altogether, the loss on this batch is
\begin{equation}
  \begin{split}
  &\mathcal{L}(\w,\theta,\tilde{\theta})=\frac{1}{B}\sum_{k=1}^{B} (\w^\tp\phi_\theta(X_k)-r_k)^2
  + \frac{c_1}{B^2}\sum_{i\ne j}\frac{\left|\phi_\theta(X_i)^\tp\phi_\theta(X_j)\right|}{\norm{\phi_\theta(X_i)}\norm{\phi_\theta(X_j)}} \\
  &\qquad+ \frac{c_2}{784B}\sum_{k=1}^{B}\norm{\psi_{\tilde{\theta}}(\phi_\theta(X_k))-I_k}^2+c_3\left( \norm{\w}^2+\norm{\theta}^2+\norm{\tilde{\theta}}^2 \right).
  \end{split}\label{eq:pretrain_loss}
\end{equation}
The extra factor of $784$ is added to the denominator of the autoregressive regularization term to correct for the dimension of  $I_k$ and allow for a more interpretable value of  $c_2$.

\begin{figure}[t]
  \centering
  \includegraphics[width=\textwidth]{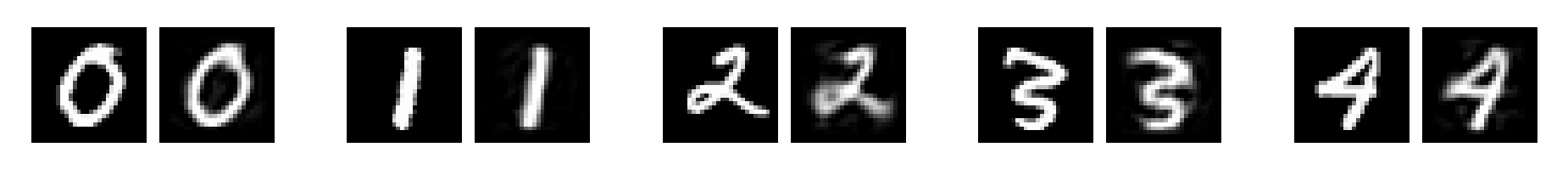}
  \includegraphics[width=\textwidth]{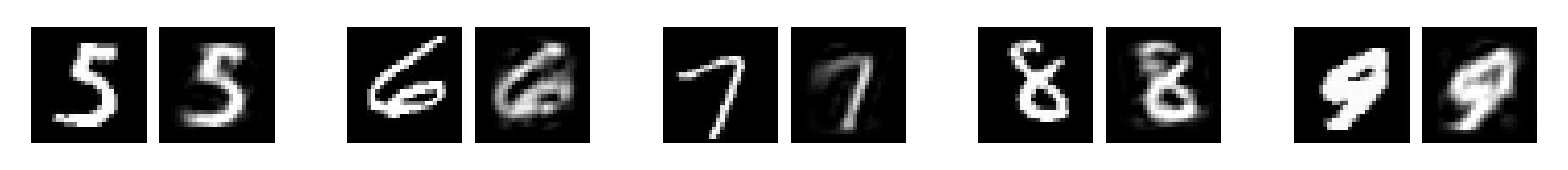}
  \caption{Sample images $I_k$ and encoded-decoded counterparts $\psi_{\tilde{\theta}}(\phi_\theta(X_k))$. For each example, we selected the first data point with the corresponding class from the dataset. In the top row, we present digits 0-4 from the validation set used to evaluate the prediction quality of the pre-trained model. In the bottom row, we show how the autoencoder generalizes to unseen digits 5-9.}
  \label{fig:decoded}
\end{figure}
The choices of hyperparameters, including the architecture of $\phi_\theta$ and  $\psi_{\tilde{\theta}}$ are provided in Appendix~\ref{ap:hyperparam_experiments}. We note that the regularization was chosen aggressively ($c_1=10$, $c_2=1$), but this did not affect the quality of predictions. On the validation set, the network assigns the highest reward to the correct class for $\approx 99.2\%$ images. In Figure~\ref{fig:spectrum_hist}, we present the distributions of spectra of the empirical covariance $\widehat{\Sigma(\theta_0)}$ with and without the orthogonality regularization, plotted on a $\log_{10}$ axis. We can see that without the regularization, there are two clear ``main directions'', with eigenvalues an order of magnitude larger than the rest, so features concentrate in a $2$-dimensional subspace. This mirrors results from~\cite{gur2018gradient}, where authors show that the Hessian of the loss of a classifier has $k$ main directions with the largest eigenvalues, with $k$ being the number of classes\footnote{It is easy to see that twice the covariance $2\Sigma(\theta_0)$ corresponds to the Hessian of the true risk~\eqref{eq:risk_decomposed} w.r.t. $\w$. Note also that the results in~\cite{gur2018gradient} are presented for the cross-entropy loss, but they seem to be transferable to our case of the MSE loss and outputs only from $\{0,1\}$.}. With regularization, the distribution is significantly less disjoint, and $18$ largest eigenvalues constitute $90\%$ of the trace of  $\widehat{\Sigma(\theta_0)}$. When regularization is disabled, this number goes down to $4$.

Although the decoder regularization bears secondary importance and does not influence prediction quality of the pre-trained model, we found that the auto-encoder successfully represents the input. After passing through the information bottleneck, the images of the digits are still well-recognizable, as Figure~\ref{fig:decoded} demonstrates. The model was only trained on digits 0-4, but the decoder recovers the other digits too, albeit at slightly worse quality.

\begin{figure}[t]
  \centering
  \includegraphics[width=\textwidth]{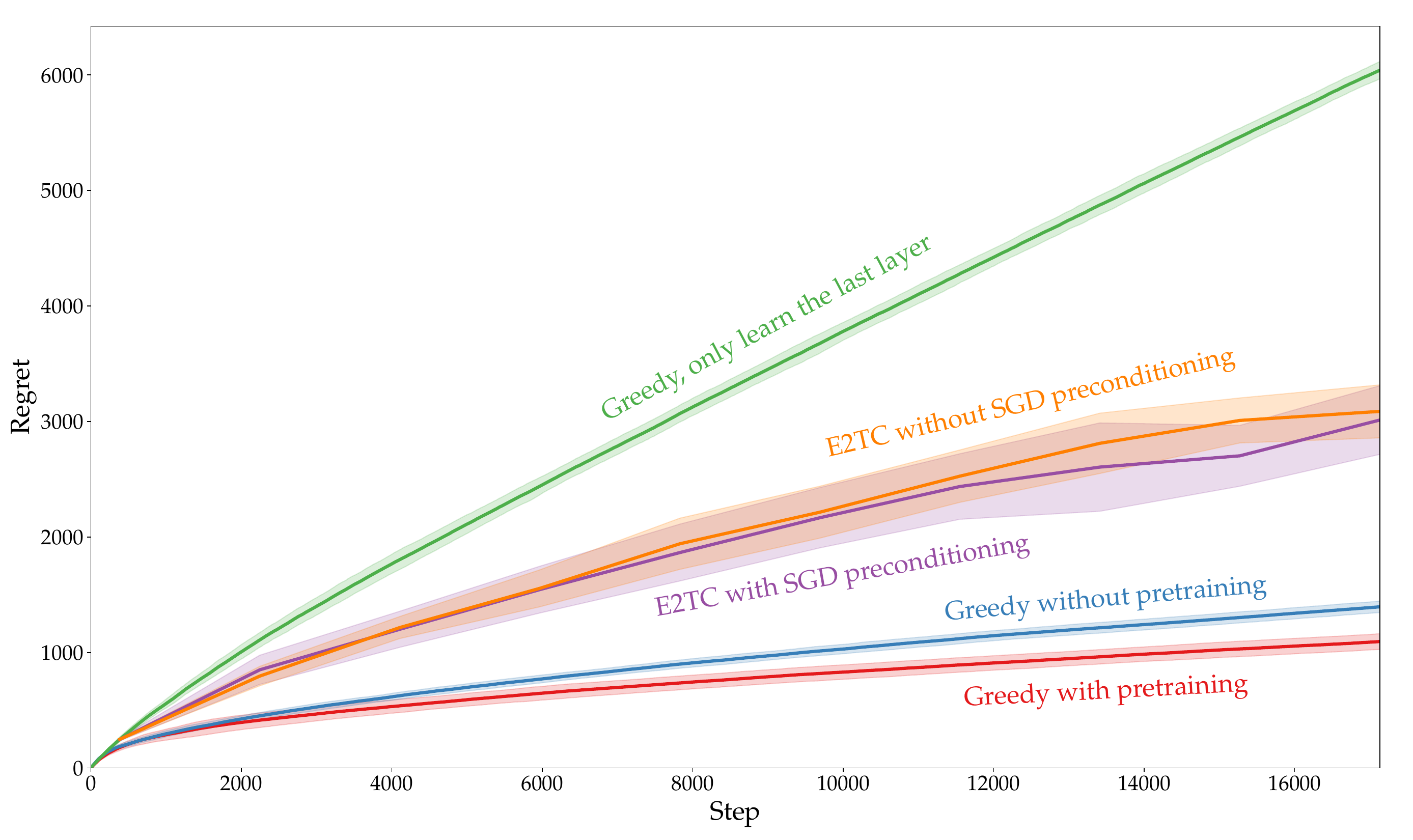}
  \caption{Regret of several bandit algorithms on the online MNIST classification task. For all algorithms we show the cumulative regret (number of misclassified digits so far) averaged over $20$ runs. For each curve we also show the empirical standard deviation. Since E2TC is not an anytime algorithm, we chose $10$ horizons for it and rerun the algorithm for each horizon with a different $T_2$. }
  \label{fig:mnist_regret}
\end{figure}

Pre-trained weights $\theta_0$ are used in a bandit setup to recognise digits 5-9. We compare E2TC with several baselines. First, we use three greedy algorithms, which select the class greedily according to the current estimate of the weights $(\w_t, \theta_t)$, and then update the weights as
\begin{equation}
  \w_{t+1}=\w_t-\zeta_\w \gv^\w_t,\quad\theta_{t+1}=\theta_t-\zeta_\theta\gv^\theta_t, \label{eq:basic_sgd_update}
\end{equation}
with gradient estimators  $\gv^\w_t$ and  $\gv^\theta_t$ given by~\eqref{eq:gvw_def} and~\eqref{eq:gvtheta_def}, respectively. The first greedy algorithm has $\theta$ frozen at the pre-trained weights $\theta_0$ and only updates $\w$, which is initialized with i.i.d. samples from $\mathcal{N}\left(0, \frac{1}{d}\right)$.  This evaluates whether only updating the last layer can learn anything at all, and how much worse this approach is compared to updating the whole network. The second greedy algorithm trains the entire network from scratch. Finally, the third one trains the network starting from $\theta_0$ and a random  $\w$. The learning rates $\zeta_\w$ and  $\zeta_\theta$ are searched on a grid (for the first version, only $\zeta_\theta$). We split the data for digits 5-9 into two equal parts, validation and test. We run the algorithm on the validation data and select the best hyperparameters by the cumulative regret at the end of the algorithm.

For E2TC, we compared the full Algorithm~\ref{alg:main_algo} with an ablated version that does not use pre-conditioning to update $\w$ during the second exploration stage, and instead updates the weights using~\eqref{eq:basic_sgd_update}. As Section~\ref{sec:hyperparams_no_assumptions} suggests, we kept $T_1$ fixed as the horizon was changing. We selected $T_1=d=128$. The other hyperparameters ($\zeta_\w$, $\zeta_\theta$, $\lambda$, and $T_2$) were tuned using the following procedure. For each selection $(\zeta_\w, \zeta_\theta,\lambda)$ from the grid, we chose $10$ horizons for E2TC runs (linearly spaced between $128$ and  $17132$), and ran the two exploration stages. We then evaluated the resulting weights $(\overline{\w},\overline{\theta})$ on a separate validation set. Thus, we get a set of points $\{(T_{2,k}, f_k)\}_{k=1}^{10}$, where  $f_k$ is the proportion of misclassified digits in the validation set for the selected  $T_2=T_{2,k}$. We then fitted a curve $f(T_2)=c(T_2+a)^\alpha+b$ with unknown parameters $a,b,c,\alpha$ to these $10$ points. This gave us a rough empirical formula for the dependence of the misclassification rate on $T_2$, which could be used to select the optimal $T_2$ as
\begin{equation}
  T_2^*=\argmax_{T_2} \left(0.9\left( T_1+T_2 \right)+(T-T_1-T_2)f(T_2)\right).
\end{equation}
With $T_2=T_2^*$, we ran the bandit on the validation set and recorded the final regret. If the learning rate were too high, the algorithm failed to learn anything and the dependence $f(T_2)$ was roughly constant. For lower learning rates, we would typically get a dependence $f=O(T_2^\alpha)$ with the exponent $\alpha\approx -0.35$, complying with our upper bound  $O(T_2^{-1 /4})$. Selecting $T_2$ according to the theoretically justified $T_2=O(T^{4 /5})$ would yield a better guarantee, but in practice it is an overly conservative choice.

Cumulative regrets for the three variants of the greedy algorithm and the two variants of E2TC on the test set are presented on Figure~\ref{fig:mnist_regret}. Greedy algorithms that learn the entire set of weights $(\w,\theta)$ clearly perform better than the rest. While we consider E2TC to be valuable as a theory-grounded approach to pre-training in bandits, it is not surprizing that an explore-then-commit algorithm does not outperform a greedy one. Greedy algorithms are known to outperform even those based on confidence bounds, especially in the random context regime. In~\cite{bastani2021mostly}, the authors prove that under some assumptions on the context distribution, a greedy algorithm without any heuristics for exploration can actually achieve regret $O(d^3\log^{3 /2} d \log T)$ when $K=2$, which is asymptotically optimal in $T$. The most important assumption there is that the context distribution is \emph{sufficiently diverse}, meaning that a greedy strategy would still try out all arms without explicitly exploring them. Among the two best greedy algorithms, we see that the one that uses pre-trained weights does consistently perform better. Pre-conditioning SGD on the weights $\w$, despite its theoretical importance, only confers a minor improvement in regret, especially if we consider the high variance in the regret curve. The greedy algorithm that only learns the last layer, as expected, performs worse than the others, but it still manages to learn something: by the end of the episode, its success rate is around $65\%$. This also speaks to the helpfullness of the pre-trained weights in this problem.

\section{Sample complexity of wine quality prediction}
As we discussed in the introduction, analyzing an ETC-type algorithm brings the extra benefit of a sample complexity bound in the offline i.i.d. learning regime. We can set $T_1+T_2=T$ in Theorem~\ref{thm:main}, and it still gives us a bound on $\risk(\overline{\w},\overline{\theta})-\risk(\wstar,\theta^*)$. In this section, we consider a regression problem in an offline i.i.d. setting, and study the validation curves of the algorithm and various ablations.

Evaluation is performed on the dataset from~\cite{cortez2009modeling} for the task of wine quality prediction from its physicochemical properties. The dataset consists of $4898$ white and  $1599$ red wine samples from Portugal. For each sample, $11$ numeric physicochemical properties were measured. Each wine was also rated by experts on a scale from  $1$ to $10$. The task of our models is to predict this rating. This task was chosen because it naturally allows for pre-training. We pre-train a model on the red wine dataset, and then fine-tune the representation network with the new last layer on the smaller set of white wines.

We consider the default training process from E2TC, starting from a pre-trained model. For $T_1$ steps, collect the data and run Ridge regression on the last layer, then, the remaining $T-T_1$ data points are used to run SGD on $(\w,\theta)$, pre-conditioning the steps for $\w$. We then remove some elements of this approach one by one, to see what effect this has on the validation curve. Our first baseline has pre-conditioning removed, updating the weights during SGD using~\eqref{eq:basic_sgd_update}. The second baseline also does not use pre-conditioning, but it additionally removes Ridge last layer estimation. Instead, the last layer weights are initialized randomly. Our final baseline does not use pre-trained weights. It runs simple SGD on the entire set of weights, all of which are randomly initialized.

\begin{figure}[t]
  \centering
  \includegraphics[width=\textwidth]{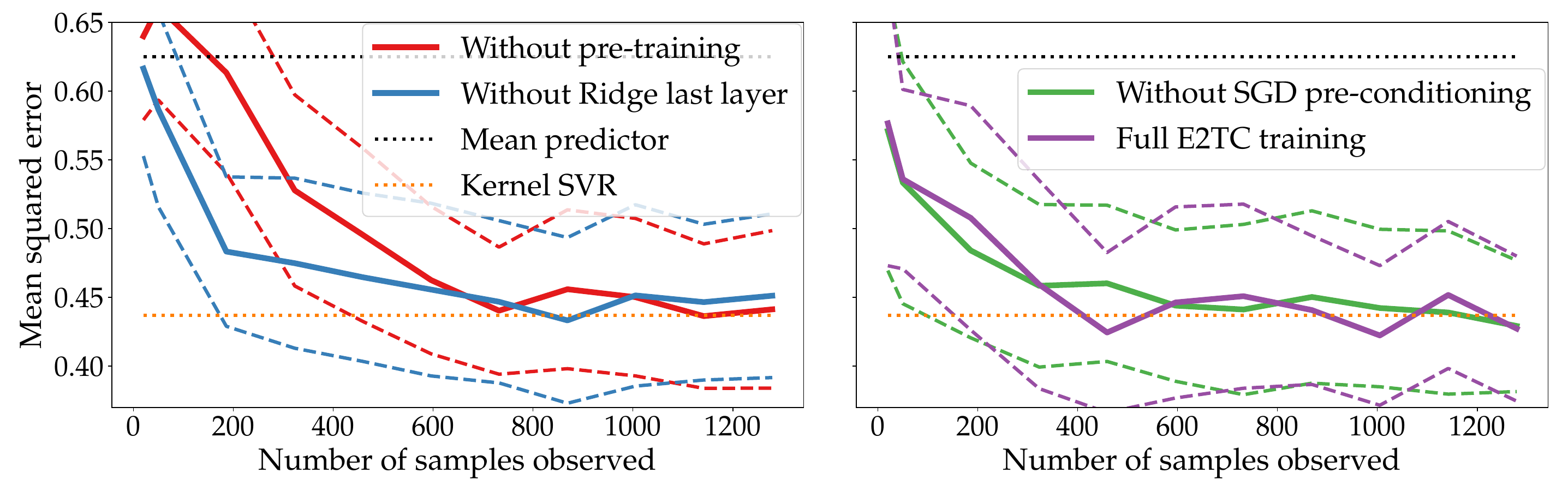}
  \caption{Mean squared error on the test data for E2TC training and various ablations, averaged over $100$ training runs with random permutations of the training data. To de-clutter the plot, we split it into two canvases. Dashed curves show standard deviations. Horizontal dotted lines show the performance of two non-neural-network baselines: predicting the mean of the training data and running kernelized Support Vector Regression with an RBF kernel on it. }
  \label{fig:wine_curves}
\end{figure}

As a preprocessing step, we normalize all input features and the output to have zero mean and unit variance. We use the same loss~\eqref{eq:pretrain_loss} for pre-training the model. We search the optimal hyperparameters, including the learning rate and hidden dimensions of the fully-connected network. We provide the details, including initialization distributions, in Appendix~\ref{ap:hyperparam_experiments}. The pre-trained network with the optimal choice of hyperparameters achieves a mean absolute error (MAE) of $0.51$ on the test data. Even despite heavy auto-regressive regularization, this is a higher test score than the result of the neural network reported in the original paper~\cite{cortez2009modeling} (MAE=$0.58$), probably due to an extra layer in our network or the use of GELU~\cite{hendrycks2016gaussian} activations instead of sigmoid.

We tune the hyperparameters that are not related to model architecture ($\zeta_\w$,  $\zeta_\theta$, $\lambda$) on the red wine data, instead of relying on the ones we selected for white wines. Hyperparameters with the lowest final mean squared error on a validation set are selected. We set $T_1=d=20$. After tuning, we train the model with these hyperparameters and record the mean squared error on a separate test set as the training progresses.

The resulting validation curves for all baselines are presented in Figure~\ref{fig:wine_curves}. We also show the performance of the mean predictor (to see that the methods do learn some nontrivial information about the wine) and of kernelized Support Vector Regression with an RBF kernel (SVR). The most surprising finding from this experiment is that experts' ratings are not completely random. A famous qualitative study of wine odours~\cite{morrot2001color} showed that a wine's scent described by experts is mostly derived from the wine's color. Hence, a further evaluation would be needed to determine whether the physico-chemical properties of the wine only influence its rating through the color, or other factors are at play.

There is significant variance in the training curves, so to distinguish genuine effects from noise we rerun training for $100$ times and average the results. One can see that after processing the entire dataset, all methods achieve roughly the same test performance, coinciding with the performance of SVR. Sample complexity of the methods, however, does differ. Pre-conditioning does not seem to change the performance in this experiment. Using only $20$ samples for Ridge estimation of the last layer (green and purple curves) brings the MSE down slightly at the initial stage compared to using these $20$ samples on SGD (blue curve). The most obvious is the effect of pre-training. Re-using the weights from the white wines is helpful to reach near-optimal performance with fewer red wine samples.

% \todo{Check when convexity stops holding and we instead get PL: which proportion of $\lambda$ is negative?}

\chapter{Conclusion}
\label{ch:conclusion}
The main goal of this thesis was to provide an initial theoretical framework to study the effects of neural network pre-training in bandit learning scenarios. Utilizing methods from linear bandits and convex learning literature, we formalized the benefits of pre-trained weights for stochastic contextual bandit learning. We derived the precise requirements on the pre-trained weights under which we can relax global convexity of the loss function, an assumption common in the literature, to local convexity around the minimum. For realizable rewards and locally convex losses, we designed an algorithm, which we called E2TC, that achieves a regret of $\widetilde{O}((KT)^{4 /5})$ in the regime $d=O(T^{2 /5}),\ K= O(T^{1 /5})$. Here, $K$ is the number of actions, $T$ is the horizon, and  $d$ is the dimension of the representation provided by the pre-trained network.

We also provided guarantees for weak learning in stochastic contextual bandit setting, when only the linear last layer of the network is updated and the pre-trained representation weights are kept fixed. We showed how Ridge regression can be used to achieve a regret bound of $O(\epsilon_0\sqrt{d}KT+(KT)^{4 /5})$  or $O(\epsilon_0\sqrt{d}KT)+\widetilde{O}(d^{1 /3} (KT)^{2 /3})$, depending on the regularization strength. Here, $\epsilon_0$ is a measure of misspecification of the pre-trained weights that is less restrictive than infinity-norm distance between the representations, traditionally used for adversarial bandits with misspecification. When no misspecification is present ($\epsilon_{0}=0$), the first bound becomes dimension-independent! This result is only possible due to the stochasticity of contexts and the $K^{4 /5}$ dependence of the bound on the number of actions. For both bounds, no assumption on the shape of the loss function was made, and they can be used both in under- and over-parameterized learning scenarios.

These results advertise a certain way of thinking about pre-training. Instead of relying on polynomially overparameterized networks required by NTK, we look at the problem through the lens of classical optimization, relaxing its overly restrictive assumptions with the help of pre-trained weights. We believe that much further progress can be made along this direction. Our results were based on an explore-than-commit technique, hence the regret bounds that scale worse than $\widetilde{O}(\sqrt{T})$. With an algorithm based on confidence bounds, we could probably reach this target. This approach should be applicable in the adversarial context setup too, albeit under more restrictive misspecification assumptions. When the contexts are stochastic, it was shown that regret logarithmic in $T$ is achievable for linear bandits~\cite{goldenshluger2013linear,bastani2021mostly}. This dependence might be extendable to the more general setting of regrets modelled by neural networks.

We base our analysis of E2TC on local convexity of the risk function around its minimum. When the problem is over-parameterized, this framework is inadequate~\cite{liu2022loss}. Instead we could rely on the Polyak-Łojasiewicz property of the loss. In this setting, the shape of the region around the minima of the loss could be simplified, as a small loss at a point implies its closeness to some minimizer~\cite{garrigos2023square}. We believe that our result that SGD stays in the convex basin w.h.p. from Section~\ref{sec:locally_convex_loss} can be extended to cover PŁ functions. This would immediately translate into an analogue of Theorem~\ref{thm:main} in the over-parameterized regime, assuming the reward is locally PŁ.

By now, machine learning community has largely embraced the bitter lesson. As Richard Sutton concisely puts it in~\cite{sutton2019bitter}, ``The biggest lesson that can be read from 70 years of AI research is that general methods that leverage computation are ultimately the most effective, and by a large margin.'' This effect has recently manifested itself even more explicitly through the scaling laws in large language models. We now know that scaling the size of the model translates directly into its performance on next word prediction, as long as available data is scaled accordingly~\cite{hoffmann2022training}. This dynamic gave rise to the phenomenon of foundation models, giant neural networks trained on large amounts of unsupervised data. Fine-tuning these models is an increasingly ubiquitous trend in machine learning. We hope that the new theory will reflect this shift, and that pre-training will make its way into the guarantees for our algorithms.

\appendix

\chapter{Appendix}
\section{Proof of Lemma~\ref{lem:wellspec_new}} \label{ap:proof_wellspec_new}
  The proof will rely on notation introduced in the beginning of Chapter~\ref{ch:linear}. We will use a standard covering argument. The proof closely follows the exposition in the beginning of Chapter~20 in~\cite{lattimore2020bandit}, but we have to adapt it to allow for an arbitrary scale of the noise and for nonzero regularization $\lambda$.

  Let 
\begin{align}
  \hwl &=\argmin_{\w}\frac{1}{T_1}\sum_{t=1}^{T_1}\left(\w^\tp\pzero(X_t)-\wstartp\pstar(X_t)\right)^2+\lambda \norm{\w}^2 \\
       &=\hsigl^{-1} \frac{1}{T_1}\sum_{t=1}^{T_1} \pzero(X_t)\pstar(X_t)^\tp\wstar. \label{eq:hwl_expr}
\end{align}
We observe that, due to the expression~\eqref{eq:wzero_newdef} for $\wzero$,
  \begin{equation}
    \left\Vert \wzero-\hwl\right\Vert_{\hsigl}=\left\Vert{\frac{1}{T_1}\sum_{t=1}^{T_1} \phi_{\theta_0}(X_t)\eta_t}\right\Vert_{\hsigl^{-1}}. \label{eq:normdiff_rewrite}
  \end{equation}
  Now we will analyze a simpler expression $\mathbf{x}^\tp(\wzero-\hwl)$ for some $\mathbf{x}\in S^{d-1}$ and use covering to make a bound on it uniform over $\mathbf{x}$. Substituting~\eqref{eq:hwl_expr} and~\eqref{eq:wzero_newdef},
\begin{equation}
  \mathbf{x}^\tp(\wzero-\hwl)= \frac{1}{T_1}\sum_{t=1}^{T_1} \mathbf{x}^\tp\hsigl^{-1}\pzero(X_t)\eta_t.
\end{equation}
Since $\eta_t$ are independent (from each other and from  $X_t$) and $B_\eta$-subgaussian, the sum is $\sigma$-subgaussian conditioned on $X_1,\ldots, X_{T_1}$. The subgaussianity parameter satisfies
\begin{align}
  \sigma^2 &=\frac{B_\eta^2}{T_1^2}\sum_{t=1}^{T_1} \left(\mathbf{x}^\tp\hsigl^{-1}\pzero(X_t)  \right) ^2 \\
           &=\frac{B_\eta^2}{T_1^2}\mathbf{x}^\tp \hsigl^{-1}\sum_{t=1}^{T_1}\pzero(X_t)\pzero(X_t)^\tp \hsigl^{-1}\mathbf{x} \\
           & =\frac{B_\eta^2}{T_1}\mathbf{x}^\tp   \hsigl^{-1}\hsigz\hsigl^{-1}\mathbf{x} \\
           &\le \frac{B_\eta^2}{T_1}\left\Vert\mathbf{x}\right\Vert_{\hsigl^{-1}}^2.
\end{align}
In the last line, we used $\hsigz\le\hsigl$. Therefore, for any fixed $X_1,\ldots,X_{T_1}$ and $\mathbf{x}$, we have
\begin{equation} \PP\left[ \mathbf{x}^\tp (\wzero-\hwl) < B_\eta \left\Vert\mathbf{x}\right\Vert_{\hsigl^{-1}} \sqrt{\frac{2\log(1 /\delta)}{T_1}} \right] \ge 1-\delta. \label{eq:highp_onex}
\end{equation}
Since the above holds for any fixed $X_1,\ldots, X_{T_1}$, we can argue from the law of total probability that this bound holds when $X_1,\ldots, X_{T_1}$ are random.

Let now $C_\epsilon$ be the covering net from Lemma~\ref{lem:sphere_cover}. We can use the bound~\eqref{eq:highp_onex}, substituting $\delta /|C_\epsilon|$ instead of $\delta$ and $\hsigl^{1 /2} \mathbf{x}$ instead of $\mathbf{x}$. A union bound then yields that
\begin{equation} 
  \PP\left[\forall \mathbf{x} \in C_\epsilon\quad \mathbf{x}^\tp \hsigl^{1 /2} (\wzero-\hwl) < B_\eta \sqrt{\frac{2\log(|C_\epsilon| /\delta)}{T_1}} \right] \ge 1-\delta. \label{eq:highp_net}
\end{equation}
We can now use the defining property of $C_\epsilon$, and convert this bound to one on $\norm{\wzero-\hwl}_{\hsigl}$:
\begin{align}
  &\norm{\wzero-\hwl}_{\hsigl} \nonumber \\
  &\qquad= \max_{\mathbf{y}\in S^{d-1}} \mathbf{y}^\tp \hsigl^{1 /2}(\wzero-\hwl)\\
  &\qquad= \max_{\mathbf{y}\in S^{d-1}}\min_{\mathbf{x} \in C_\epsilon} \Big\{(\mathbf{y}-\mathbf{x})^\tp \hsigl^{1 /2}(\wzero-\hwl) +\mathbf{x}^\tp \hsigl^{1 /2}(\wzero-\hwl)\Big\} \\
  &\qquad< \epsilon\norm{\wzero-\hwl}_{\hsigl}+\sqrt{\frac{2\log(|C_\epsilon|/\delta)}{T_1}}.
\end{align}
Rearranging,
\begin{equation}
\norm{\wzero-\hwl}_{\hsigl} < \frac{1}{1-\epsilon}\sqrt{\frac{2\log(|C_\epsilon|/\delta)}{T_1}}.
\end{equation}
Setting $\epsilon=1 /2$, we finally arrive at
\begin{equation}
  \norm{\wzero-\hwl}_{\hsigl}<2\sqrt{\frac{d\log 6 + \log(1 /\delta)}{T_1}}. 
\end{equation}
Together with~\eqref{eq:normdiff_rewrite}, this gives the statement of the lemma.
\begin{flushright}
\ensuremath{\square}
\end{flushright}

\section{Proofs of Theorems~\ref{thm:nodimregret} and~\ref{thm:dimregret}.}
\label{ap:boringproofs}
Here we provide proofs for the two theorems about the perfomance of E2TC with the second exploration stage disabled ($T_2=0$). These proofs are very similar, and mostly consist of checking the conditions for Theorem~\ref{thm:ridge_base} and extracting the asymptotic expressions from its unwieldy explicit bounds.

\begin{proof}[Theorem~\ref{thm:nodimregret}]
  In the first exploration stage of E2TC, we perform Ridge regression on $T_1$ i.i.d. samples $(X_t,r_t)$ from the data distribution. Theorem~\ref{thm:ridge_base} states that under certain conditions several bounds hold simultaneously with high probability. We will call the event that these bounds hold $S$ (``success''). Its complement will be called $F$ (``failure''). When $S$ holds, these bounds combine into an overall bound on $\norm{\wzero-\widetilde{\w}}_{\sigz}$, which we then substitute into~\eqref{eq:ridgeboundrisk} to move to $\risk(\omega_0)-\risk(\omega^*)$, and then use Theorem~\ref{thm:risktoregret} to get a regret bound.

Theorem~\ref{thm:ridge_base} states that $\PP[S] \ge  1-4\delta$ under certain conditions. We will use notation $\nu=\log(1 /\delta)$ for conciseness. We select
\begin{equation}
  \nu:=\log\frac{\max\left\{ T_1,e^3\ceff \right\} }{\ceff}=\max\left\{3,\log \frac{T_1}{\ceff}\right\}.
\end{equation}
We also select $\lambda=T_1^{-1 /2}$. The probability of failure is then bounded as
\begin{equation}
  \PP[F]<4e^{-\nu}=\frac{4\ceff}{\max\left\{ T_1,e^3\ceff \right\}}\le \frac{4\ceff}{T_1}\le \frac{4\max \left\{ 1,B_\phi^2 /\lambda \right\}}{T_1}=O(T_1^{-1 /2}) . \label{eq:pfailbound}
\end{equation}

We will now verify that the conditions of Theorem~\ref{thm:ridge_base} hold. First, we need that
\begin{equation}
  \nu > \max\left\{ 0, 2.6-\log \ceff \right\}.
\end{equation}
This holds whenever $T_1\ge 14$. We also need
\begin{align}
  &T_1\ge \frac{6 B_\phi^2}{\lambda}( \log \ceff+\nu)=6 B_\phi^2\sqrt{T_1} \log \max\left\{ T_1,e^3\ceff \right\} \\
  \iff & \left(\sqrt{T_1} \ge 6B_\phi^2\log T_1 \right)\ \land\ \left(\sqrt{T_1} \ge 6B_\phi^2( \log \ceff +3)   \right).
\end{align}
The first inequality here is satisfied for all sufficiently large $T_1$. We note that this introduces a dependency of $N_0$ on $B_\phi$. To see that the second inequality holds as well, we remember that $\ceff \le \max\left\{ 1, B_\phi^2 /\lambda \right\}$, so it suffices to have
\begin{equation}
  \sqrt{T_1} \ge 6B_\phi^2\left( \max\left\{0, \log(B_\phi^2 \sqrt{T_1})\right\} +3 \right).
\end{equation}
This also holds for sufficiently large $T_1$. Requirement~\eqref{eq:ridge_req_rho} is satisfied for $\rho_\lambda$ such that $\rho^2\deffo=B_\phi / \lambda$, according to~\eqref{eq:good_rho}. Requirement~\eqref{eq:ridge_req_b} is given to us by Lemma~\ref{lem:ridgeboundmisspec}. 

Now, we will provide the asymptotic bounds for individual terms used in Theorem~\ref{thm:ridge_base}. First, the reader is reminded of the notation
\begin{equation}
  \text{approx}_\lambda(X)= \wstartp\pstar(X)-\w^{\lambda \tp}\pzero(X).
\end{equation}
Its squared expected value is bounded as 
\begin{align}
  &\E_X\left[\text{approx}_\lambda(X)^2\right]\nonumber \\
  &\quad=\E_x\left[\left(\w^{\lambda \tp}\pzero(X)-\wstartp\pstar(X)\right) ^2 \right] +\lambda \Vert \w^\lambda\Vert^2-\lambda \Vert \w^\lambda\Vert^2 \\
  &\quad\le \E_x\left[\left(\wstar\pzero(X)-\wstartp\pstar(X)\right) ^2 \right] +\lambda \Vert \wstar\Vert^2 -\lambda \Vert \w^\lambda\Vert^2 \label{eq:approxlambound} \\
  &\quad\le B_\w ^2/\sqrt{T_1}+\epsilon_0^2.
\end{align}
Here we used the definitions of $\w^\lambda$ and of $\epsilon_0$.
% Expanding its squared expectation,
% \begin{align}
  % \E_X\left[\text{approx}_\lambda(X)^2\right]&=\norm{\wstar}^2_{M(\theta^*)}+\Vert\w^\lambda\Vert^2_{\sigz}-2\w^{\lambda\tp}\E_X\left[ \pzero(X)\pstar(X)^\tp\right]\wstar\\
  % &= \norm{\wstar}^2_{M(\theta^*)}+ \sum_{i=1}^{d} \frac{\alpha_i^2\lambda_i}{(\lambda+\lambda_i)^2}-2 \sum_{i=1}^{d} \frac{\alpha_i}{\lambda+\lambda_i}\mathbf{v}_i^\tp\E_X\left[ \pzero(X)\pstar(X)^\tp\right]\wstar\\
  % &= \norm{\wstar}^2_{M(\theta^*)}+\sum_{i=1}^{d} \left( \frac{\alpha_i^2\lambda_i}{(\lambda+\lambda_i)^2}-\frac{2\alpha_i^2}{\lambda+\lambda_i} \right) \\
  % &\le\norm{\wstar}^2_{M(\theta^*)}=\E_X\left[ \left( \wstartp\pstar(X) \right) ^2 \right]  \le B_\w ^2B_\phi^2.
% \end{align}
Next,
\begin{equation}
  \log\ceff+\nu=\log\max\left\{ T_1,e^3\ceff \right\} \le \log\max\left\{e^3,T_1,e^3B_\phi\sqrt{T_1}  \right\} =O(\log T_1).
\end{equation}
Substituting this into~\eqref{eq:deltas},
\begin{equation}
  \delta_s=O(T_1^{-1 /4}\sqrt{\log T_1}).
\end{equation}
Analogously, since $\ceff\ge 1$, we have $\nu=O(\log T_1)$. Lemma~\ref{lem:ridgeregerr} states that
\begin{equation}
  \epsilon_{rg}=O(T_1^{-1 /2}+\epsilon_0^2 d). \label{eq:epsilonrgasymp}
\end{equation}
We are ready to write the asymptotics of  $\epsilon_{bs}$ using~\eqref{eq:epsilonbs}. Since $(1-\delta_s)^{-2}=1+O(\delta_s)$, $\delta_s$ only contributes lower-order terms.
\begin{align}
  \begin{split}
  \epsilon_{bs}&=O\left(\left(\frac{1}{T_1}+\frac{\epsilon_0^2}{\sqrt{T_1}}+\frac{1}{T_1^{3 /2}}+\frac{\epsilon_0^2d}{T_1}\right)\log T_1\right. \\
               &\qquad\qquad\qquad\left.+\left( \frac{1}{T_1^{3 /2}}+\frac{\epsilon_0^2}{\sqrt{T_1}}+\frac{1}{T_1^{5 /2}}+\frac{\epsilon_0^2d}{T_1^2} \right)\log^2 T_1 \right)
  \end{split} \\
               &=O\left(\frac{\log T_1}{T_1}+\frac{\epsilon_0^2\log^2 T_1}{\sqrt{T_1}}+\frac{\epsilon_0^2d\log T_1}{T_1}\right). \label{eq:epsilonbsasymp}
\end{align}
We used~\eqref{eq:asympblambda} above. Note that due to Lemma~\ref{lem:ridgeboundmisspec}, $b_\lambda$ only contributes a $\log T_1$ factor to the second term here.

Before approaching $\epsilon_{vr}$, we need to deal with $\delta_f$. We will use inequalities~\eqref{eq:boundeffdim}:
\begin{align}
  \delta_f\sqrt{\deffo\defft}&=\sqrt{\frac{4B_\phi^2\deffo\defft/\lambda -\defft^2}{T_1}}\left( 1+\sqrt{8\nu}  \right) +\frac{4\nu\sqrt{B_\phi^4\defft/\lambda+\defft^2}}{3T_1} \label{eq:deltafsq} \\
                             & \le \sqrt{\frac{4B_\phi^6}{\lambda^3T_1}}( 1+\sqrt{8\nu}) +\frac{4\nu\sqrt{B_\phi^6/\lambda^2+B_\phi^4/\lambda^2}}{3T_1}  \\
                             &= O(T_1^{1 /4}\sqrt{\log T_1}).
\end{align}
This brings us to~\eqref{eq:epsilonvar}. Bounding $\defft$ using~\eqref{eq:boundeffdim} again, and discarding the  $\delta_s$ factors,
\begin{align}
  \epsilon_{vr}=O\left(\frac{1}{\sqrt{T_1}}+\frac{\sqrt{\log T_1}}{T_1^{3 /4}}+\frac{\sqrt{ \log T_1}}{T_1^{3 /4}}+\frac{\log^{3 /4} T_1}{T_1^{7 /8} }+\frac{\log T_1}{T_1}\right) =O\left(\frac{1}{\sqrt{T_1} }\right).  \label{eq:epsilonvarasymp}
\end{align}
Thus, variance brings the highest-order errors into our analysis, if we do not consider misspecification. Note how we could have bounded  $d_{p,\lambda} \le d$ here instead, and would get better dependence on $T_1$ (with appropriate  $\lambda$) at the cost of introducing the dimension into asymptotics.

Bringing ~\eqref{eq:epsilonrgasymp},~\eqref{eq:epsilonbsasymp}, and~\eqref{eq:epsilonvarasymp} together,
\begin{equation}
  \norm{\wzero-\widetilde{\w}}_{\sigz}^2\le 3\left( \epsilon_{rg}+\epsilon_{bs}+\epsilon_{vr} \right)=O\left(\epsilon_0^2 \dht+\frac{1}{\sqrt{T_1} } \right).
\end{equation}
This proves the first statement of the theorem. Moving to the risk relevant to us using~\eqref{eq:ridgeboundrisk},
\begin{equation}
  \sqrt{\risk(\omega_0)-\risk(\omega^*)}=O\left(\epsilon_0 \sqrt{\dht}+\frac{1}{T_1^{1 /4}} \right).
\end{equation}
Alternatively, using~\eqref{eq:altriskbound}, we get
\begin{equation}
  \sqrt{\risk(\omega_0)-\risk(\omega^*)}=O\left(\epsilon_0 \sqrt{\defft}+\frac{1}{T_1^{1 /4}} \right).
\end{equation}

Now we use Theorem~\ref{thm:risktoregret} and get
\begin{align}
  \E[R_{1:T}] = O\left(TT_1^{-1 /2}+T_1 +\epsilon_0 \sqrt{\min\{\dht,\defft\}} KT+T_1^{-1 /4}KT\right).
\end{align}
\end{proof}

Now we turn to the second theorem.
\begin{proof}[Theorem~\ref{thm:dimregret}]
  The proof proceeds similarly to that of Theorem~\ref{thm:nodimregret}. We reuse notation $\nu=\log(1 /\delta)$. We select
\begin{equation}
  \nu:=\frac{1}{2}\log T_1.
\end{equation}
Equivalently, this means that $\delta=1 /\sqrt{T_1} $. The choice of $\lambda$ is dictated by~\eqref{eq:lambdatwo}. The condition that
\begin{equation}
  \nu > \max\left\{ 0, 2.6-\log \ceff \right\}
\end{equation}
holds for $T_1\ge 183$ (in fact, for such $T_1$ it holds that $\nu> 2.6$). The other requirement 
\begin{align}
  T_1\ge \frac{6 B_\phi^2}{\lambda}( \log \ceff+\nu)=\frac{6T_1}{7\log T_1} \log (\ceff \sqrt{T_1} )
\end{align}
is satisfied for
\begin{equation}
  T_1>\frac{6}{7}\left( a+\frac{1}{2} \right)
\end{equation}
since $\ceff \le d<T_1^a$. Now we proceed to bounding the same quantities as we did in the previous theorem, substituting the new values of $\lambda$ and $\nu$. First, the probability of failure:
\begin{equation}
  \PP[F]<4e^{-\nu}=\frac{4}{\sqrt{T_1}}.
\end{equation}
Next,~\eqref{eq:deltas} under our conditions implies $\delta_s=O(1)$. Lemma~\ref{lem:ridgeregerr} this time gives
\begin{equation}
  \epsilon_{rg}=O\left(\epsilon_0^2d+\frac{\log T_1}{T_1}\right).
\end{equation}
From~\eqref{eq:approxlambound},
\begin{equation}
  \E_X\left[ \text{approx}_\lambda(X)^2 \right] = O\left( \frac{\log T_1}{T_1}+\epsilon_0^2 \right).
\end{equation}
Substituting the above into~\eqref{eq:epsilonbs},
\begin{align}
  \begin{split}
  \epsilon_{bs}&=O\left(\left( \frac{1}{T_1}+\frac{\epsilon_0^2}{\log T_1}+\frac{\log T_1}{T_1^2}+\frac{\epsilon_0^2 d}{T_1} \right)\log T_1 \right. \\
               &\qquad\qquad+\left.\left(\frac{1}{T_1\log T_1}+\frac{\epsilon_0^2d}{\log^2 T_1}+\frac{\log T_1}{T_1^3}+\frac{\epsilon_0^2 d}{T_1^2}  \right)\log^2T_1 \right)
  \end{split} \\
               &=O\left(\frac{\log T_1}{T_1}+\epsilon_0^2d \right).
\end{align}
Note that Lemma~\ref{lem:ridgeboundmisspec} was crucial here to avoid a term that grows with $T_1$.

Bounding $\deffo\le d$ and $\defft\le d$ in~\eqref{eq:deltafsq},
\begin{equation}
  \delta_f\sqrt{\deffo\defft} =O(d).
\end{equation}
When put into~\eqref{eq:epsilonvar} together with $\deffo\le d$ and $\defft\le d$, this gives
\begin{equation}
  \epsilon_{vr}=O\left(\frac{d}{T_1}+\frac{\sqrt{d\log T_1}}{T_1}+\frac{\log T_1}{T_1}\right) .
\end{equation}
We see that  $\epsilon_{vr}$ once again brings the largest terms (asymptotically) into the excess risk bound. This time, the dependence of the sublinear in $T_1$ term on the dimension comes from $\epsilon_{vr}$.

Bringing the above asymptotic results together,
\begin{equation}
  \norm{\wzero-\widetilde{\w}}_{\sigz}^2\le 3\left( \epsilon_{rg}+\epsilon_{bs}+\epsilon_{vr} \right)=O\left(\epsilon_0^2 d+\frac{d}{T_1}+\frac{\sqrt{d \log T_1}}{T_1}+\frac{\log T_1}{T_1} \right).
\end{equation}
This gives the first statement of the theorem. In terms of the risk, we have
\begin{equation}
\sqrt{\risk(\omega_0)-\risk(\omega^*)} \le O\left( \epsilon_0 \sqrt{d} +\sqrt{\frac{d}{T_1}}+\frac{d^{1/4}\log^{1 /4} T_1}{\sqrt{T_1}}+\sqrt{\frac{\log T_1}{T_1}}  \right).
\end{equation}
Substituting into Theorem~\ref{thm:risktoregret},
\begin{equation}
  \E\left[R_{1:T}  \right] =O\left(\frac{T}{\sqrt{T_1}} +T_1+KT\left(\epsilon_0 \sqrt{d} \sqrt{\frac{d}{T_1}} +\frac{d^{1 /4}\log ^{1 /4}T_1}{\sqrt{T_1} }+\sqrt{\frac{\log T_1}{T_1}}  \right) \right).
\end{equation}
\end{proof}

\section{Proof of Theorem~\ref{thm:main}} \label{ap:proof_main}
% \begin{proof}[Theorem~\ref{thm:main}]

  We begin by showing that with $\PP>1-4\delta$, we will have $\norm{\wzero-\wstar}_{\Sigma(\theta_0)}<\epsilon_\w$. For this, we once again turn to the Ridge regression analysis from Section~\ref{sec:ridge_lls}. We would like to apply Theorem~\ref{thm:ridge_base} like we did to prove Theorem~\ref{thm:nodimregret}. Theorem~\ref{thm:ridge_base} provides its results in terms of constants $\rho_\lambda$ and  $b_\lambda$. As we have seen in Section~\ref{sec:ridge_lls}, they can be selected as follows. First, $\rho_\lambda$ is such that $\rho_\lambda^2 d_{1,\lambda}=B_\phi^2/\lambda$. Next, by Lemma~\ref{lem:ridgeboundmisspec}, $b_\lambda$ satisfies 
  \begin{equation}
    \frac{b_\lambda^2 d_{1,\lambda}}{T_1^2}=O\left(\frac{\epsilon_0^2 d}{\lambda^2 T_1^2}+ \frac{1}{\lambda T_1^2}\right).
  \end{equation}
One condition for the theorem is that 
\begin{equation}
 T_1\ge \frac{6B_\phi^2}{\lambda}\log \frac{\ceff}{\delta},
\end{equation}
where $\ceff$ is defined in~\eqref{eq:extra_effdim_def}. Since $\ceff<\min\{d, \max\{B_\phi^2 /\lambda, 1\}\}$, this condition holds for $T_1>N_1$ if $N_1$ satisfies~\eqref{eq:asymp_n1}. The theorem also asks for $\nu>\max\{0, 2.6-\log\ceff\}$. This holds whenever  $\nu>2.6$ or, equivalently, when $\delta<\exp(-2.6)$. Thus, Theorem~\ref{thm:ridge_base} is applicable. With $\PP\ge 1-4\delta$, the bound~\eqref{eq:epsilonbs} on $\epsilon_{bs}$ and the bound~\eqref{eq:epsilonvar} on $\epsilon_{vr}$ hold. Here, $\epsilon_{bs}$ and  $\epsilon_{vr}$ are respectively the bias and variance errors of Ridge regression, introduced in~\eqref{eq:introerrors}. They are used to bound the distance $\norm{\wstar-\wzero}_{\Sigma(\theta_0)}$:
\begin{equation}
  \norm{\wstar-\wzero}_{\Sigma(\theta_0)}<\norm{\wstar-\w^\lambda}_{\Sigma(\theta_0)}+\sqrt{\epsilon_{bs}}+\sqrt{\epsilon_{vr}}.  \label{eq:decomp_bound_for_epsilonw}
\end{equation}
The first term is bounded by Lemma~\ref{lem:ridgeregerr_wstar} as
\begin{equation}
  \norm{\wstar-\w^\lambda}_{\Sigma(\theta_0)}\le \frac{\lambda}{2} B_\w ^2+2\epsilon_0^2 d_{2,\lambda},
\end{equation}
where we used that, by Assumption~\ref{as:bound}, $\norm{\wstar}< B_\w $. Bounds~\eqref{eq:epsilonbs} and~\eqref{eq:epsilonvar} give us the following assymptotics for $T_1\to \infty$ and $\delta,\lambda\to 0$:
\begin{gather}
 \epsilon_{bs}=O\left( \frac{l}{T_1}\left( 1+\frac{\epsilon_0^2}{\lambda}+\lambda+\epsilon_0^2d \right) +\frac{l^2}{T_1^2}\left( \frac{1}{\lambda}+\frac{\epsilon_0^2d}{\lambda^2} \right) \right), \\
 \begin{split}
 \epsilon_{vr}=O\Bigg( \frac{\deffp}{T_1}+\frac{\deffp l^{1 /2}}{\lambda^{1 / 2} T_1^{3 /2}}+\frac{\deffp ^{1 /2} l}{T_1^2\lambda^{1 /2}}+\frac{\deffp l}{T_1^2}+\frac{\deffp^{1 /2} l^{1 /2}}{T_1} \\
 +\frac{\deffp^{1 /2} l^{3 /4}}{\lambda^{1 /4}T_1^{5 /4}} +\frac{\deffp^{1/4}l}{\lambda^{1 /4}T_1^{3 /2}}+\frac{\deffp^{1 /2}l}{T_1^{3 /2}}\Bigg),
 \end{split}
\end{gather}
where $\deffp$ is the effective dimension with asymptotics $\deffp=O(\min\{d,1 /\lambda\})$ and  $l=\log(1 /\delta)$. To get the bound in~\eqref{eq:decomp_bound_for_epsilonw} below $\epsilon_\w$, we need that $\sqrt{\epsilon_{bs}} +\sqrt{\epsilon_{vr}}<\Delta\epsilon_\w$. Asymptotically, this requires that the square root of each term is below $\Delta\epsilon_\w$, which happens for large enough $T_1$. More specifically, there exists $N_1(\delta,\lambda,\epsilon_0,d,\Delta\epsilon_\w)$ with asymptotics
\begin{equation}
  \begin{split}
  N_1=\Omega\Bigg( \max\Bigg\{\frac{l}{\deps^2},\ \frac{\epsilon_0^2 l}{\lambda \deps^2},\ \frac{\epsilon_0^2 dl}{\deps^2},\ \frac{l}{\lambda^{1 /2}\deps},\ \frac{\epsilon_0d^{1 /2}l}{\lambda\deps}, \frac{\deffp}{\deps^2}, \\
  \frac{\deffp^{2 /3}l^{1 /3}}{\lambda^{1 /3}\deps^{4/3}},\ \frac{\deffp^{1 /2}l^{1 /2}}{\deps^2},\ \frac{\deffp^{2 /5}l^{3 /5}}{\lambda^{1 /5}\deps^{5 /8}},\ \frac{\deffp^{1 /6}l^{2 /3}}{\lambda^{1 /6}\deps^{4 /3}},\ \frac{\deffp^{1 /3}l^{2 /3}}{\deps^{4 /3}}\Bigg\} \Bigg)
\end{split} \label{eq:full_asymp_n1}
\end{equation}
such that, whenever $T_1>N_1$, we have $\norm{\wstar-\w_0}_{\Sigma(\theta_0)}\le \epsilon_\w$. The expression~\eqref{eq:full_asymp_n1} is clearly inconvenient to handle. By using the fact that $\deffp=O(1 /\lambda)$ and that $\Delta\epsilon_\w<\epsilon_c<1$, we can upper bound it with the asymptotic expression~\eqref{eq:asymp_n1}.  Thus, with $\PP\ge 1-4\delta$, for large enough $T_1$, we have $\norm{\wstar-\w_0}_{\Sigma(\theta_0)}<\epsilon_\w$.

Now we turn to the SGD stage of the algorithm. The remaining part of the proof follows the structure of the proof of Theorem~\ref{thm:sgd_containment}. There are two differences. First, we have to separately consider that $\w$ and $\theta$ remain close to $\wstar$  and  $\theta^*$, respectively. Next, we have to take preconditioning in $\w$ into account. Dealing with preconditioning is complicated by the fact that the algorithm uses the covariance estimate $\widehat{\Sigma(\theta_0)}$ instead of the real covariance, while the latter defines the convex basin $\conv$.

We introduce a filtration $\{\mathcal{F}_t\}_{t=0}^{T_2}$, defined as
\begin{equation}
  \quad\mathcal{F}_{t}=\sigma(X_1, \ldots,X_{T_1},\widetilde{X}_{1},\ldots,\widetilde{X}_t).
\end{equation}
We remind the reader here that $(X_t,r_t)_{t=1}^{T_1}$ is the data used for Ridge estimation of $\w_0$, and $(\widetilde{X}_t,\tilde{r}_t)_{t=1}^{T_2}$ is the data used for SGD. With this notation, $\w_0=\w_1$ is $\mathcal{F}_0$-measurable. In general, $\w_t$ and $\theta_t$ are $\mathcal{F}_{t-1}$-measurable, while $\gv^\w_t$ and $\gv^\theta_t$ are  $\mathcal{F}_t$-measurable. For brevity, we also introduce
\begin{equation}
  \sigz:=\Sigma(\theta_0),\quad\hsigz:=\widehat{\Sigma(\theta_0)},\quad\hsigl:=\widehat{\Sigma(\theta_0)}+\lambda I.
\end{equation}
Let us assume that until step $t$, SGD both in  $\w$ and  $\theta$ was contained in the convex basin and that the projection was not involved. Mathematically speaking, we assume that
\begin{gather}
  \forall k\le t\ (\w_k,\theta_k)\in\conv\quad\text{and}\quad \forall k< t\ (\w_k-\zeta\hsigl^{-1}\gv^\w_k,\ \theta_k-\zeta\gv^\theta_k)\in\conv. \label{eq:assumption_conv}
\end{gather}
Let $\widehat{\w}_{t+1}=\w_t-\zeta\hsigl^{-1}\gv^\w_t$ and $\hat{\theta}_{t+1}=\theta_t-\zeta\gv^\theta_t$. We have that $\w_{t+1}=\Pi_{\dom_\w}(\widehat{\w}_{t+1})$ and $\theta_{t+1}=\Pi_{\dom_\theta}(\hat{\theta}_{t+1})$. As we shall see, under certain conditions that hold with high probability, $(\widehat{\w}_{t+1},\hat{\theta}_{t+1})\in\conv$, which implies $\widehat{\w}_{t+1}=\w_{t+1}$ and $\hat{\theta}_{t+1}=\theta_{t+1}$. Containing $\widehat{\w}_{t+1}$ in $\conv$ requires bounding the following quantity:
\begin{align}
  \norm{\widehat{\w}_{t+1}-\wstar}_{\sigz}^2 \le \norm{\hsigz-\sigz}_2\norm{\widehat{\w}_{t+1}-\wstar}^2+\norm{\widehat{\w}_{t+1}-\wstar}_{\hsigz}^2.
\end{align}
We will now use Lemma~\ref{lem:matmomentum} to bound $\norm{\hsigz-\sigz}_2$. Also, 
\begin{equation}
\norm{\widehat{\w}_{t+1}-\wstar}^2\le 2\norm{\w_t-\wstar}^2+2\zeta^2\norm{\hsigz^{-1}\gv^\w_t}^2\le 8B_\w^2+2\frac{\zeta^2}{\lambda}D_\w^2.
\end{equation}
Introducing
 \begin{equation}
   S_\delta:=\frac{8B_\w^2+2D_\w^2\zeta^2 / \lambda}{T_1}\left(B_\phi^2\log \frac{2d}{\delta}+\sqrt{B_\phi^4\log^2 \frac{2d}{\delta}+2B_\phi^4T_1\log\frac{2d}{\delta}}\right), 
 \end{equation}
 we get, with $\PP\ge 1-\delta$,
\begin{align}
  \begin{split}
  &\norm{\widehat{\w}_{t+1}-\wstar}_{\sigz}^2 \\
  &\quad< S_\delta+\norm{\widehat{\w}_{t+1}-\wstar}_{\hsigz}^2
  \end{split} \\
  &\quad=S_\delta+\norm{\w_{t}-\wstar}_{\hsigz}^2+\zeta^2\norm{\hsigl^{-1}\gv^\w_t}^2_{\hsigz}+2\zeta(\wstar-\w_t)^\tp\hsigz\hsigl^{-1}\gv^\w_t.
\end{align}
Using $\hsigz\hsigl^{-1}=I-\lambda\hsigl^{-1}$, we arrive at
\begin{align}
  \begin{split}
  &\norm{\widehat{\w}_{t+1}-\wstar}_{\sigz}^2< S_\delta+\norm{\w_{t}-\wstar}_{\hsigz}^2 \\
  &\qquad+\zeta^2\norm{\hsigl^{-1}\gv^\w_t}^2_{\hsigz}+2\zeta(\wstar-\w_k)^\tp\gv^\w_k-2\lambda\zeta(\wstar-\w_k)^\tp\hsigl^{-1}\gv^\w_k.
  \end{split}\label{eq:single_step_decomp}
\end{align}
By the assumption~\eqref{eq:assumption_conv}, projection was not involved until step $t$, so we can apply~\eqref{eq:single_step_decomp} recursively to $\norm{\w_{t}-\wstar}_{\hsigz}^2$ and get
\begin{align}
  &\norm{\widehat{\w}_{t+1}-\wstar}_{\sigz}^2< S_\delta+\norm{\w_1-\wstar}_{\hsigz}^2 \\
  &\quad+\zeta^2\sum_{k=1}^{t}\norm{\hsigl^{-1}\gv^\w_k}^2_{\hsigz}+2\zeta \sum_{k=1}^{t}(\wstar-\w_k)^\tp\gv^\w_k-2\lambda\zeta \sum_{k=1}^{t}(\wstar-\w_k)^\tp\hsigl^{-1}\gv^\w_k.\nonumber
\end{align}
Using Lemma~\ref{lem:matmomentum} again to switch from $\norm{\w_1-\wstar}_{\hsigz}^2$ to  $\norm{\w_1-\wstar}_{\sigz}^2$, we get with $\PP\ge 1-2\delta$
\begin{align}
  &\norm{\widehat{\w}_{t+1}-\wstar}_{\sigz}^2 <2S_\delta+\norm{\w_1-\wstar}_{\sigz}^2 \label{eq:prelim_bound_sigma0} \\
  &\quad+\zeta^2\sum_{k=1}^{t}\norm{\gv^\w_k}^2_{\hsigl^{-1}}+2\zeta \sum_{k=1}^{t}(\wstar-\w_k)^\tp\gv^\w_k-2\lambda\zeta \sum_{k=1}^{t}(\wstar-\w_k)^\tp\hsigl^{-1}\gv^\w_k.\nonumber
\end{align}
A similar recursive derivation for $\norm{\widehat{\w}_{t+1}-\wstar}_2^2$ gives us that
\begin{equation}
  \norm{\widehat{\w}_{t+1}-\wstar}_2^2=\norm{\w_1-\wstar}_2^2+\zeta^2 \sum_{k=1}^{t} \norm{\hsigl^{-1}\gv^\w_k}_2^2+2\zeta \sum_{k=1}^{t}(\wstar-\w_k)^\tp\hsigl^{-1}\gv^\w_k.
\end{equation}
We can express the last sum from this equation and substitute it into~\eqref{eq:prelim_bound_sigma0}:
\begin{align}
  &\norm{\widehat{\w}_{t+1}-\wstar}_{\sigz}^2< 2S_\delta+\norm{\w_1-\wstar}_{\sigz}^2+\lambda( \norm{\w_1-\wstar}_2^2-\norm{\widehat{\w}_{t+1}-\wstar}_2^2) \nonumber \\
  &\quad+\zeta^2\sum_{k=1}^{t}\norm{\hsigl^{-1}\gv^\w_k}^2_{\hsigz}+2\zeta \sum_{k=1}^{t}(\wstar-\w_k)^\tp\gv^\w_k+\lambda\zeta^2 \sum_{k=1}^{t} \norm{\hsigl^{-1}\gv^\w_k}_2^2. \label{eq:prelim2_bound_sigma0}
\end{align}
Now the first and the last sum can be combined using
\begin{equation}
  \norm{\hsigl^{-1}\gv^\w_k}^2_{\hsigz}+\lambda\norm{\hsigl^{-1}\gv^\w_k}_2^2=\norm{\hsigl^{-1}\gv^\w_k}^2_{\hsigl}=\norm{\gv^\w_k}^2_{\hsigl^{-1}}.
\end{equation}
Substituting this into~\eqref{eq:prelim_bound_sigma0},
\begin{align}
  \begin{split}
  &\norm{\widehat{\w}_{t+1}-\wstar}_{\sigz}^2< 2S_\delta+\norm{\w_1-\wstar}_{\sigz}^2+4B_\w^2\lambda \\
  &\quad\qquad+\zeta^2\sum_{k=1}^{t}\norm{\gv^\w_k}^2_{\hsigl^{-1}}+2\zeta \sum_{k=1}^{t}(\wstar-\w_k)^\tp\gv^\w_k. \label{eq:randomeqwhenwillitend}
  \end{split}
\end{align}
Terms in the first sum in~\eqref{eq:randomeqwhenwillitend} can be bounded as $\norm{\gv^\w_k}_{\hsigl^{-1}}^2<D_\w^2 /\lambda$. The last sum in this expression will be bounded using time-uniform Hoeffding-Azuma inequality. Let
\begin{align}
 &a_k:=2\zeta(\wstar-\w_k)^\tp\gv^\w_k - \E\left[2\zeta(\wstar-\w_k)^\tp\gv^\w_k \mid \mathcal{F}_{k-1}  \right] .
\end{align}
It is an $\mathcal{F}$-adapted martingale difference sequence, which can be bounded as
\begin{equation}
  |a_k|\le 8B_\w D_\w\zeta \quad\text{a.s.}
\end{equation}
Let $E_\w$ be an even defined as
\begin{equation}
  E_\w:=\left\{ \forall t\quad \sum_{k=1}^{t} a_k<20 B_\w D_\w\zeta\sqrt{t\left((\log \log (64eB_\w D_\w t\zeta^2))_++\log(2 /\delta)\right)}  \right\}. \label{eq:def_ew}
\end{equation}
By Lemma~\ref{lem:uniform_azuma}, $\PP[E_\w]\ge 1-\delta$. The bound in $E_\w$ is itself upper-bounded by
\begin{equation}
  P_\delta^\w:= 20 B_\w D_\w\zeta\sqrt{T_2\left((\log \log (64eB_\w D_\w T_2\zeta^2))_++\log(2 /\delta)\right)} .
\end{equation}
With this notation, whenever $E_\w$ holds, we have for all $t\le T_2$,
\begin{align}
  \begin{split}
    \norm{\widehat{\w}_{t+1}-\wstar}_{\sigz}^2\le \norm{\w_1-\wstar}_{\sigz}^2+4B_\w^2\lambda +2S_\delta+ \zeta^2T_2 \frac{D_\w^2}{\lambda}+P^\w_\delta \\
  +2\zeta \sum_{k=1}^{t} (\wstar-\w_k)^\tp\nabla_\w \risk(\w_k,\theta_k), \label{eq:final_bound_hatw}
  \end{split}
\end{align}
where we used that, by definition of $\gv^\w_k$, we have  $\E\left[ \gv^\w_k \mid \mathcal{F}_{k-1} \right] =\nabla_\w \risk(\w_k,\theta_k)$. 

Now we will turn to $\norm{\hat{\theta}_{t+1}-\theta^*}$. We would like to remind the reader that $\hat{\theta}_{t+1}=\theta_t-\zeta \gv^\theta_t$, and that we are still operating under the assumption~\eqref{eq:assumption_conv}. Since no preconditioning is involved with respect to $\theta$, analyzing its trajectory is significantly easier. We have
\begin{equation}
  \norm{\hat{\theta}_{t+1}-\theta^*}^2=\norm{\theta_t-\theta^*}^2+\zeta^2\norm{\gv^\theta_t}^2+2\zeta(\theta^*-\theta_t)^\tp\gv^\theta_t.
\end{equation}
Applying this recursively to $\norm{\theta_t-\theta^*}^2$,
\begin{equation}
  \norm{\hat{\theta}_{t+1}-\theta^*}^2=\norm{\theta_1-\theta^*}^2+\zeta^2 \sum_{k=1}^{t} \norm{\gv^\theta_t}^2+2\zeta \sum_{k=1}^{t} (\theta^*-\theta_k)^\tp\gv^\theta_k. \label{eq:thetahat_sum_expr}
\end{equation}
The last sum enjoys a uniform high-probability bound analogous to~\eqref{eq:def_ew}. Let
\begin{equation}
  P_\delta^\theta:= 20 B_\theta D_\theta\zeta\sqrt{T_2\left((\log \log (64eB_\theta D_\theta T_2\zeta^2))_++\log(2 /\delta)\right)} .
\end{equation}
By Lemma~\ref{lem:uniform_azuma},
\begin{equation}
  \PP\left[\forall t\quad\sum_{k=1}^{t} (\theta^*-\theta_k)^\tp \gv^\theta_k< \sum_{k=1}^{t} (\theta^*-\theta_k)^\tp \nabla_\theta \risk(\w_k,\theta_k) + P^\theta_\delta\right]\ge 1-\delta.
\end{equation}
Substituting this into~\eqref{eq:thetahat_sum_expr} and using $\norm{\gv^\theta_t}\le D_\theta$,
\begin{equation}
  \norm{\hat{\theta}_{t+1}-\theta^*}^2< \norm{\theta_1-\theta^*}^2+\zeta^2 T_2 D_\theta+2\zeta \sum_{k=1}^{t} (\theta^*-\theta_k)^\tp\nabla_\theta \risk(\w_k,\theta_k) + P^\theta_\delta. \label{eq:final_bound_theta}
\end{equation}
Let, as elsewhere in the thesis, $\omega=(\w,\theta)$ be the combined vector of parameters of the network. We will also define the combined gradient estimator $\gv_t=(\gv^\w_t;\ \gv^\theta_t)\in\R^{d+d_0}$. With this notation, $\omega^*=(\w^*,\theta^*)$ and $\omega_k=(\w_k,\theta_k)$. The two sums in~\eqref{eq:final_bound_hatw} and~\eqref{eq:final_bound_theta} add to
\begin{equation}
  \sum_{k=1}^{t} (\theta^*-\theta_k)^\tp\nabla_\theta \risk(\w_k,\theta_k)+\sum_{k=1}^{t} (\wstar-\w_k)^\tp\nabla_\w \risk(\w_k,\theta_k)=\sum_{k=1}^{t} (\omega^*-\omega_k)^\tp\nabla \risk(\omega_k).
\end{equation}
By assumption~\eqref{eq:assumption_conv}, $\omega_k\in\conv$ for $k \le t$. Therefore,
\begin{equation}
  (\omega^*-\omega_k)^\tp\nabla \risk(\omega_k)\le \risk(\omega^*)-\risk(\omega_k) \le 0,
\end{equation}
where the last inequality holds since $\omega^*\in\argmin_\omega \risk(\omega)$. Thereforee, when adding the inequalities~\eqref{eq:final_bound_hatw} and~\eqref{eq:final_bound_theta}, we get
\begin{align}
  &\norm{\widehat{\w}_{t+1}-\wstar}_{\sigz}^2+\norm{\hat{\theta}_{t+1}-\theta^*}^2\le \norm{\w_1-\wstar}_{\sigz}^2+\norm{\theta_1-\theta^*}^2  \label{eq:final_bound_omega} \\
  &\quad+4B_\w^2\lambda+2S_\delta+ \zeta^2T_2 \left(\frac{D_\w}{\lambda}+D_\theta\right) +P^\w_\delta+P^\theta_\delta. \nonumber
\end{align}
Whenever condition~\eqref{eq:require_small_zeta} holds, the terms on the second line of this expression are bounded by $\epsilon_c^2-\epsilon_\w^2-\epsilon_\theta^2$. At the same time, 
\begin{equation}
  \norm{\w_1-\wstar}_{\sigz}^2+\norm{\theta_1-\theta^*}_2^2<\epsilon_\w^2+\epsilon_\theta^2.
\end{equation}
Together, this gives
\begin{equation}
  \norm{\widehat{\w}_{t+1}-\wstar}_{\sigz}^2+\norm{\hat{\theta}_{t+1}-\theta^*}^2\le \epsilon_\w^2+\epsilon_\theta^2+\epsilon_c^2-\epsilon_\w^2-\epsilon_\theta^2= \epsilon_c^2.
\end{equation}
We have, as promised, shown that $(\widehat{\w}_{t+1},\hat{\theta}_{t+1})\in\conv$. Since $\conv\subset\dom_\w\times\dom_\theta$, this means that projection will not be used, and that at step $t$ SGD stays within the convex basin. Therefore, whenever the used high-probability bounds hold, the entire SGD trajectory will stay in $\conv$. 

Since the function is convex in the domain where SGD operates, we can use the technique from the proof of Theorem~\ref{thm:highp_sgd} to show a bound on the suboptimality gap. 
% To make it applicable to preconditioned SGD, we use resultsfrom Section~\ref{sec:preconditioned_sgd}. Namely, we note that preconditioned SGD on $\risk(\w,\theta)$ can be viewed as regular SGD on a transformed function $\tilde{f}(\widetilde{\w},\theta)$ defined by~\eqref{eq:def_ftilde}. On $\tilde{f}$, SGD takes steps of the form $\widetilde{\w}_{t+1}=\Pi_{\dom_{\widetilde{\w}}}(\widetilde{\w}_t-\zeta_\w \widetilde{\gv}^\w_t)$, where $\widetilde{\w}_t$ and $\dom_{\widetilde{\w}}$ are defined by~\eqref{eq:changew} and $\widetilde{\gv}^\w_t$ --- by~\eqref{eq:changev}. The trajectory of $\theta$ does not change when working with $\tilde{f}$ instead of $f$. The norm of $\widetilde{\gv}^\w_t$ is almost surely bounded as $\norm{\widetilde{\gv}^\w_t}<D_\w/\sqrt{\lambda}$.
Let $c_t$ be defined as
\begin{equation}
  c_t=\risk(\omega_t)-\risk(\omega^*)-(\omega_t-\omega^*)^\tp \gv_t. 
\end{equation}
When $\risk$ is convex in the region containing $\omega^*$ and all $\omega_t$, it satisfies
\begin{equation}
  \E[c_t|\mathcal{F}_{t-1}]=\risk(\omega_t)-\risk(\omega^*)-(\omega_t-\omega^*)^\tp\nabla \risk(\omega_t) \le 0
\end{equation}
The martingale difference $\tilde{c}_t=c_t-\E[c_t|\mathcal{F}_{t-1}]$ is almost surely bounded as
\begin{equation}
|\tilde{c}_t|<4B_\w B_\phi+2B_\w D_\w + 2B_\theta D_\theta.
\end{equation}
We can apply Lemma~\ref{lem:azuma} to it and get, with $\PP\ge 1-\delta$
\begin{equation}
  \frac{1}{T_2}\sum_{t=1}^{T_2}\left(\risk(\omega_t)-\risk(\omega^*)\right) <\frac{1}{T_2}\sum_{t=1}^{T_2}\left((\omega_t-\omega^*)^\tp \gv_t +\E[c_t|\mathcal{F}_{t-1}]\right)+ Q_\delta,
\end{equation}
where 
\begin{equation}
  Q_\delta:=(4B_\w B_\phi+2B_\w D_\w + 2B_\theta D_\theta)\sqrt{\frac{2}{T_2}\log \frac{1}{\delta}}. 
\end{equation}
In total, we used $5$ high-probability bounds in the proof and $\norm{\w_1-\wstar}_{\sigz}^2<\epsilon_\w^2$ holds with $\PP\ge 1-4\delta$. Thus, with $\PP\ge 1-9\delta$, we have $\E[c_t|\mathcal{F}_{t-1}]\le 0$. Combining this with the above bound, we get
\begin{equation}
  \PP\left[\frac{1}{T_2}\sum_{t=1}^{T_2}\left(\risk(\omega_t)-\risk(\omega^*)\right) <\frac{1}{T_2}\sum_{t=1}^{T_2}(\omega_t-\omega^*)^\tp \gv_t +Q_\delta\right] \ge 1-10\delta. \label{eq:highp_sumf}
\end{equation}
Now we only need to decompose the sum on the right. First, since we now know that with high probability, projection was not involved in the algorithm, so
\begin{equation}
  \norm{\w_{t+1}-\wstar}_{\hsigl}^2=\norm{\w_t-\wstar}_{\hsigl}^2-2\zeta(\w_t-\wstar)^\tp\gv^\w_t+\zeta^2\norm{\gv^\w_t}_{\hsigl^{-1}}^2.
\end{equation}
Applying this recursively,
\begin{equation}
  \norm{\w_{T_2+1}-\wstar}_{\hsigl}^2=\norm{\w_1-\wstar}_{\hsigl}^2-2\zeta \sum_{t=1}^{T_2} (\w_t-\wstar)^\tp\gv^\w_t+\zeta^2 \sum_{t=1}^{T_2}\norm{\gv^\w_t}_{\hsigl^{-1}}^2. \label{eq:recursive_nosigmadot}
\end{equation}
We can express the first sum from this expression and substitute it into the second sum in~\eqref{eq:highp_sumf}. With $\PP>1-10\delta$, we have that 
\begin{align}
  \risk(\overline{\omega})-\risk(\omega^*) &\le\frac{1}{T_2}\sum_{t=1}^{T_2} (\risk(\omega_t)-\risk(\omega^*)) \nonumber\\
  &< \sum_{t=1}^{T_2}(\omega_t-\omega^*)^\tp \gv_t+Q_\delta \\
  &=\frac{1}{T_2}\sum_{t=1}^{T_2}(\w_t-\wstar)^\tp\gv^\w_t+\frac{1}{T_2}\sum_{t=1}^{T_2} (\theta_t-\theta^*)^\tp\gv^\theta_t+Q_\delta.
\end{align}
We express the first sum here from~\eqref{eq:recursive_nosigmadot} and the sum --- from~\eqref{eq:thetahat_sum_expr}:
\begin{align}
                                   \begin{split}
                                   &= \frac{1}{2\zeta T_2}\left(\norm{\w_{1}-\wstar}_{\hsigl}^2- \norm{\w_{T_2+1}-\wstar}_{\hsigl}^2\right)+\frac{\zeta}{2T_2}\sum_{t=1}^{T_2}\norm{\gv^\w_t}_{\hsigl^{-1}}^2 \\
                                   &\qquad+\frac{1}{2\zeta T_2}\left(\norm{\theta_1-\theta^*}^2-\norm{\theta_{T_2+1}-\theta^*}^2\right)+\frac{\zeta}{2T_2}\sum_{t=1}^{T_2}\norm{\gv^\theta_t}^2+Q_\delta
                                   \end{split} \\
                                   &\le \frac{1}{2\zeta T_2}\left(\norm{\w_{1}-\wstar}_{\hsigl}^2+\epsilon_\theta^2\right)+\frac{\zeta P^\gv_\delta}{2T_2} +\frac{\zeta D_\theta^2}{2}+Q_\delta.
\end{align} 
Finally, we substitute the bound~\eqref{eq:highp_bound_hsigl} from Section~\ref{sec:bandit_lls} for $\norm{\w_{1}-\wstar}_{\hsigl}$ (the reader is reminded here that we use notation $\w_0=\w_1$ and $\theta_0=\theta_1$ for convenience) and get the last statement of the theorem.
\begin{flushright}
\ensuremath{\square}
\end{flushright}
% Similarly to what we did above, we will provide a bound on this expression using the Hoeffding-Azuma inequality. We introduce an $\mathcal{F}$-adapted martingale difference sequence $b_k$:
% \begin{align}
  % b_k&:=2\zeta(\theta^*-\theta_k)^\tp\gv^\theta_k - \E\left[2\zeta(\theta^*-\theta_k)^\tp\gv^\theta_k \mid \mathcal{F}_{k-1}  \right] \\
  % &= 2\zeta(\theta^*-\theta_k)^\tp\gv^\theta_k - 2\zeta(\theta^*-\theta_k)^\tp\nabla_\theta \risk(\w_k,\theta_k).
% \end{align}
% It satisfies $|b_k|<8\zeta B_\theta D_\theta$ a.s. We can apply Lemma~\ref{lem:uniform_azuma} to it. Introducing an event $E_\theta$ defined as
% \begin{equation}
  % E_\theta:=\left\{ \forall t\quad \sum_{k=1}^{t}b_k<20   \right\} ,
% \end{equation}

\section{Hyperparameter choices in experiments}
\label{ap:hyperparam_experiments}
\paragraph{Network architecture} In both MNIST and wine quality experiments, the representation network $\phi_\theta$ is a fully-connected network with one hidden layer and GELU activations~\cite{hendrycks2016gaussian} after both the hidden and the output layer. Thus, the overall reward model  $x\mapsto \w^\tp\phi_\theta(x)$ is a network with two hidden layers, whose dimensions we will denote as $[d_h, d]$.

\paragraph{Initialization} Whenever the weights of a fully-connected layer had to be initalized, we used i.i.d. samples from $\mathcal{N}\left( 0, \frac{1}{d_{in}} \right)$ for the entries of the kernel matrix, where $d_{in}$ is the dimension of the input of this layer. This prevents the activations from exploding during the forward pass. If the layer has a bias (i.e. it is not the last layer), bias weights were zero-initialized.

\paragraph{MNIST experiment} Regularization factors in the loss~\eqref{eq:pretrain_loss} were selected as $c_1=10,\ c_2=1, c_3=0$. Such aggressive regularization was possible because it did not influence the accuracy of the network, which stayed at $99.2\%$ correct classifications. The hidden dimensions of the network were $d_h=d=128$. For the bandit runs with or without the fine-tuned network, the data for digits 5-9 was divided into two equal parts. One was used for hyperparameter search, the other --- to evaluate the bandit algorithm. Hyperparameters for all algorithms were searched on a grid $\zeta_\w,\zeta_\theta\in\left\{10^{-4}, 10^{-3},10^{-2},10^{-1}\right\} $ and $\lambda\in\left\{ 10^{-3}, 10^{-2},5*10^{-1},10^{-1} \right\} $. For each tuple $(\zeta_\w,\zeta_\theta,\lambda)$, the optimal $T_2$ was selected with the procedure described in Section~\ref{sec:mnist}. We used $T_1=128$.

\paragraph{Wine quality experiment} Regularization factors were $c_1=0.1,\ c_2=0.1,\ c_3=0.01$. For pre-training on white wine, we set $\zeta_\w=\zeta_\theta=:\zeta$ and searched through tuples of $(d_h,d,\zeta)$ on a grid $d_h,d\in\left\{ 2,5,10,20,50,100 \right\},\ \zeta\in\left\{ 10^{-1},10^{-2},10^{-3},10^{-4} \right\} $. For the main experiment on the red wines, we set $\zeta_\theta=0.01$ and searched through the grid  $\zeta_\w\in\left\{ 10^{-4},10^{-3},10^{-2} \right\},\ \lambda\in \left\{ 10^{-1}, 10^{0}, 10^1,10^2 \right\} $.

\section{External results}
In this section we provide the external results that this thesis relies on. Most of the original notation from the sources was adapted to our needs. 

\begin{lemma}[Sphere covering number, Lemma~20.1 in~\cite{lattimore2020bandit}] \label{lem:sphere_cover} $\ $ \\
  There exists a set $C_\epsilon \subset \R^d$ with $|C_\epsilon|<(3 /\epsilon)^d$ such that for all $\mathbf{x}\in S^{d-1}$ there exists a  $\mathbf{y}\in C_\epsilon$ with $\norm{\mathbf{x}-\mathbf{y}}\le \epsilon$.
\end{lemma}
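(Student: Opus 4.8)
The plan is to prove this by the standard volumetric packing argument, so one could alternatively just invoke Lemma~20.1 of~\cite{lattimore2020bandit} directly. First I would take $C_\epsilon$ to be a \emph{maximal} subset of $S^{d-1}$ that is $\epsilon$-separated, i.e.\ $\norm{\mathbf{y}-\mathbf{y}'}>\epsilon$ for all distinct $\mathbf{y},\mathbf{y}'\in C_\epsilon$; such a set exists since $S^{d-1}$ is compact (equivalently by Zorn's lemma). Maximality gives the covering property for free: if some $\mathbf{x}\in S^{d-1}$ satisfied $\norm{\mathbf{x}-\mathbf{y}}>\epsilon$ for every $\mathbf{y}\in C_\epsilon$, then $C_\epsilon\cup\{\mathbf{x}\}$ would still be $\epsilon$-separated, contradicting maximality. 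Hence for every $\mathbf{x}\in S^{d-1}$ there is a $\mathbf{y}\in C_\epsilon$ with $\norm{\mathbf{x}-\mathbf{y}}\le\epsilon$.

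Next I would bound $|C_\epsilon|$ by comparing volumes. The open Euclidean balls of radius $\epsilon/2$ centred at the points of $C_\epsilon$ are pairwise disjoint (two centres at distance $>\epsilon$ cannot have overlapping radius-$\epsilon/2$ balls, by the triangle inequality), and each such ball is contained in the ball of radius $1+\epsilon/2$ about the origin since every centre lies on $S^{d-1}$. Using that the Lebesgue volume of a $d$-dimensional Euclidean ball of radius $r$ is a fixed constant times $r^d$, disjointness and containment yield $|C_\epsilon|\,(\epsilon/2)^d\le(1+\epsilon/2)^d$, that is, $|C_\epsilon|\le(1+2/\epsilon)^d$. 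The lemma is only needed (and stated) for $\epsilon\le 1$ — in the application in Appendix~\ref{ap:proof_wellspec_new} one takes $\epsilon=1/2$ — and for $\epsilon\le 1$ we have $1+2/\epsilon\le 3/\epsilon$, so $|C_\epsilon|\le(3/\epsilon)^d$; the strict inequality $|C_\epsilon|<(3/\epsilon)^d$ follows because the union of the small balls leaves a neighbourhood of the origin (hence a positive-volume gap) uncovered inside the larger ball.

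The main ``obstacle'' here is purely bookkeeping: setting up the disjointness and containment claims so that the volume ratio is clean and the constant comes out as $3$ rather than something larger, and handling the strict versus non-strict inequality. There is no genuine difficulty, which is why this result is cited from the literature rather than developed at length.
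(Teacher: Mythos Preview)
Your proof is correct and is precisely the standard volumetric packing argument used to establish this bound (and is essentially the proof in~\cite{lattimore2020bandit}). Note, however, that the paper itself does not prove this lemma: it appears in the ``External results'' appendix and is simply cited from~\cite{lattimore2020bandit} without proof, so there is no paper proof to compare against beyond the reference you already acknowledge.
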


\begin{lemma}[Estimation of second momentum, adapted from~\cite{wainwright2019high}] \label{lem:matmomentum} $\ $ \\
  Let $\mathbf{x}_1, \ldots,\mathbf{x}_n$ be i.i.d. random vectors such that $\E\left[ \mathbf{x}_i\mathbf{x}_i^\tp \right]=\Sigma$ and $\norm{\mathbf{x}_i}<B$ a.s. Let the empirical estimator of $\Sigma$ be 
\begin{equation}
  \widehat{\Sigma}=\frac{1}{n}\sum_{i=1}^{n} \mathbf{x}_i\mathbf{x}_i^\tp.
\end{equation}
  Then, for all $\delta>0$,
  \begin{equation}
    \PP \left[ \Vert \widehat{\Sigma}-\Sigma\Vert_2<\frac{1}{n}\left(B^2\log \frac{2d}{\delta}+\sqrt{B^4\log^2 \frac{2d}{\delta}+2B^2\norm{\Sigma}_2n\log \frac{2d}{\delta}} \right) \right] \ge  1-\delta.
  \end{equation}
\end{lemma}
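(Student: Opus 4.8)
The plan is to obtain this as a direct application of the matrix Bernstein inequality for sums of independent, bounded, self-adjoint random matrices (the version quoted, e.g., in~\cite{wainwright2019high}). First I would write
\[
  \widehat{\Sigma}-\Sigma=\frac{1}{n}\sum_{i=1}^{n}Y_i,\qquad Y_i:=\mathbf{x}_i\mathbf{x}_i^\tp-\Sigma,
\]
so that the $Y_i$ are i.i.d., self-adjoint, and mean-zero. The whole proof then reduces to supplying the two inputs that matrix Bernstein needs: an almost-sure operator-norm bound $\norm{Y_i}_2\le L$ and a bound on the matrix variance $\norm{\sum_{i=1}^n\E[Y_i^2]}_2\le \nu$.

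For the almost-sure bound, note $\mathbf{x}_i\mathbf{x}_i^\tp\succeq 0$ and $\Sigma\succeq 0$, and for positive semidefinite $A,B$ one has $\norm{A-B}_2\le\max\{\norm{A}_2,\norm{B}_2\}$ (by Weyl's inequality, $\lambda_{\max}(A-B)\le\lambda_{\max}(A)$ and $\lambda_{\min}(A-B)\ge-\lambda_{\max}(B)$). Since $\norm{\mathbf{x}_i\mathbf{x}_i^\tp}_2=\norm{\mathbf{x}_i}^2< B^2$ and $\Sigma\preceq\E[\norm{\mathbf{x}_i}^2 I]\preceq B^2 I$, this gives $\norm{Y_i}_2< B^2$ a.s., so $L=B^2$. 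For the variance I would expand
\[
  \E[Y_i^2]=\E\!\left[\norm{\mathbf{x}_i}^2\,\mathbf{x}_i\mathbf{x}_i^\tp\right]-\Sigma^2\preceq B^2\,\E[\mathbf{x}_i\mathbf{x}_i^\tp]-\Sigma^2=B^2\Sigma-\Sigma^2\preceq B^2\Sigma,
\]
using $\norm{\mathbf{x}_i}^2< B^2$ and $\Sigma^2\succeq0$. Hence $\sum_{i=1}^n\E[Y_i^2]\preceq nB^2\Sigma$ and $\nu=nB^2\norm{\Sigma}_2$.

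Plugging these into matrix Bernstein yields, for every $t>0$,
\[
  \PP\!\left[\Big\|\sum_{i=1}^nY_i\Big\|_2\ge t\right]\le 2d\,\exp\!\left(\frac{-t^2/2}{nB^2\norm{\Sigma}_2+B^2t/3}\right).
\]
Writing $\ell=\log(2d/\delta)$ and weakening the denominator via $B^2t/3\le B^2t$, it suffices to pick $t$ with $t^2\ge 2\ell nB^2\norm{\Sigma}_2+2\ell B^2 t$; the larger root of the associated quadratic is $t^\star=B^2\ell+\sqrt{B^4\ell^2+2\ell nB^2\norm{\Sigma}_2}$, so for all $t\ge t^\star$ the tail is at most $\delta$. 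Dividing by $n$ and recalling $\widehat{\Sigma}-\Sigma=\tfrac1n\sum_iY_i$ gives exactly the stated inequality. There is no real obstacle here beyond invoking the correct scaling of matrix Bernstein and being careful with the positive-semidefinite comparisons; if one wanted a fully self-contained argument, the technical heart would be the matrix Laplace-transform method together with Lieb's concavity theorem, but that is precisely what the cited result packages.
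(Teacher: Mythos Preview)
Your proof is correct and follows essentially the same route as the paper: the paper simply cites Corollary~6.20 of~\cite{wainwright2019high}, which is exactly the matrix-Bernstein tail bound you derive (with the $t/3$ already weakened to $t$), and then inverts the resulting quadratic in the same way you do. Your version is more self-contained in that you explicitly verify the almost-sure and variance hypotheses of matrix Bernstein, but the underlying argument is identical.
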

\begin{proof}
  This lemma follows from~\cite[Corollary 6.20]{wainwright2019high}. This corollary states that
  \begin{equation}
    \PP\left[ \Vert \widehat{\Sigma}-\Sigma\Vert_2\ge \epsilon \right]  \le 2d\exp\left[ -\frac{n\epsilon^2}{2B^2(\norm{\Sigma}_2+\epsilon)} \right].
  \end{equation}
  In the book, it is formulated for zero-mean vectors $\mathbf{x}_i$, but this is never used in the proof, and only allows us to state that $\widehat{\Sigma}$ is a \emph{covariance} estimator of $\mathbf{x}_i$. We don't need that, and instead work with the second momentum of the noncentered $\mathbf{x}_i$. To get the bound in terms of $\delta$, we simply solve the quadratic equation w.r.t. $\epsilon$.
\end{proof}

\begin{lemma}[Hoeffding-Azuma inequality for supermartingales] \label{lem:azuma} $\ $ \\
  Let $X_1,\dotsc,X_T$ be a supermartingale difference sequence such that $|X_k|\le B$ a.s. for all $t$. Then, for all $\delta > 0$
\begin{equation}
  \PP\left[ \sum_{t=1}^T X_t < B\sqrt{2T \log \frac{1}{\delta}}  \right] \ge  1-\delta .
\end{equation}
\end{lemma}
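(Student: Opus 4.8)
The plan is a textbook exponential-moment (Chernoff) argument for the upper tail, so I will only sketch it. Let $\{\mathcal{F}_k\}$ be the filtration with respect to which $\{X_k\}$ is a supermartingale difference sequence, i.e.\ $X_k$ is $\mathcal{F}_k$-measurable and $\E[X_k\mid\mathcal{F}_{k-1}]\le 0$, and write $S_t=\sum_{k=1}^t X_k$. First I would establish a conditional moment generating function bound: for every $\lambda>0$ and every $k$,
\[
  \E\left[e^{\lambda X_k}\mid\mathcal{F}_{k-1}\right]\le e^{\lambda^2 B^2/2}\quad\text{a.s.}
\]
This follows from Hoeffding's lemma: writing $m_k=\E[X_k\mid\mathcal{F}_{k-1}]$, the variable $X_k-m_k$ has conditional mean zero and lies a.s.\ in an interval of length at most $2B$, so $\E[e^{\lambda(X_k-m_k)}\mid\mathcal{F}_{k-1}]\le e^{\lambda^2(2B)^2/8}=e^{\lambda^2 B^2/2}$; since $\lambda>0$ and $m_k\le 0$ we have $e^{\lambda m_k}\le 1$, which gives the claim after multiplying.

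Next I would iterate this bound using the tower property:
\[
  \E\left[e^{\lambda S_T}\right]=\E\left[e^{\lambda S_{T-1}}\,\E[e^{\lambda X_T}\mid\mathcal{F}_{T-1}]\right]\le e^{\lambda^2 B^2/2}\,\E\left[e^{\lambda S_{T-1}}\right],
\]
so by induction $\E[e^{\lambda S_T}]\le e^{\lambda^2 T B^2/2}$. A Markov inequality applied to $e^{\lambda S_T}$ then yields, for any threshold $u>0$,
\[
  \PP\left[S_T\ge u\right]\le e^{-\lambda u}\,\E\left[e^{\lambda S_T}\right]\le e^{-\lambda u+\lambda^2 T B^2/2}.
\]

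Finally I would optimize the free parameters. Choosing $\lambda=u/(T B^2)$ gives $\PP[S_T\ge u]\le e^{-u^2/(2 T B^2)}$, and then taking $u=B\sqrt{2T\log(1/\delta)}$ makes the right-hand side equal to $\delta$. Hence $\PP[S_T\ge B\sqrt{2T\log(1/\delta)}]\le\delta$, which is the complement of the claimed event, so $\PP[\sum_{t=1}^T X_t< B\sqrt{2T\log(1/\delta)}]\ge 1-\delta$. There is no genuine obstacle here; the only mild subtleties are handling the case $\E[X_k\mid\mathcal{F}_{k-1}]\le 0$ (rather than $=0$) inside Hoeffding's lemma, which is dispatched by $e^{\lambda m_k}\le 1$ for $\lambda>0$, and observing that the argument controls only the upper tail, which is exactly what the one-sided statement requires.
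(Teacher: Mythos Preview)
Your argument is correct: the conditional Hoeffding bound, the tower-property iteration, and the Chernoff optimization are all carried out properly, and you correctly handle the supermartingale (rather than martingale) case via $e^{\lambda m_k}\le 1$ for $\lambda>0$. The paper does not actually prove this lemma---it is listed in the appendix among ``External results'' and simply stated without proof---so there is no paper-side argument to compare against; your sketch is exactly the standard proof one would expect.
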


Another variant of the Hoeffding-Azuma inequality, presented in~\cite{kassraie2023anytime}, is useful for obtaining so-called \emph{time-uniform} high-probability guarantees, which hold for all partial sums of $X_t$:
\begin{lemma}[Time-uniform Hoeffding-Azuma, Lemma 26 in~\cite{kassraie2023anytime}] \label{lem:uniform_azuma} $\ $ \\
  Let $X_1,\ldots,X_T$ be a martingale difference sequence such that $|X_t|\le B$ for all $t$ a.s. Then for all $\delta>0$,
  \begin{equation}
    \PP\left[ \forall t\quad \sum_{k=1}^{t} X_k < \frac{5B}{2}\sqrt{t\left( (\log \log etB^2)_++\log (2/\delta) \right) }  \right] \ge  1-\delta,
  \end{equation}
  where $(x)_+=\max\{x,0\}$.
\end{lemma}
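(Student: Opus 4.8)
The plan is to deduce this time-uniform bound from an exponential supermartingale together with Ville's maximal inequality, following the curved uniform boundary recipe of~\cite{howard2020time} as streamlined in~\cite{kassraie2023anytime}. First I would record the basic construction: writing $S_t=\sum_{k=1}^{t}X_k$ and letting $\mathcal{F}_t$ be the natural filtration, Hoeffding's lemma applied conditionally to the bounded increment $X_t$ (using $\E[X_t\mid\mathcal{F}_{t-1}]=0$) gives $\E[e^{\lambda X_t}\mid\mathcal{F}_{t-1}]\le e^{\lambda^2 B^2/2}$ for every $\lambda\ge 0$. Hence $M_t^{(\lambda)}:=\exp(\lambda S_t-\lambda^2 B^2 t/2)$ is a nonnegative supermartingale with $M_0^{(\lambda)}=1$, and Ville's inequality yields, for each fixed $\lambda$, the uniform-in-$t$ bound $\PP[\exists t:\ S_t\ge \lambda B^2 t/2+\log(1/\delta)/\lambda]\le\delta$.

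A single $\lambda$ is insufficient, because the resulting boundary $t\mapsto \lambda B^2 t/2+\log(1/\delta)/\lambda$ grows linearly in $t$, whereas the target boundary grows like $\sqrt{t(\log\log t+\log(1/\delta))}$. The standard fix is a stitching (peeling) argument: partition the times into geometric epochs $I_j=[\eta^{j},\eta^{j+1})$ for some $\eta>1$, and on epoch $I_j$ apply the fixed-$\lambda$ bound with $\lambda_j$ chosen to roughly minimize the boundary at the right endpoint of $I_j$, i.e. $\lambda_j\asymp B^{-1}\sqrt{(\log\log(\eta^{j}B^2)+\log(1/\delta_j))/\eta^{j}}$, spending a failure budget $\delta_j$ on epoch $j$. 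Choosing $\delta_j\propto \delta/(j\log^2(j+1))$ (or any summable allocation) so that $\sum_j\delta_j\le\delta$, a union bound over epochs produces a bound valid for all $t$ simultaneously; since $\log(1/\delta_j)=\log(1/\delta)+O(\log\log t)$ on $I_j$, the per-epoch boundaries glue into a global curved boundary of the form $C\,B\sqrt{t((\log\log etB^2)_+ + \log(2/\delta))}$. Optimizing $\eta$ and tracking the constants through Hoeffding's lemma is what pins the prefactor to $5/2$ and forces the $(\cdot)_+$, which is needed precisely because $\log\log etB^2$ can be negative for small $t$ (the burn-in regime).

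An equivalent and slightly slicker route is the \emph{method of mixtures}: instead of peeling, mix $M_t^{(\lambda)}$ over $\lambda$ against a probability measure $F$ on $[0,\infty)$, obtaining $\bar M_t=\int M_t^{(\lambda)}\,dF(\lambda)$, which is again a nonnegative supermartingale (Tonelli). Ville's inequality gives $\PP[\exists t:\ \bar M_t\ge 1/\delta]\le\delta$. Lower-bounding $\bar M_t$ by a Laplace-type evaluation of the integral near its maximizer $\lambda^{\star}(t)\approx S_t/(B^2 t)$ converts the event $\bar M_t<1/\delta$ into exactly the claimed uniform inequality, provided $F$ is chosen with the appropriate tail (a discrete mixture on the $\lambda_j$ above, or a suitably truncated Gaussian-type density, reproduces the $\log\log$ term). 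This is the construction underlying~\cite[Lemma 26]{kassraie2023anytime}.

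I expect the main obstacle to be purely bookkeeping: obtaining the explicit constant $5/2$ and the exact functional form with $(\log\log etB^2)_+$ requires a careful choice of the epoch ratio $\eta$, of the confidence allocation $\{\delta_j\}$, and a clean treatment of small $t$, rather than any conceptual difficulty. The hypotheses are used only through conditional Hoeffding, so nothing beyond $|X_t|\le B$ a.s. and $\E[X_t\mid\mathcal{F}_{t-1}]=0$ is required. In practice, since this is exactly Lemma 26 of~\cite{kassraie2023anytime} (itself adapted from~\cite{howard2020time}), the thesis simply invokes that reference; the sketch above is the route one would follow to reprove it from scratch.
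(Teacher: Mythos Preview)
Your proposal is correct and, importantly, you already identify the actual situation: the thesis does not prove this lemma at all but simply imports it verbatim from~\cite[Lemma~26]{kassraie2023anytime} (which in turn adapts~\cite{howard2020time}). The sketch you give---conditional Hoeffding to build the exponential supermartingale $M_t^{(\lambda)}$, Ville's maximal inequality, then either geometric peeling over epochs with a summable confidence allocation or the equivalent method-of-mixtures---is precisely the machinery underlying those references, so there is nothing to compare against in the paper itself.
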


In the following theorem, compared to~\cite{hsu2012random}, we purposefully lost a bit of generality and adapted the notation to our needs.
\begin{theorem}[Random design Ridge regression, Theorem~16 in~\cite{hsu2012random}] \label{thm:ridge_base}
  $\ $ \\
  Let $X_1,\ldots,X_{T_1}$ be i.i.d. samples from a distribution $\mathcal{D}_X$ over $\R^d$. Let  $r_t=\wstartp\pstar(X_t)+\eta_t$, where $\eta_t$ is i.i.d. noise. Let Assumption~\ref{as:bound} from Section~\ref{sec:formulation} hold. Assume that for some $\rho_\lambda$,
  \begin{equation}
    \frac{\norm{X}_{\sigl^{-1}}}{\sqrt{d_{1,\lambda}} }\le \rho_\lambda\qquad\text{a.s. for $X\sim\mathcal{D}_X$}, \label{eq:ridge_req_rho}
  \end{equation}
  where $\sigl$ is defined by~\eqref{eq:succ_sigdef} and $\deffo$ --- by~\eqref{eq:deffot_def}. Let 
\begin{equation}
  \text{approx}_\lambda(X)=\wstartp\pstar(X)-\w^{\lambda\tp}\pzero(X),
\end{equation}
with $\w^\lambda$ given by~\eqref{eq:wlam_def}. Assume also that for some $b_\lambda$,
\begin{equation}
  \frac{\norm{\text{approx}_\lambda(X)\pzero(X)}_{\sigl^{-1}}}{\sqrt{d_{1,\lambda}}}\le b_\lambda. \label{eq:ridge_req_b}
\end{equation}
Finally, assume that $\delta\in(0, 1)$ is chosen so that
\begin{equation}
  \log \frac{1}{\delta} >2.6-\log\ceff\quad\text{and}\quad T_1\ge 6\rho_\lambda^2\deffo\log\frac{\ceff}{\delta}.
\end{equation}
Let $\epsilon_{rg}$, $\epsilon_{bs}$, and $\epsilon_{vr}$ be defined by~\eqref{eq:introerrors}. Then, with $\PP\ge 1-4\delta$, the following statements hold simultaneously:
\begin{enumerate}
  \item $\hsigl$ is invertible and
\begin{equation}
  \norm{\sigl^{1 /2}\hsigl^{-1}\sigl^{1 /2}}_2\le (1-\delta_s)^{-1},
\end{equation}
where
\begin{equation}
  \delta_s=\sqrt{\frac{4\rho_\lambda^2\deffo\log(\ceff /\delta)}{T_1}}+ \frac{2\rho_\lambda^2\deffo\log(\ceff /\delta)}{3T_1}. \label{eq:deltas}
\end{equation}
\item Frobenius norm error in $\hsigl$ is bounded:
   \begin{equation}
     \norm{\sigl^{-1 /2}(\sigz-\hsigz)\sigl^{-1 /2}}_F\le \delta_f\sqrt{\deffo}, 
  \end{equation}
  where
  \begin{equation}
    \delta_f=\sqrt{\frac{\rho_\lambda^2\deffo-\defft /\deffo}{T_1}}\left(1+\sqrt{8\log \frac{1}{\delta}}\right)+\frac{4\sqrt{\rho_\lambda^4\deffo+\defft /\deffo} }{3T_1}\log \frac{1}{\delta}.
  \end{equation}
\item $\epsilon_{bs}$ is bounded:
\begin{multline}
  \epsilon_{bs}\le \frac{2}{(1-\delta_s)^2}\Bigg(\frac{\rho_\lambda^2 \deffo\E_X[\text{approx}_\lambda(X)^2]+\epsilon_{rg}}{T_1}\Bigg( 1+\sqrt{8\log \frac{1}{\delta}}  \Bigg) ^2 \\
  +\frac{16(b_\lambda \sqrt{\deffo}+\sqrt{\epsilon_{rg}})^2}{T_1^2}\log^2 \frac{1}{\delta}\Bigg). \label{eq:epsilonbs}
\end{multline}
  \item $\epsilon_{vr}$ is bounded:
  \begin{multline}
    \epsilon_{vr}\le \frac{B_\eta^2\left( \defft+\delta_f\sqrt{\deffo\defft}  \right) }{T_1(1-\delta_s)^2}+\frac{2B_\eta^2\sqrt{\left( \defft+\delta_f\sqrt{\deffo\defft}  \right)\log(1 /\delta) } }{T_1(1-\delta_s)^{3 /2}} \\
    +\frac{2B_\eta^2}{T_1(1-\delta_s)}\log \frac{1}{\delta}. \label{eq:epsilonvar}
  \end{multline}
\end{enumerate}
\end{theorem}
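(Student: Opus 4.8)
The plan is to reproduce the argument of~\cite[Theorem 16]{hsu2012random} in the present notation, the only genuinely new ingredient being the verification that misspecification enters solely through the quantities $\E_X[\mathrm{approx}_\lambda(X)^2]$ and $b_\lambda$. Everything rests on the three-term decomposition~\eqref{eq:introerrors}: the regularization error $\epsilon_{rg}=\Vert\widetilde{\w}-\w^\lambda\Vert_{\sigz}^2$ is deterministic and already controlled by Lemma~\ref{lem:ridgeregerr}, so it suffices to establish the four high-probability statements of the theorem and union-bound them (hence the $1-4\delta$). I would first fix the whitened rank-one matrices $Z_t=\sigl^{-1 /2}\pzero(X_t)\pzero(X_t)^\tp\sigl^{-1 /2}$, note that $\Vert Z_t\Vert_2=\Vert\sigl^{-1 /2}\pzero(X_t)\Vert^2\le\rho_\lambda^2\deffo$ almost surely by~\eqref{eq:ridge_req_rho}, and that $\E[\tfrac1{T_1}\sum_t Z_t]=\sigl^{-1 /2}\sigz\sigl^{-1 /2}=I-\lambda\sigl^{-1}$, whose trace is exactly $\deffo$. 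All four items then reduce to concentration of $\tfrac1{T_1}\sum_t Z_t$ (or closely related sums) around this mean.

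Item 1 (relative covariance error) follows from an \emph{intrinsic-dimension} matrix Bernstein inequality applied to $\tfrac1{T_1}\sum_t(Z_t-\E Z_t)$: the effective rank is of order $\deffo$, the almost-sure range is $\rho_\lambda^2\deffo$, and this produces exactly the quantity $\delta_s$ of~\eqref{eq:deltas} together with the hypotheses $\log(1 /\delta)>2.6-\log\ceff$ and $T_1\ge 6\rho_\lambda^2\deffo\log(\ceff /\delta)$, which are precisely what forces $\delta_s<1$, hence $\hsigl$ invertible and $\Vert\sigl^{1 /2}\hsigl^{-1}\sigl^{1 /2}\Vert_2\le(1-\delta_s)^{-1}$. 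Item 2 (Frobenius error) comes from a second, Frobenius-norm Bernstein estimate for the same centred sum; here the relevant second moment $\E\Vert Z_t-\E Z_t\Vert_F^2$ computes to scale like $\rho_\lambda^2\deffo-\defft /\deffo$, which is what injects the $\defft$-dependence into $\delta_f$.

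For item 3 (bias) I would write $\w^c=\hsigl^{-1}\tfrac1{T_1}\sum_t\pzero(X_t)\mu(X_t)$ with $\mu(X)=\wstartp\pstar(X)$, split $\mu(X)=\w^{\lambda\tp}\pzero(X)+\mathrm{approx}_\lambda(X)$, and use the identities $\E[\pzero(X)\mu(X)]=\sigz\w^\lambda+\lambda\w^\lambda$ and $\E[\pzero(X)\,\mathrm{approx}_\lambda(X)]=\lambda\w^\lambda$; after cancellation, $\w^c-\w^\lambda$ is $\hsigl^{-1}$ applied to the centred average $\tfrac1{T_1}\sum_t\pzero(X_t)\mathrm{approx}_\lambda(X_t)-\E[\pzero(X)\,\mathrm{approx}_\lambda(X)]$, plus a $(\hsigl^{-1}-\sigl^{-1})$ correction absorbed by item 1. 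A Bernstein bound for that centred average in the $\sigl^{-1}$-norm — conditional variance of order $\rho_\lambda^2\deffo\,\E[\mathrm{approx}_\lambda^2]/T_1$ and range $b_\lambda\sqrt{\deffo}$ via~\eqref{eq:ridge_req_b} — combined with $\Vert\sigl^{1 /2}\hsigl^{-1}\sigl^{1 /2}\Vert_2\le(1-\delta_s)^{-1}$ gives~\eqref{eq:epsilonbs}, the $\epsilon_{rg}$ terms appearing because $\w^\lambda\neq\widetilde{\w}$ in the decomposition. For item 4 (variance) I would condition on $X_1,\dots,X_{T_1}$: then $\w_0-\w^c=\hsigl^{-1}\tfrac1{T_1}\sum_t\pzero(X_t)\eta_t$ is linear in the bounded (hence sub-Gaussian) noise, so $\Vert\w_0-\w^c\Vert_{\sigz}^2$ is a quadratic form in $\eta$ with conditional mean $\tfrac{\sigma^2}{T_1^2}\tr(\sigz^{1 /2}\hsigl^{-1}\hsigz\hsigl^{-1}\sigz^{1 /2})$, $\sigma^2\le B_\eta^2$; bounding this trace by $\defft+\delta_f\sqrt{\deffo\defft}$ up to a $(1-\delta_s)^{-2}$ factor (using items 1–2, $\hsigz\le\hsigl$, and $\tr(\sigz^2\sigl^{-2})=\defft$) and then applying a sub-exponential tail bound to the quadratic form over $\eta$ yields~\eqref{eq:epsilonvar}.

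The main obstacle is not any single concentration step but the bookkeeping that keeps the effective dimensions $\deffo,\defft$ everywhere in place of the ambient $d$: this forces one to work throughout in the $\sigl^{-1}$-geometry, to use the intrinsic-dimension (not the crude) form of matrix Bernstein, and to evaluate carefully the Frobenius and trace second moments that produce $\defft$; one must also track how the $(1-\delta_s)^{-1}$ and $(1-\delta_s)^{-2}$ factors propagate from item 1 into items 3 and 4 and check that the sample-size hypothesis keeps them bounded. Once that machinery is in place, confirming that the misspecification residual $\mathrm{approx}_\lambda$ surfaces only through $\E[\mathrm{approx}_\lambda^2]$ and $b_\lambda$ is immediate, which is exactly why we "lose only a bit of generality" relative to~\cite{hsu2012random}.
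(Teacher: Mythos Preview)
The paper does not prove this theorem at all: it appears in the appendix under ``External results'' and is simply quoted (with notation adapted) from~\cite[Theorem~16]{hsu2012random}, so there is no in-paper argument to compare your proposal against. Your sketch is a faithful outline of the Hsu--Kakade--Zhang proof itself --- whitening via $\sigl^{-1/2}$, intrinsic-dimension matrix Bernstein for the spectral and Frobenius errors, a vector Bernstein for the bias term, and a sub-exponential quadratic-form bound for the variance term --- which is exactly the argument the thesis is citing rather than reproducing.
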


\backmatter

\bibliographystyle{plain}
\bibliography{thesis}

\includepdf[pages={-}]{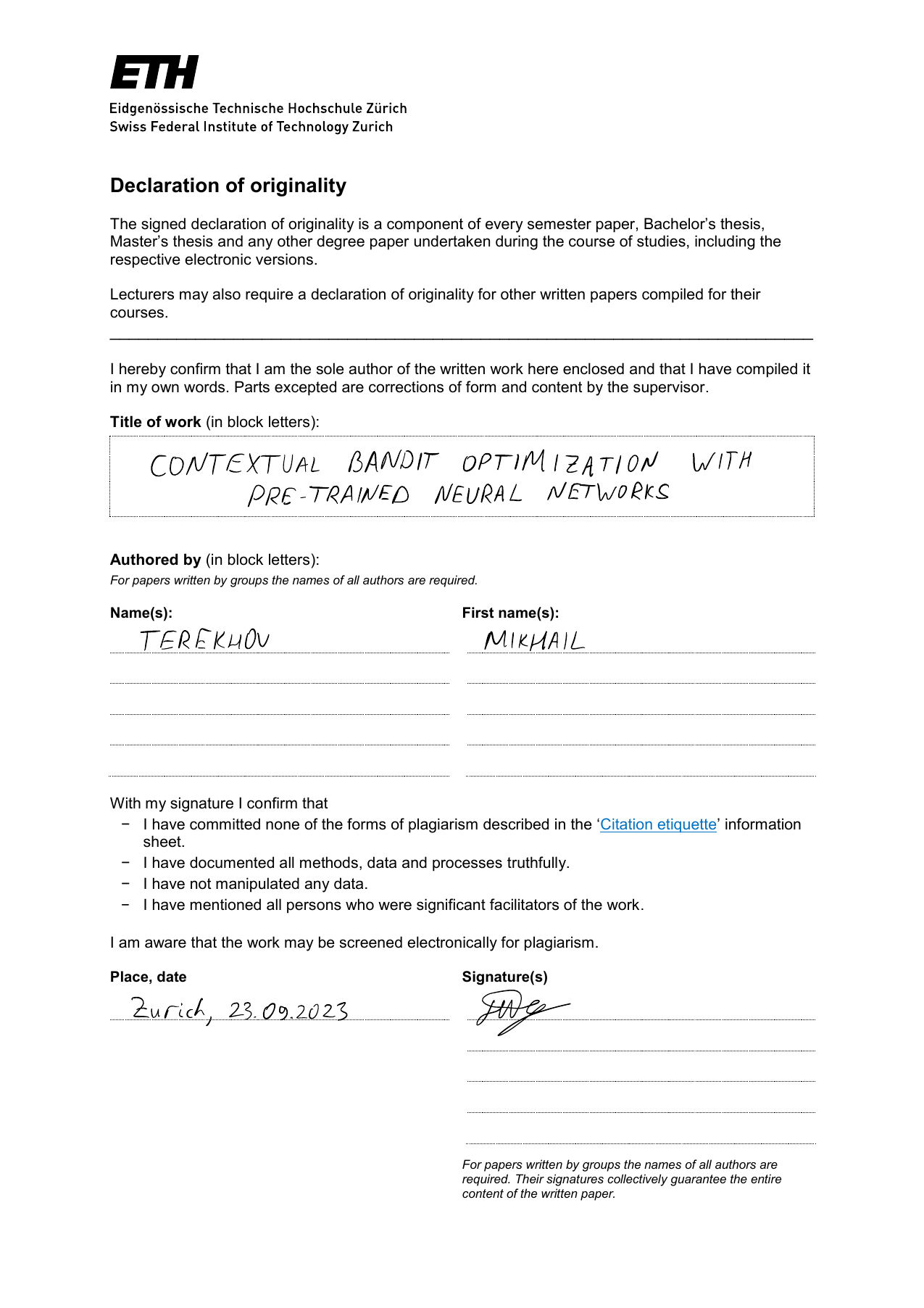}

\end{document}